\def\eqref#1{equation~(\ref{#1})}
\def\1{\bm{1}}
\DeclareMathAlphabet{\mathsfit}{\encodingdefault}{\sfdefault}{m}{sl}
\SetMathAlphabet{\mathsfit}{bold}{\encodingdefault}{\sfdefault}{bx}{n}
\def\gG{{\mathcal{G}}}
\def\gN{{\mathcal{N}}}
\def\gO{{\mathcal{O}}}
\def\gR{{\mathcal{R}}}
\def\gT{{\mathcal{T}}}
\def\gU{{\mathcal{U}}}
\def\gW{{\mathcal{W}}}
\def\gX{{\mathcal{X}}}
\def\gY{{\mathcal{Y}}}
\def\sN{{\mathbb{N}}}
\def\sR{{\mathbb{R}}}
\newcommand{\E}{\mathbb{E}}
\newcommand{\Dep}{\textnormal{Depth}}
\newcommand{\TMD}{\textnormal{TMD}}
\newcommand{\TD}{\textnormal{TD}}
\newcommand{\OT}{\textnormal{OT}}
\DeclareMathAlphabet{\mymathbb}{U}{BOONDOX-ds}{m}{n}
\newcommand*\circled[1]{\tikz[baseline=(char.base)]{
            \node[shape=circle,draw,inner sep=2pt] (char) {#1};}}
\definecolor{darkblue}{rgb}{0.0,0.0,0.66}  
\definecolor{darkred}{rgb}{100,0.0,0.0} 
\newtheorem{thm}{Theorem}
\newtheorem{definition}[thm]{Definition}
\newtheorem{theorem}[thm]{Theorem}
\newtheorem{lemma}[thm]{Lemma}
\newtheorem{proposition}[thm]{Proposition}
\title{Tree Mover's Distance: Bridging Graph Metrics and Stability of Graph Neural Networks}
\author{%
  Ching-Yao Chuang \\
  MIT CSAIL \\
  \texttt{cychuang@mit.edu} \\
  % examples of more authors
  \And
  Stefanie Jegelka\\
  MIT CSAIL\\
  \small{\texttt{stefje@mit.edu}} \\
  % Coauthor \\
  % Affiliation \\
  % Address \\
  % \texttt{email} \\
  % \AND
  % Coauthor \\
  % Affiliation \\
  % Address \\
  % \texttt{email} \\
  % \And
  % Coauthor \\
  % Affiliation \\
  % Address \\
  % \texttt{email} \\
  % \And
  % Coauthor \\
  % Affiliation \\
  % Address \\
  % \texttt{email} \\
}
\begin{document}

\maketitle

\begin{abstract}
Understanding generalization and robustness of machine learning models fundamentally relies on assuming an appropriate metric on the data space. Identifying such a metric is particularly challenging for non-Euclidean data such as graphs. Here, we propose a pseudometric for attributed graphs, the Tree Mover's Distance (TMD), and study its relation to generalization. Via a hierarchical optimal transport problem, TMD reflects the local distribution of node attributes as well as the distribution of local computation trees, which are known to be decisive for the learning behavior of graph neural networks (GNNs). First, we show that TMD captures properties relevant to graph classification: a simple TMD-SVM performs competitively with standard GNNs. Second, we relate TMD to generalization of GNNs under distribution shifts, and show that it correlates well with performance drop under such shifts. The code is available
at \small{\url{https://github.com/chingyaoc/TMD}}.
\end{abstract}

%\vspace{-1mm}
\section{Introduction}
%\vspace{-1mm}

Understanding generalization under distribution shifts -- theoretically and empirically -- relies on an appropriate measure of divergence between data distributions. This, in turn, typically demands a metric on the data space that indicates what kinds of data points are ``close'' to the training data. While such metrics may be readily available for Euclidean spaces, they are more challenging to determine for non-Euclidean data spaces such as graphs with attributes in $\mathbb{R}^p$, which underlie graph learning methods. In this work, we study the question of a suitable metric for message passing graph neural networks (GNNs).

%\vspace{-1mm}
An ``ideal'' metric for studying input perturbations captures the invariances and inductive biases of the model we are examining. For instance, since graph isomorphism is a difficult problem \cite{fortin1996graph}, this metric is expected to be a pseudometric, i.e., will fail to distinguish certain graphs. These ``failures'' should be aligned with the GNNs' invariances. Moreover, several recent works highlight the importance of local structures -- computation trees resulting from unrolling the message passing process -- for the approximation power of GNNs and their inability to distinguish certain pairs of graphs \citep{arvind2020weisfeiler,garg2020generalization,morris19,xu2018powerful}. 
Works on out-of-distribution generalization of GNNs mostly focus on specific types of instances where a trained model may fail miserably \cite{yehudai2021local,xu21extra}, without specifying the behavior for gradual shifts in distributions. \citet{yehudai2021local} show that a sufficiently large, unrestricted GNN may predict arbitrarily on previously unseen computation trees. In practice, we may observe a more gradual change, depending on the magnitude of change of the tree, in terms of both structure and node attributes, and the capacity of the aggregation function.
%
%\vspace{-1mm}
In summary, we desire a metric that reflects the structure of computation trees and the distribution of node attributes \emph{within} trees. Many distances and kernels between graphs have been proposed \cite{borgwardt_book}, several based on local structures \cite{haussler1999convolution}, and some using structure and attributes \cite{shervashidze2011weisfeiler}. Closest to our ideal is the \emph{Wasserstein Weisfeiler-Leman} pseudometric \cite{togninalli2019wasserstein}, which computes an optimal transport distance between node embeddings of two graphs. The embeddings are computed via message passing, which aligns with the computation of GNNs, but loses structural information within trees.

%\vspace{-1mm}
Hence, we propose the \emph{Tree Mover's Distance (TMD)}, a pseudometric on attributed graphs that considers both the tree structure and local distribution of attributes. It achieves this via a hierarchical optimal transport problem that defines distances between trees. First, we observe that the TMD captures properties that capture relationships between graphs and common labels: a simple SVM based on TMD performs competitively with standard GNNs and graph kernels on graph classification benchmarks. Second, we relate TMD to the performance of GNNs under input perturbations. We determine a Lipschitz constant of GNNs with respect to TMD, which enables a bound on their target risk under domain shifts, i.e., distribution shifts between training and test data. This bound uses the metric in two ways: to measure the distribution shift, and to measure perturbation robustness via the Lipschitz constant. Empirically, we observe that the TMD correlates well with the performance of GNNs under distribution shifts, also when compared to other distances. We hence hope that this work inspires further empirical and theoretical work on tightening the understanding of the performance of GNNs under distribution shifts.

%\vspace{-1mm}
In short, this work makes the following contributions:
\vspace{-3mm}
\begin{itemize}[leftmargin=0.5cm]\setlength{\itemsep}{-1pt}
\item We propose a new graph metric via hierarchical optimal transport between computation trees of graphs, which, in an SVM, leads to a competitive graph learning method;
\item We bound the Lipschitz constant of message-passing GNNs with respect to TMD, which allows to quantify stability and generalization; %out-of-domain generalization of GNNs;
\item We develop a generalization bound for GNNs under distribution shifts that correlates well with empirical behavior.
\end{itemize}
%\vspace{-2mm}
\section{Related Works}
%\vspace{-2mm}

\paragraph{Graph Metrics and Kernels}
Measuring distances between graphs has been a long-standing goal in data analysis. However, proper \emph{metrics} that distinguish non-isomorphic graphs in polynomial time are not known. For instance, \citet{vayer2019optimal} propose a graph metric by fusing Wassestein distance \citep{villani2009optimal} and Gromov-Wasserstein distance \citep{memoli2011gromov}. Similar to classic graph metrics \citep{bunke1998graph, sanfeliu1983distance}, the proposed metric requires approximation. Closely related, graph kernels \citep{vishwanathan2010graph,borgwardt_book} have gained attention. Most graph kernels lie in the framework of $\gR$-convolutional \citep{haussler1999convolution}, and measure similarity by comparing substructures. Many $\gR$-convolutional kernels have limited expressive power and sometimes struggle to handle continuously attributed graphs \citep{togninalli2019wasserstein}. In comparison, TMD is as powerful as the WL graph isomorphism test \citep{weisfeiler1968reduction} while accommodating graphs with high dimensional node attributes. Importantly, TMD captures the stability and generalization of message-passing GNNs \citep{kipf2016semi,xu2018powerful}.

%\vspace{-3mm}
\paragraph{Stability and Generalization of Graph Neural Networks} A number of existing works study stability of GNNs to input perturbations. Spectral GNNs are known to be stable to certain perturbations, e.g., size, if the overall structure is preserved \citep{gama2020stability,kenlay2020stability,kenlay2021stability,gama2020stability}. For message passing GNNs,  \citet{yehudai2020size} study perturbations of graph size, and demonstrate the importance of local computation trees. \citet{xu21extra} study how the out-of-distribution behavior of aggregation functions may affect the GNN's prediction. These studies motivate to include both computation trees and inputs to aggregation functions in the TMD. Finally, a number of works study within-distribution generalization \cite{du2019graph,garg2020generalization,liao2020pac,scarselli2018vapnik}.

%\vspace{-2mm}
\section{Background on Optimal Transport}
%\vspace{-2mm}
\label{sec_ot_intro}

We begin with a brief introduction to Optimal Transport (OT) and earth mover's distance. The earth mover's distance, also known as Wasserstein distance, is a distance function defined via the transportation cost between two distributions. Let $X = \{x_i\}_{i=1}^m$ and $Y = \{y_i\}_{j=1}^m$ be two multisets of $m$ elements each. Let $C \in \sR^{m \times m}$ be the transportation cost for each pair: $C_{ij} = d(x_i, y_j)$ where $d$ is the distance between $x_i$ and $y_j$. The
earth mover's distance solves the following OT problem:
\begin{align}
    \OT_d^\ast(X, Y) := \min_{\gamma \in \Gamma(X, Y)} \langle C, \gamma \rangle / m, \quad \Gamma(X, Y) = \{ \gamma \in \sR_+^{m \times m} \;|\; \gamma \mathbb{1}_{m} = \gamma^\top \mathbb{1}_{m} = \mathbb{1}_{m} \},
\end{align}
where $\Gamma$ is the set of \emph{transportation plans} that satisfies the flow constrain $\gamma \mathbb{1}_{m} = \gamma^\top \mathbb{1}_{m} = \mathbb{1}_{m}$. In this work, we adopt the \emph{unnormalized} version of the earth's mover distance:
\begin{align}
    \label{eq_un_emd}
    \OT_d(X, Y) := \min_{\gamma \in \Gamma(X, Y)} \langle C, \gamma \rangle = m \cdot  \OT_d^\ast(X, Y).
\end{align}
Comparing to classic OT, unnormalized OT preserves the size information of multisets $X$ and $Y$.

\section{Tree Mover's Distance: Optimal Transport on Graphs}

Next, we introduce \emph{tree mover's distance}, a new distance for graphs. Let $G = (V, E)$ denote a graph with sets of nodes $V$ and edges $E$. The graph may have node features $x_v \in \sR^p$ for $v \in V$. If not, we simply set the node feature to a scalar $x_v = 1$ for all $v \in V$. 

Local structures of graphs are characterized by \emph{computation trees} \citep{jegelka2022theory}. In particular, computation trees are constructed by connecting adjacent nodes recursively.

\begin{definition}[Computation Trees]
Given a graph $G = (V, E)$, let $T_v^1 = v$, and let $T_v^L$ be the depth-$L$ computation tree of node $v$ constructed by connecting the neighbors of the leaf nodes of $T_v^{L-1}$ to the tree. The multiset of depth-$L$ computation trees defined by $G$ is denoted by $\gT_G^L := \{T_v^L\}_{v \in V}$.
\end{definition}
Figure \ref{fig_tmd} illustrates an example of constructing computation trees. Computation trees, also referred to as subtree patterns, have been central in graph analysis \citep{pearson1905problem, weisfeiler1968reduction} and graph kernels \citep{ramon2003expressivity, shervashidze2011weisfeiler}. Intuitively, the computation tree of a node encodes local structure by appending the neighbors to the tree in each level. If depth-$L$ computation trees are the same for two nodes, they share similar neighborhoods up to $L$ steps away. Therefore, an intuitive way to compare two graphs is by measuring the difference of their nodes' computation trees \citep{shervashidze2011weisfeiler, weisfeiler1968reduction}. In this work, we adopt optimal transport, a natural way to compute the distance between two sets of objects with, importantly, an underlying geometry. We will begin by defining the transportation cost between two computation trees. The cost then gives rise to the tree mover's distance, an extension of earth mover's distance to multisets of trees.

\subsection{Distance between Trees via Hierarchical OT}

Let $T = (V, E, r)$ denote a rooted tree. We further let $\gT_v$ be the multiset of computation trees of the node $v$ which consists of trees that root at the descendants of $v$.
Determining whether two trees are similar requires iteratively examining whether the subtrees in each level are similar. For instance, two trees $T_a$ and $T_b$ with roots $r_a$ and $r_b$ are the same if $x_{r_a} = x_{r_b}$ and $\gT_{r_a} = \gT_{r_b}$. This motivates us to define the distance between two trees by recursively computing the optimal transportation cost between their subtrees. Nevertheless, the number of subtrees could be different for $r_a$ and $r_b$, i.e., $|\gT_{r_a}| \neq |\gT_{r_b}|$ (see Figure \ref{fig_tree_ot} left). To compute the OT between sets with different sizes, unbalanced OT \citep{chizat2015unbalanced, sato2020fast} or partial OT \citep{chapel2020partial} are usually adopted. Inspired by \citep{chapel2020partial}, we augment the smaller set with \emph{blank trees}.

\begin{figure*}[tbp]
%\begin{figure*}[t]
\begin{center}   
\includegraphics[width=0.75\linewidth]{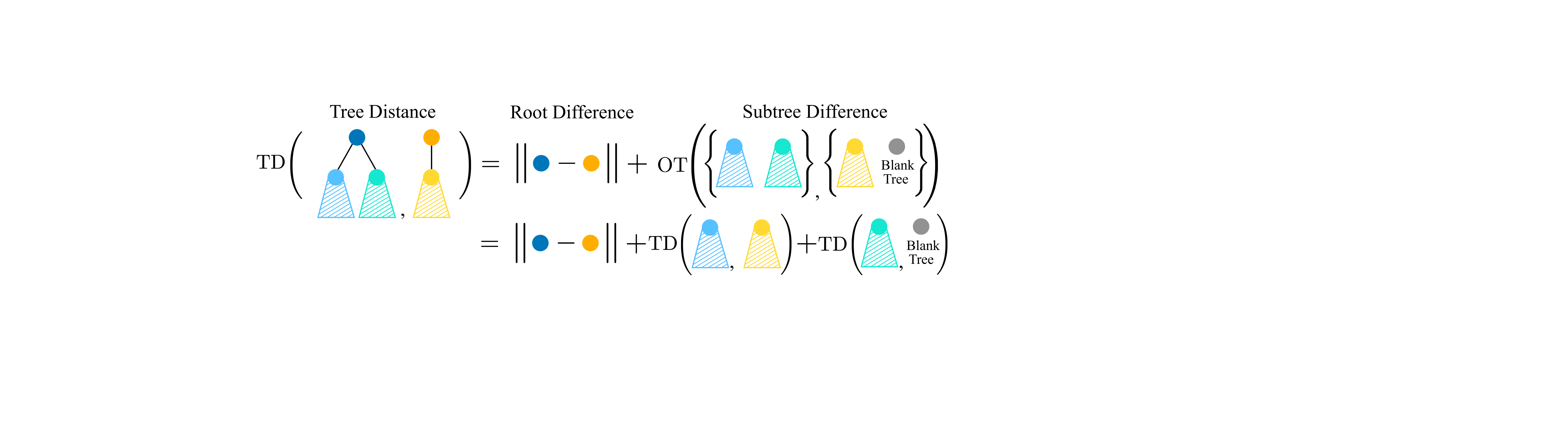}
\end{center}
\vspace{-1mm}
\caption{\textbf{Tree Distance via Hierarchical OT.} The weight $w(\cdot)$ is set to 1 to simplify visualization. The distance between two trees can be decomposed into (a) the distance between roots and (b) the OT cost between subtrees, where the cost function of OT is again the distance between two trees. This formulates a \emph{hierarchical} OT problem, where solving the OT between trees requires solving the OT between subtrees.} 
\vspace{-3mm}
\label{fig_tree_ot}
\end{figure*}

\begin{definition}[Blank Tree]
A \emph{blank tree} $T_\mymathbb{0}$ is a tree (graph) that contains a single node and no edge, where the node feature is the zero vector $\mymathbb{0}_p \in \sR^p$, and $T_\mymathbb{0}^n$ denotes a multiset of $n$ blank trees. 
\end{definition}

\begin{definition}[Blank Tree Augmentation]
Given two multisets of trees $\gT_u, \gT_v$, define $\rho$ to be a function that augments a pair of trees with blank trees as follows:
\begin{align*}
    \rho: (\gT_v, \gT_u) \mapsto \left(\gT_v \bigcup T_\mymathbb{0}^{\max(|\gT_u| - |\gT_v|, 0)}, \gT_u \bigcup T_\mymathbb{0}^{\max(|\gT_v| - |\gT_u|, 0)} \right).
\end{align*}
\end{definition}
If $|\gT_v| < |\gT_u|$, $\rho$ augments $\gT_v$ by $|\gT_u| - |\gT_v|$ blank trees to make the two multisets contain the same number of trees, and hence allows to define a transportation cost between two multisets of trees with different sizes. In particular, the transportation costs of additional trees are simply the distance to the blank trees. The distance between a tree and a blank tree can be interpreted as the \emph{norm of the tree}. In our case, the blank tree can be viewed as the origin, as it is a tree with the simplest structure and zero feature vector. Equipped with $\rho$, we define the distance between two rooted trees as follows.

\begin{definition}[Tree Distance]
\label{def_ot_tree}
The distance between two trees $T_a, T_b$ is defined recursively as
\begin{align*}
 \TD_{w}(T_a,T_b) &:= \begin{cases}
       \|x_{r_{a}} - x_{r_{b}} \| + w(L) \cdot \OT_{\TD_{w}}(\rho(\gT_{r_a}, \gT_{r_b})) & \text{if $L > 1$}\\
       \|x_{r_{a}} - x_{r_{b}} \| & \text{otherwise},
    \end{cases}     
\end{align*}
where $L = \max(\textnormal{Depth}(T_a), \textnormal{Depth}(T_b))$ and $w: \sN \rightarrow \sR^+$ is a depth-dependent weighting function.
\end{definition}

Here, $\OT_{\TD_{w}}$ is the OT distance defined in (\ref{eq_un_emd}) with $\TD_{w}$ as the metric. Figure \ref{fig_tree_ot} gives an illustration of computing tree distances. The tree distance $\TD_w(T_a, T_b)$ aims to optimally align two trees $T_a$ and $T_b$ by recursively comparing their roots and subtrees. Calculating $\TD_w(T_a,T_b)$ requires calculating the OT between augmented subtrees $\rho(\gT_{r_a}, \gT_{r_b})$, where the cost function of OT is $\TD_w$ again. This formulates a \emph{hierarchical} optimal transport problem: the distance of two trees is defined via the distances of subtrees, where the importance of each level is determined by the weight $w(\cdot)$. Increasing $w(\cdot)$ upweights the effect of nodes in the subtrees. While the weights may be arbitrary, we found that for many applications, using a single weight for all depths yields good empirical performance, as section \ref{sec_app} shows. The role of weights will be more significant when we use TMD to bound the stability of GNNs.

\subsection{From Tree Distance to Graph Distance}

Next, we extend the distance between trees to a distance between graphs. By leveraging the tree distance $\TD_w(\cdot,\cdot)$, %and blank tree augmentation $\rho$, 
we introduce the \emph{tree mover's distance (TMD)}, a distance for graphs, by calculating the optimal transportation cost between the graphs' computation trees.

\begin{definition}[\textbf{Tree Mover's Distance}]
Given two graphs $G_a, G_b$ and $w, L \geq 0$, the tree mover's distance between $G_a$ and $G_b$ is defined as
\begin{align*}
\TMD_{w}^L(G_a, G_b) = \OT_{\TD_w}(\rho(\gT_{G_a}^L, \gT_{G_b}^L)),
\end{align*}
where $\gT_{G_a}^L$ and $\gT_{G_b}^L$ are multisets of the depth-$L$ computation trees of graphs $G_a$ and $G_b$, respectively.
\end{definition}
Figure \ref{fig_tmd} illustrates the computation of TMD. Intuitively, TMD is the minimum cost required to transport node-wise computation trees from one graph to another. The blank tree augmentation $\rho$ is again adopted to handle graphs with different numbers of nodes. The next theorem shows that TMD is a pseudometric on attributed graphs. 

\begin{theorem}[Pseudometric]
The tree mover's distance $\TMD_{w}^L$ is a pseudometric for finite $L > 0$. 
\end{theorem}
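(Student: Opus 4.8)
The plan is to establish the three pseudometric axioms — nonnegativity, symmetry, and the triangle inequality — for $\TMD_w^L$, reducing everything to the corresponding properties of the tree distance $\TD_w$, which in turn are proved by induction on the depth $L$. Nonnegativity and symmetry are immediate: $\OT_d(X,Y)$ inherits nonnegativity and symmetry from the underlying cost $d$, and $\rho$ is a symmetric augmentation, so once $\TD_w$ is shown to be a nonnegative symmetric function of its two tree arguments, the same holds for $\TMD_w^L = \OT_{\TD_w}(\rho(\gT_{G_a}^L,\gT_{G_b}^L))$. The real work is the triangle inequality. I would first prove, by induction on $L=\max(\Dep(T_a),\Dep(T_b),\Dep(T_c))$, that $\TD_w(T_a,T_c)\le \TD_w(T_a,T_b)+\TD_w(T_b,T_c)$ for all rooted trees, and then lift this to graphs.

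For the base case $L=1$ all trees are single nodes and $\TD_w$ is just $\|x_{r_a}-x_{r_b}\|$, so the triangle inequality is that of the norm on $\sR^p$. For the inductive step, write $\TD_w(T_a,T_b)=\|x_{r_a}-x_{r_b}\|+w(L)\,\OT_{\TD_w}(\rho(\gT_{r_a},\gT_{r_b}))$ and similarly for the other pairs. The norm terms satisfy the triangle inequality directly. For the OT terms I need two facts. First, a padding‑invariance lemma: augmenting all three multisets to a common size with blank trees does not change any of the three pairwise $\OT_{\TD_w}$ values, because a blank tree is at distance $0$ from a blank tree and an optimal plan can always route surplus blank trees to blank trees at zero cost; this lets me assume $|\gT_{r_a}|=|\gT_{r_b}|=|\gT_{r_c}|=:n$. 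Second, a triangle inequality for $\OT_d$ itself on multisets of equal size $n$ whenever the ground cost $d$ satisfies the triangle inequality — here $d=\TD_w$ on depth‑$(L-1)$ trees, which holds by the induction hypothesis. The standard argument is the gluing/composition of transport plans: given optimal plans $\gamma_{ab}$ for $(X_a,X_b)$ and $\gamma_{bc}$ for $(X_b,X_c)$, the composition $\gamma_{ac}:=\gamma_{ab}\,\mathrm{diag}(1/\mu_b)\,\gamma_{bc}$ (with $\mu_b$ the marginal on the middle set, all ones here since each tree has unit mass) is a feasible plan for $(X_a,X_c)$ whose cost is bounded, via the triangle inequality of $\TD_w$ and rearrangement, by $\langle C_{ab},\gamma_{ab}\rangle+\langle C_{bc},\gamma_{bc}\rangle$. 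Multiplying by $w(L)\ge 0$ and adding the root‑norm inequality closes the induction. Finally, $\TMD_w^L(G_a,G_c)=\OT_{\TD_w}(\rho(\gT_{G_a}^L,\gT_{G_c}^L))$ is handled by exactly the same two ingredients — pad all three computation‑tree multisets to common cardinality $|V_a|+|V_b|+|V_c|$ (or $\max$) and apply the $\OT$ triangle inequality with ground cost $\TD_w$, which we have just shown is a pseudometric on depth‑$L$ trees.

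The main obstacle I anticipate is handling the unbalanced/blank‑tree bookkeeping cleanly inside the induction: the recursion on $\TD_w$ augments $\gT_{r_a}$ and $\gT_{r_b}$ relative to \emph{each other}, so the multisets that appear in the three pairwise distances $\TD_w(T_a,T_b)$, $\TD_w(T_b,T_c)$, $\TD_w(T_a,T_c)$ have a priori different sizes and different numbers of appended blank trees. Making the padding‑invariance lemma precise — that $\OT_{\TD_w}$ is unchanged when a multiset is further padded with blank trees, provided the distance from any tree to a blank tree is well defined and the distance between two blank trees is $0$ — and then checking that after padding to a common size the composed plan is genuinely feasible, is the delicate step; everything else is the classical Wasserstein triangle‑inequality argument applied level by level. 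A secondary point to verify is that $\TD_w$ is well defined (finite, independent of how ties in $L$ are broken) so that $\OT_{\TD_w}$ makes sense as a ground cost at each level, but this follows from finiteness of $L$ and of the trees.
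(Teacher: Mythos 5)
Your proposal is correct and follows essentially the same route as the paper: induction on tree depth with the Euclidean base case, a blank-tree padding-invariance lemma to equalize cardinalities, and the triangle inequality of unnormalized OT with a pseudometric ground cost (which the paper gets by invoking the Wasserstein pseudometric property and you get via the equivalent gluing-of-plans argument), finally lifted to graphs by applying the same two ingredients to the multisets of computation trees. One small caveat: your one-line justification of padding invariance (routing surplus blanks to blanks at zero cost) only yields the easy inequality, whereas the direction you actually need on the left-hand side — that extra padding cannot \emph{decrease} the OT cost — requires the triangle inequality of the ground cost against the blank tree, $d(x,y)\le d(x,T_\mymathbb{0})+d(T_\mymathbb{0},y)$, which your induction hypothesis supplies and which is precisely how the paper proves its augmentation-invariance lemma.
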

In particular, the tree mover's distance satisfies (1) $\TMD_{w}^L(G_a,G_a) = 0$, (2) $\TMD_{w}^L(G_a,G_b) = \TMD_{w}^L(G_b,G_a)$, and (3) $\TMD_{w}^L(G_a, G_b) \leq \TMD_{w}^L(G_a, G_c) + \TMD_{w}^L(G_c,G_b)$ for any graphs $G_a, G_b, G_c$. However, in some cases, the distance $\TMD_{w}^L(G_a, G_b)$ can be zero even if $G_a \neq G_b$. This is reasonable, as computing graph isomorphism is not known to be solvable in polynomial time. Nevertheless, TMD can provably distinguish graphs that are identifiable by the (1-dimensional) Weisfeiler-Leman graph isomorphism test \citep{weisfeiler1968reduction}.

\begin{theorem} [Discriminative Power of TMD]
\label{thm_tmd_power}
If two graphs $G_a$, $G_b$ are determined to be non-isomorphic in WL iteration $L$ and $w(l) > 0 $ for all $0<l \leq L+1$, then $\TMD_{w}^{L+1}(G_a,G_b) > 0$.
\end{theorem}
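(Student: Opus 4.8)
The plan is to prove the contrapositive: assuming $\TMD_{w}^{L+1}(G_a,G_b)=0$, show that $G_a$ and $G_b$ produce the same multiset of Weisfeiler--Leman colours after $L$ rounds of refinement, so that WL cannot certify them non-isomorphic at iteration $L$. The bridge between the two quantities is the standard identification of WL colours with computation trees (see e.g.\ \citep{jegelka2022theory,xu2018powerful}): after $L$ rounds of colour refinement, two nodes $u,v$ receive the same colour if and only if their depth-$(L+1)$ computation trees $T_u^{L+1}$ and $T_v^{L+1}$ are isomorphic as rooted, node-attributed trees. This is exactly why the statement pairs ``WL iteration $L$'' with $\TMD^{L+1}$. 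Hence it suffices to show that a zero-cost transport plan between the computation-tree multisets of $G_a$ and $G_b$ induces a tree-isomorphism-preserving bijection between their node sets.

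The first ingredient is a linear-programming observation: since every cost entry equals $\TD_w\ge 0$ and the feasible set in $\OT_{\TD_w}$ is the Birkhoff polytope of doubly stochastic matrices, a linear functional on it is minimized at a permutation matrix, so $\OT_{\TD_w}(X,Y)=0$ if and only if there is a permutation $\sigma$ with $\TD_w$ vanishing on every matched pair $(X_i,Y_{\sigma(i)})$. Applied at the top level of $\TMD_w^{L+1}$, this yields a bijection $\pi$ between the (blank-augmented) multisets $\gT_{G_a}^{L+1}$ and $\gT_{G_b}^{L+1}$ with $\TD_w(T,\pi(T))=0$ for every $T$.

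The second ingredient is a lemma proved by induction on depth: for computation trees and weights with $w(l)>0$ for $0<l\le K$, $\TD_w(T_a^{K},T_b^{K})=0$ implies $T_a^{K}\cong T_b^{K}$ as rooted attributed trees. The base case $K=1$ is simply $\|x_{r_a}-x_{r_b}\|=0$. For the step, $\TD_w(T_a^{K},T_b^{K})=0$ with $w(K)>0$ forces both $x_{r_a}=x_{r_b}$ and $\OT_{\TD_w}(\rho(\gT_{r_a},\gT_{r_b}))=0$; the Birkhoff observation then gives a zero-cost matching of the (augmented) child-subtree multisets, and the induction hypothesis promotes each matched pair to an isomorphism of depth-$(K-1)$ subtrees, which assembles into the desired isomorphism. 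Combining the lemma, the bijection $\pi$, and the WL--computation-tree correspondence shows that $G_a$ and $G_b$ have the same multiset of depth-$(L+1)$ computation trees, hence the same WL colour multiset after $L$ rounds, which is the contrapositive.

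The main obstacle is the blank-tree augmentation $\rho$, which is triggered whenever $|\gT_{r_a}|\neq|\gT_{r_b}|$ (adjacent nodes of different degree) or $|V(G_a)|\neq|V(G_b)|$. I would control it by noting that a blank tree carries the zero feature $\mymathbb{0}_p$, so under the paper's convention of nonzero node features (in particular the default $x_v=1$) one has $\TD_w(T,T_\mymathbb{0})\ge\|x_{r_T}\|>0$ for every genuine computation tree $T$, where $r_T$ is its root; therefore a zero-cost plan never matches a real tree with a blank tree, a degree or size mismatch is incompatible with $\TMD_w^{L+1}=0$, and all the matchings used above are genuinely between real (sub)trees. (When features may be zero, a short additional argument shows $\TD_w(T,T_\mymathbb{0})=0$ forces all attributes of $T$ to vanish, a degenerate configuration that the WL histogram already separates, so the statement still holds.) A secondary, purely bookkeeping point is to track depth indices so that the recursion only ever invokes $w(l)$ for $0<l\le L+1$, precisely the assumed range.
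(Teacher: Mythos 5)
Your main argument is correct and is essentially the paper's proof read in contrapositive form: your induction lemma ($\TD_w(T_a,T_b)=0$ implies the rooted attributed trees coincide, using $w(\cdot)>0$ and the fact that a genuine computation tree with nonzero root attribute has positive distance to $T_\mymathbb{0}$) is exactly the identity-of-indiscernibles half of Lemma~\ref{lemma_tree_metric}, your Birkhoff-vertex observation (a zero OT value yields a zero-cost bijection) is the corresponding ingredient behind Lemma~\ref{lemma_ot_metric}, and the WL--computation-tree correspondence is what the paper's own induction establishes before invoking those lemmas. So the route is the same; your write-up is merely more explicit about why $\OT_{\TD_w}=0$ produces a perfect matching and about how the blank-tree padding rules out degree and size mismatches.

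The one genuine flaw is the parenthetical claim that the statement ``still holds'' when zero attributes are allowed. It does not: take $G_a$ a triangle and $G_b$ a path on three vertices, all node attributes equal to $\mymathbb{0}_p$. WL determines them non-isomorphic at iteration $1$ (the refined colours record the degrees $2,2,2$ versus $1,2,1$), yet every all-zero-attributed tree is at $\TD_w$-distance zero from the blank tree and hence from every other all-zero-attributed tree, since every term in the recursion is either $\|\mymathbb{0}_p-\mymathbb{0}_p\|$ or the distance of an all-zero subtree to $T_\mymathbb{0}$; consequently $\TMD_w^{L+1}(G_a,G_b)=0$ for every $L$ and every $w$. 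Your intermediate observation that $\TD_w(T,T_\mymathbb{0})=0$ forces all attributes of $T$ to vanish is true, but it does not rescue the claim: once attributes vanish, $\TD_w$ becomes blind to tree structure, so ``WL already separates them'' is precisely the situation in which the implication fails. This is why the appendix statement of the theorem adds the hypothesis $\mymathbb{0}\notin V_a,V_b$; your proof should simply retain that assumption (as your main argument already does, via the convention $x_v=1$) rather than assert the zero-feature case.
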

The unnormalized OT and blank tree augmentation are essential to prove Theorem \ref{thm_tmd_power}. The tree mover's distance can be exactly computed by solving optimal transport. In addition, TMD remains highly expressive on graphs with high dimensional continuous attributes, where most $\gR$-convolutional graph kernels struggle \citep{togninalli2019wasserstein}. The discriminative power of TMD can be further strengthened by augmenting node attributes e.g.\ with positional encodings \citep{dwivedi2021graph, lim2022sign}.

The OT cost between node representations in different graphs is reminiscent of the recently proposed Wasserstein WL (WWL) \citep{togninalli2019wasserstein}. WWL uses the distance of node embeddings as a ground metric, where the node embeddings are computed via $L$ iterations of message passing with average-aggregation; in contrast, TMD computes a distance that aligns the trees, retaining more structural information. Even though an aggregation with nonlinearities can retain tree isomorphism information \citep{xu2018powerful}, similar to the hashing applied in the WL test (WWL uses only linear aggregations), the hierarchical OT is a more direct graded distance measure of trees. \citet{vayer2019optimal} define a metric that uses a Gromov-Wasserstein distance \citep{memoli2011gromov} between nodes, but need to approximate the GW computation.

\begin{figure*}[tbp]
%\begin{figure*}[t]
\begin{center}   
\includegraphics[width=\linewidth]{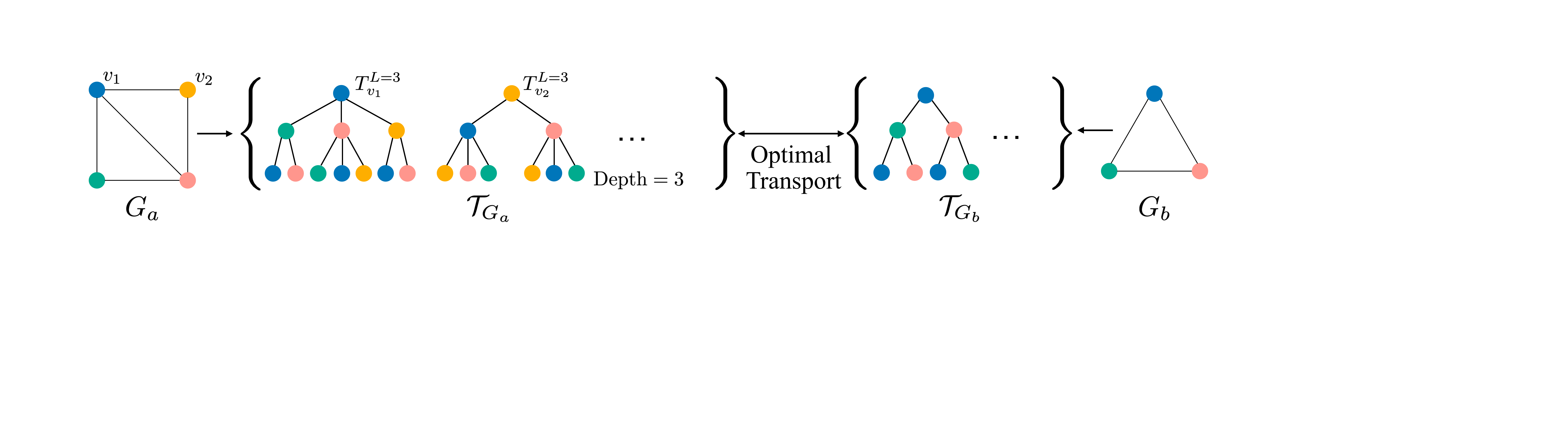}
\end{center}
%\vspace{-2mm}
\caption{\textbf{Illustration of Computation Trees and Tree Mover's Distance.} The computation trees of nodes are constructed by iteratively connecting the neighbors to the trees, and each graph will define a multiset of node-wise computation trees. Tree mover's distance is then defined as the optimal transport cost between the computation trees of two graphs.} 
%\vspace{-2mm}
\label{fig_tmd}
\end{figure*}

%\vspace{-2mm}
\paragraph{Numerical Computation with Dynamic Programming}
One can numerically compute TMD with dynamic programming. Starting with pair-wise distances ($\TMD_w^{L=1}$) between node features across the two graphs, we then iteratively compute $\TMD_w^{L=k}$ between depth-$k$ computation trees from $k=2$ to $L$ according to Definition \ref{def_ot_tree}. Let $D$ be the maximum degree of a node in the two graphs and $\tau(m)$ be the complexity of computing OT between sets of cardinality $m$. In each level, we have to perform OT of sets contain at most $D$ elements for $N$ nodes. Including the last OT between nodes of graph, the overall time complexity of computing $\TMD_w^L$ is $\gO(\tau(N) + L N \tau(D))$. The time complexity for exact OT by solving linear programming is $\tau(m) = \gO(m^3 \log(m))$ \citep{flamary2021pot}. One can use faster approximation of OT, e.g., near linear time complexity \citep{altschuler2017near}, but we use exact OT throughout all the experiments, implemented with the POT library \citep{flamary2021pot}.

%\vspace{-2mm}
\subsection{Experiments}
%\vspace{-2mm}
\label{sec_app}

\begin{table*}[!t]
\small
\begin{center}{%
\begin{tabularx}{0.97\textwidth}{l| *{7}{c}  }
\toprule
 & MUTAG & PTC & PROTEINS & NCI1 & NCI109  & BZR & COX2 \\
\midrule
TMD L=1 & 89.4$\pm$5.5 & 65.3$\pm$5.8 & 73.9$\pm$2.8 & 68.3$\pm$2.0 & 69.5$\pm$1.6  & 83.8$\pm$7.2 & 77.8$\pm$5.0
\\
TMD L=2 & 90.0$\pm$5.7 & 67.4$\pm$7.7 & 74.8$\pm$2.8 & 80.8$\pm$1.8 & 78.9$\pm$2.3 & 84.5$\pm$6.9  & \textbf{79.1$\pm$5.2}
\\
TMD L=3 & 91.1$\pm$5.4 & \textbf{68.5$\pm$6.1} & 74.6$\pm$2.6 & 83.3$\pm$1.1 & 82.3$\pm$2.5  & \textbf{85.5$\pm$6.2} & 78.5$\pm$5.9
\\
TMD L=4 & \textbf{92.2$\pm$6.0} & 66.5$\pm$7.1 & 75.2$\pm$2.3 & 84.8$\pm$1.2 & \textbf{82.8$\pm$2.1} & 84.5$\pm$6.4 & 76.1$\pm$6.1
\\
\midrule
\midrule 
WWL \citep{togninalli2019wasserstein} & 87.3$\pm$1.5 & 66.3$\pm$1.2 & 74.3$\pm$0.6 & 86.1$\pm$0.3 & - & 84.4$\pm$2.0 & 78.3$\pm$0.5
\\
FGW \citep{vayer2019optimal} & 88.4$\pm$5.6 & 65.3$\pm$7.9 & 74.5$\pm$2.7 & \textbf{86.4$\pm$1.6} & - & 85.1$\pm$4.2 & 77.2$\pm$4.9
\\
WL \citep{shervashidze2011weisfeiler} & 90.4$\pm$5.7 & 59.9$\pm$4.3 & 75.0$\pm$3.1 & 86.0$\pm$1.8 & 82.46$\pm$0.2 & N/A & N/A
\\
R\&G \citep{ramon2003expressivity} & 85.7$\pm$0.4 & 58.5$\pm$0.9 & 70.7$\pm$0.4 & 61.9$\pm$0.3 & 61.7$\pm$0.2 & N/A & N/A
\\

\midrule
\midrule
GIN \citep{xu2018powerful} & 89.4$\pm$5.6 & 64.6$\pm$7.0 & \textbf{76.2$\pm$2.8} & 82.7$\pm$1.7 & 82.2$\pm$1.6  & 83.5$\pm$6.0 & 79.0$\pm$5.3
\\
GCN \citep{kipf2016semi} & 85.6$\pm$5.8 & 64.2$\pm$4.3 & 76.0$\pm$3.2 & 80.2$\pm$2.0 & - & 84.6$\pm$5.9 & 77.1$\pm$4.7
\\
\bottomrule
 \end{tabularx}}
\end{center}
%\vspace{-2mm}
\caption{\textbf{Classification on TU Dataset.} TMD outperforms or matches the state-of-the-art graph kernels or GNNs. Note that WL \citep{shervashidze2011weisfeiler} and R\&G \citep{ramon2003expressivity} are not applicable to continuously attributed graphs such as BZR and COX2.
}
\label{table_classification}
%\vspace{-3mm}
\end{table*}

We verify whether the tree mover's distance aligns with the labels of graphs in graph classification tasks: the TUDatasets \citep{morris2020tudataset}, which contain graphs with discrete node attributes (MUTAG, PTC-MR, PROTEINS, NCI1, NCI109) and graphs with continuous node attributes (BZR, COX2). Specifically, we run a support vector classifier (C$=$1) with indefinite kernel $e^{-\gamma \times \TMD(\cdot, \cdot)}$, which can be viewed as a noisy observation of the true positive semidefinite kernel \citep{luss2007support}. The $\gamma$ is selected via cross-validation from $\{0.01, 0.05, 0.1\}$ and the weights $w(\cdot)$ are set to $0.5$ for all depths. For comparison, we use graph kernels based on graph subtrees: Ramon \& G\"{a}rtner kernel \citep{ramon2003expressivity}, WL subtree kernel \citep{shervashidze2011weisfeiler}; two widely-adopted GNNs: graph isomorphism network (GIN)  \citep{xu2018powerful}, graph convolutional networks (GCN) \citep{kipf2016semi}; and the recently proposed graph metrics FGW \citep{vayer2019optimal} and WWL \citep{togninalli2019wasserstein}. Table \ref{table_classification} reports the mean and standard deviation over 10 independent trials with 90\%/10\% train-test split. The performances of the baselines are taken
from the original papers. TMD outperforms or matches the performances of state-of-the-art GNNs, graph kernels, and metrics, implying that it captures meaningful structural properties of graphs. Appendix \ref{appendix_exp} shows further graph clustering and t-SNE visualization \citep{van2008visualizing} results.

\begin{wraptable}{r}{0.4\textwidth}
\small
\begin{center}{%
\begin{tabularx}{0.4\textwidth}{l| *{3}{c}  }
\toprule
 & WWL & TMD & TMD Parallel \\
\midrule
DD & 7.92 & 32.07 & 24.44
\\
NCI1 & 0.11 & 0.34 & 0.81
\\
\bottomrule
 \end{tabularx}}
\end{center}
\vspace{-2mm}
\caption{\textbf{Runtime Comparison.} The average runtime (sec/pair) of TMD is much faster than the (worst case) theoretical Big-O analysis.
}
\label{table_runtime}
%\vspace{-3mm}
\end{wraptable}

\paragraph{Computation Complexity} To examine the computation complexity, we compare the runtime between Weisfeiler-Lehman (WWL) kernels, TMD, and a parallel version of TMD on DD and NCI1 datasets \citep{morris2020tudataset}, where DD contains large graphs (avg. \#node: 284.32, avg. \#edge: 715.66) and NCI1 contains small graphs (avg. \#node: 29.87, avg. \#rdge: 32.30). Here, we additionally consider a parallel version of TMD, where the tree OTs in each level are executed simultaneously with 3 processes. The average runtime over 200 pairs is shown in Table \ref{table_runtime}. Note that the time complexity of WWL is $\tau(m) = \gO(m^3 \log(m))$. The runtime of parallelized TMD is roughly three times larger than WWL on DD, which is much faster than the (worst case) theoretical Big-O analysis. In datasets contain small graphs such as NCI1, TMD without parallelization works well.

\vspace{-2mm}
\section{TMD and Stability of Graph Neural Networks}
\label{sec_stability}
\vspace{-2mm}
Next, we relate TMD to the perturbation stability of message passing GNNs. In particular, we use the Lipschitz constant, which relies on an underlying metric -- here, a metric over graphs. We observe that TMD is a meaningful pseudometric in this case.
The resulting Lipschitz allows to analyze the stability of GNNs under perturbations and generalization bounds under distribution shifts.

\vspace{-3mm}
\subsection{Lipschitz Constant of Message Passing Graph Neural Networks}\label{sec:lipschitz}
\vspace{-3mm}

For simplicity, we consider graph binary classification with the Graph Isomorphim Network (GIN) \citep{xu2018powerful}, one of the most widely applied and powerful GNNs. In particular, we consider the following message passing rules of a $L$-layer GIN: 
\begin{align*}
    \begin{Large}  \substack{\textnormal{Message} \\ \textnormal{Passing} } \end{Large} \;\; z_v^{(l)} = \phi^{(l)} \left(z_v^{(l-1)} + \epsilon \sum_{u \in \gN(v)} z_u^{(l-1)} \right), \;\;\; \begin{Large}  \substack{\textnormal{Graph}\\ \textnormal{Readout}} \end{Large} \;\; h(G) = \phi^{(L+1)} \left( \sum_{u \in V} z_u^{(L)} \right)
\end{align*}
where $\phi^{(l)}: \sR^d \rightarrow \sR^d$ and $\phi^{(L+1)}: \sR^d \rightarrow \sR$ are learnable functions with Lipschitz constant $K_\phi^{(l)}$ and the initial state is set to the node feature $z_v^{(0)} = x_v$. The $\epsilon > 0$ is a weighting term between the center node and the neighbors. Note that the original formulation of GIN \citep{xu2018powerful} weights the center node with layer dependent $\epsilon$ instead of neighbors. We adopt the form above with the purpose to simplify the notation of TMD. One can easily derive an equivalent form by changing the weight function $w(\cdot)$ of TMD to recover the original formulation of GIN. For simplicity, we set $\epsilon = 1$ in all experiments. This does not affect the empirically performance of GIN as the original paper shows \citep{xu2018powerful}. A graph level binary classifier is constructed based on the logits $h(G)$. 

The next theorem bounds the Lipschitz constant of GIN with respect to TMD. Although TMD is a pseudometric, it satisfies that $\TMD_{w}^L(G_a, G_b) > 0$ if $G_a, G_b$ are distinguished by $L$ iterations of the WL test. Since $\mathrm{GIN(G_a)} = \mathrm{GIN(G_b)}$ for all graphs where WL fails, it holds that, if  $\mathrm{GIN(G_a)} \neq \mathrm{GIN(G_b)}$ then $\TMD_{w}^L(G_a, G_b) > 0$. That is, for all triples of graphs that GIN distinguishes pairwise, TMD is a metric.

\begin{theorem}[\textbf{Lipschitz Constant of GIN}]
\label{thm_wass_lip}
Given an $L$-layer graph neural network $h: \gX \rightarrow \sR$ and two graph $G_a, G_b \in \gG$, we have
\begin{align*}
    &\left \| h(G_a) - h(G_b) \right \|  \leq  \prod_{l=1}^{L+1} K_\phi^{(l)}  \cdot  \TMD_{w}^{L+1}(G_a, G_b),
\end{align*}
where $w(l)$=$\epsilon \cdot P_{L+1}^{l-1} / P_{L+1}^{l}$ for all $l \leq L$ and $P_{L}^l$ is the $l$-th number at level $L$ of Pascal's triangle.
\end{theorem}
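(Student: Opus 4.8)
The plan is to prove the bound by induction on the depth $L$, tracking how the GIN message passing interacts with the recursive structure of the tree distance. The natural intermediate object is the embedding $z_v^{(l)}$ of a node together with its depth-$(l{+}1)$ computation tree: I would show that for a node $v_a$ in $G_a$ with computation tree $T_a$ and a node $v_b$ in $G_b$ with tree $T_b$, we have $\|z_{v_a}^{(l)} - z_{v_b}^{(l)}\| \le (\prod_{j=1}^{l} K_\phi^{(j)}) \cdot c_l \cdot \TD_w(T_a, T_b)$ for an appropriate constant $c_l$ built from Pascal's triangle, with the convention that a blank tree maps to the zero embedding (which is consistent since $z_v^{(0)} = x_v$ and the blank tree has feature $\mymathbb{0}_p$, and $\phi^{(l)}$ applied to zero inputs composed through the recursion stays controlled — here one needs $\phi^{(l)}(0)=0$ or to absorb the constant, a point to check against the GIN setup). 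The base case $l=0$ (or $l=1$ at the tree level) is exactly $\|x_{v_a} - x_{v_b}\|$, matching the $L=1$ branch of Definition \ref{def_ot_tree}.

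For the inductive step, I would expand $z_{v_a}^{(l)} - z_{v_b}^{(l)} = \phi^{(l)}(z_{v_a}^{(l-1)} + \sum_{u\in\gN(v_a)} z_u^{(l-1)}) - \phi^{(l)}(z_{v_b}^{(l-1)} + \sum_{u\in\gN(v_b)} z_u^{(l-1)})$, use $K_\phi^{(l)}$-Lipschitzness of $\phi^{(l)}$, and then split the argument difference into the center-node term $\|z_{v_a}^{(l-1)} - z_{v_b}^{(l-1)}\|$ plus the neighborhood term. The key move is to bound the neighborhood term by the optimal transport cost: fixing an optimal transport plan $\gamma$ for $\OT_{\TD_w}(\rho(\gT_{r_a}, \gT_{r_b}))$ (which, because it's a permutation-type coupling between equal-size augmented multisets by Birkhoff, can be taken to be a bijection), pair up the neighbor subtrees accordingly, apply the triangle inequality $\|\sum_u z_u^{(l-1)} - \sum_{u'} z_{u'}^{(l-1)}\| \le \sum \|z_u^{(l-1)} - z_{u'}^{(l-1)}\|$ along the matching, and invoke the inductive hypothesis on each matched pair of depth-$l$ subtrees — including pairs involving blank trees, which contribute $\TD_w$ to a blank tree, i.e.\ the "norm" term handled by $\rho$. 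This yields $\|z_{v_a}^{(l)} - z_{v_b}^{(l)}\| \le K_\phi^{(l)} (\prod_{j=1}^{l-1} K_\phi^{(j)}) c_{l-1} \big( \|x_{r_a}-x_{r_b}\| \cdot(\text{from center}) + \sum_{\text{matched}} \TD_w(\cdot,\cdot)\big)$, and I would choose $c_l$ and the weight values $w(l) = \epsilon P_{L+1}^{l-1}/P_{L+1}^{l}$ precisely so that the center-node recursion (which compounds the center weight $1$ against the neighbor weight $\epsilon$ across levels, generating binomial coefficients) collapses neatly into a single factor of $\TD_w$. The Pascal's-triangle constants arise because unrolling the recursion $c_l = c_{l-1}(1 + \epsilon \cdot (\text{scaling}))$ with the telescoping choice of $w$ produces partial sums of binomial coefficients.

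Once the per-tree bound is established at depth $l = L$, the final step lifts it to the graph level. The readout gives $\|h(G_a) - h(G_b)\| = \|\phi^{(L+1)}(\sum_{v} z_v^{(L)}) - \phi^{(L+1)}(\sum_{v'} z_{v'}^{(L)})\| \le K_\phi^{(L+1)} \|\sum_v z_v^{(L)} - \sum_{v'} z_{v'}^{(L)}\|$, and then exactly the same optimal-transport-plan argument — now applied to $\TMD_w^{L+1}(G_a,G_b) = \OT_{\TD_w}(\rho(\gT_{G_a}^{L+1}, \gT_{G_b}^{L+1}))$ at the top level — bounds the sum difference by $(\prod_{j=1}^{L} K_\phi^{(j)}) \sum_{\text{matched}} \TD_w(T_a^{L+1}, T_b^{L+1}) = (\prod_{j=1}^{L} K_\phi^{(j)}) \cdot \TMD_w^{L+1}(G_a,G_b)$, after verifying the leading constant $c_L$ equals $1$ with the stated weights (since $P_{L+1}^0 = 1$ and the product telescopes). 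Multiplying through by $K_\phi^{(L+1)}$ gives the claimed inequality.

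The main obstacle I anticipate is bookkeeping the weight function and the Pascal's-triangle constants so that the recursion closes exactly rather than with slack: one must verify that defining $w(l) = \epsilon P_{L+1}^{l-1}/P_{L+1}^{l}$ makes the center-node contribution at each level combine with the already-accumulated subtree contributions to reproduce precisely $w(l)\cdot\OT_{\TD_w}(\cdots)$ from Definition \ref{def_ot_tree}, using the Pascal identity $P_{L+1}^{l} = P_{L}^{l-1} + P_{L}^{l}$ (or the appropriate shifted version). A secondary technical point is justifying that the optimal transport plan between the blank-augmented multisets can be taken to be an integral bijection (so that the "sum over matched pairs" step is legitimate) — this follows from the fact that the transport polytope $\Gamma$ with all-ones marginals has permutation matrices as vertices, and $\OT$ being a minimum of a linear functional is attained at a vertex.
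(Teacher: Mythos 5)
Your overall route is the same as the paper's: peel off $\phi^{(L+1)}$ via its Lipschitz constant, compare node embeddings across the two graphs through a blank-augmented transportation plan, unroll one message-passing layer at a time while splitting into a center term and a neighbor term, and recognize the accumulated center-node multiplicities as Pascal's-triangle numbers that the choice $w(l)=\epsilon P_{L+1}^{l-1}/P_{L+1}^{l}$ is designed to absorb. The difference is organizational: the paper unrolls top-down and tracks, for every tree level, the coefficient multiplying that level's matched feature-difference cost (this is where the multiplicities $L,\;1+2+\dots+(L-1),\dots$ appear), and only at the end identifies the resulting expression, evaluated at the optimal TMD plans, as $\prod_l K_\phi^{(l)}\cdot\TMD_w^{L+1}(G_a,G_b)$. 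You instead propose a bottom-up induction with the per-node claim $\|z_{v_a}^{(l)}-z_{v_b}^{(l)}\|\le\bigl(\prod_{j\le l}K_\phi^{(j)}\bigr)\,c_l\,\TD_w(T_{v_a}^{l+1},T_{v_b}^{l+1})$ for a single scalar $c_l$.

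That inductive hypothesis does not close as stated, and closing it is exactly the Pascal bookkeeping you defer to the end as an ``obstacle.'' After one unrolling, the center term is $\|z_{v_a}^{(l-1)}-z_{v_b}^{(l-1)}\|$, which your hypothesis controls by $\TD_w(T_{v_a}^{l},T_{v_b}^{l})$ --- a distance between depth-$l$ trees, whose top-level weight is $w(l)$ and whose subtree comparison sits at depth $l-1$ --- whereas the target $\TD_w(T_{v_a}^{l+1},T_{v_b}^{l+1})$ consists of the root term $\|x_{v_a}-x_{v_b}\|$ plus a single neighbor OT at depth $l$ with weight $w(l+1)$. To absorb the center term you would need to compare OT costs between depth-$(l-1)$ truncations against those between depth-$l$ trees with exact constants, and no single scalar $c_l$ can encode this, because the root feature and features at different depths accumulate different multiplicities as the recursion proceeds (that is precisely why the final weights are ratios of Pascal numbers rather than a constant). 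The repair --- and what the paper's proof effectively does --- is to strengthen the induction to carry one coefficient per tree level, i.e.\ bound the embedding difference by a weighted sum over levels of matched feature differences, and verify via the Pascal identity that the coefficient vector propagates to $(P_{L+1}^{l})_l$, so the whole expression collapses to the claimed bound; as written, your scalar recursion $c_l=c_{l-1}(1+\epsilon\cdot(\text{scaling}))$ would either fail to reproduce Definition~\ref{def_ot_tree} exactly or introduce slack. Two minor points: the Birkhoff/bijection step is unnecessary, since $\|\sum_{i,j}T_{ij}(z_i-z_j)\|\le\sum_{i,j}T_{ij}\|z_i-z_j\|$ holds for any feasible plan and one simply instantiates the optimal TMD plans at the end; and your concern about blank trees is well placed --- the augmented entries must contribute zero embeddings (and tree norms $\TD_w(\cdot,T_\mymathbb{0})$ must dominate $\|z\|$), which requires $\phi^{(l)}(0)=0$ or absorbing the offset, an assumption the paper's proof also uses implicitly.
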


\begin{wrapfigure}{r}{0.25\textwidth}
  \begin{center}
  \vspace{-7mm}
    \includegraphics[width=0.23\textwidth]{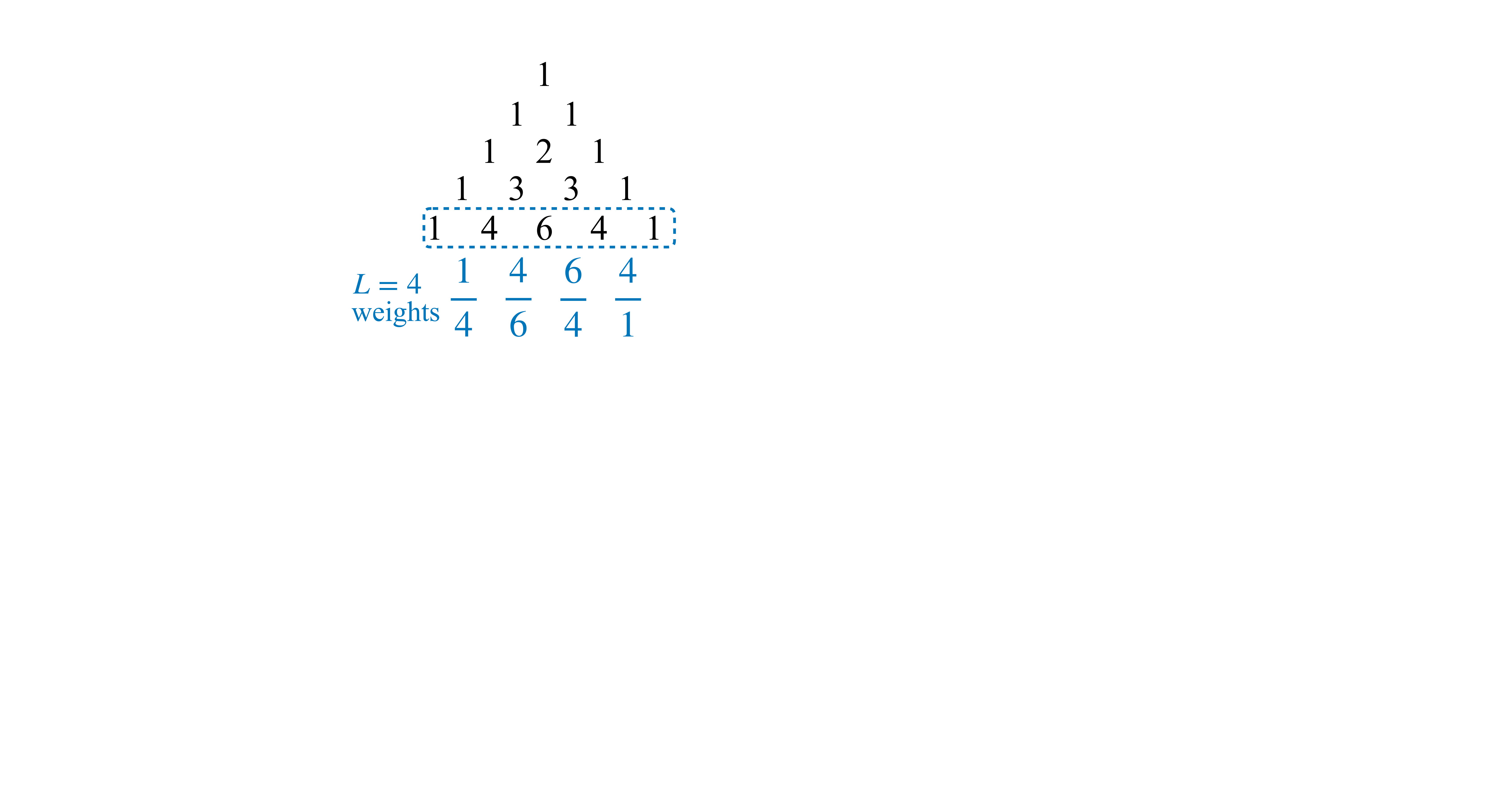}
  \end{center}
  \vspace{-2mm}
  \caption{\textbf{Example of Pascal's triangle} ($\epsilon$ = $1$).} \label{fig_pascal}
    \vspace{-6mm}
\end{wrapfigure}

The result can be extended to other GNNs (Appendix \ref{appendix_lip}). Interestingly, the weights in each level follow a simple rule determined by Pascal's triangle. Figure \ref{fig_pascal} illustrates the weights when $L = 4$, where the weights $w(l)$ gradually decrease from $4 \epsilon$ to $\epsilon/4$ as $l$ becomes smaller. Note that the hyperparameter $w$ of TMD is independent of the parameters of the GNN. Moreover, Theorem \ref{thm_wass_lip} shows that the Lipschitz constant of GNNs under TMD admits a simple form $\prod_{l=1}^{L+1} K_\phi^{(l)}$: the product of Lipschitz constants across layers, similar to fully connected networks \citep{miyato2018spectral, oberman2018lipschitz}.

%\vspace{-2mm}
\subsection{Stability of GNNs under Graph Perturbation}
\label{sec_stability_tmd}
%\vspace{-2mm}

The Lipschitz constant is a common criterion to assess the stability of the neural networks to small perturbations \citep{virmaux2018lipschitz}. Theorem \ref{thm_wass_lip} implies that the output variation of GNNs under graph perturbation can be bounded via the TMD between the original graph and the perturbed one. %Although TMD provides tractable upper bounds, it is not immediately interpretable. 
In this section, we analyze the stability of GNNs in more detail by dissecting TMD under three types of graph perturbation: (1) node drop; (2) edge drop; and (3) node feature perturbations. 
\begin{proposition}[Node Drop]
\label{prop_node_drop}
Given a graph $G = (V, E)$, let $G'$ be the graph where node $v \in V$ is dropped. Then the tree mover's distance between $G$ and $G'$ can be bounded by
%\vspace{-1mm}
\begin{align*}
    \TMD_w^L(G, G')\; \leq\; \sum_{l=1}^L \lambda_l \cdot \underbrace{\textnormal{Width}_l(T_v^L)}_{\textnormal{Tree Size}} \cdot \underbrace{\TD_w (T_v^{L-l+1}, T_\mymathbb{0})}_{\textnormal{Tree Norm
    }} ,
\end{align*}
%\vspace{-2mm}
where $\textnormal{Width}_l(T)$ is the width of $l$-th level of tree $T$ and $\lambda_1 = 1$, $\lambda_l = \prod_{j=1}^{l-1} w(L+1-j)$. 
\end{proposition}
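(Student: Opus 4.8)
The plan is to analyze how dropping a single node $v$ propagates through the computation-tree representation of the graph and then to bound the resulting $\TMD$ via a carefully chosen transport plan. First I would describe precisely how $\gT_G^L$ differs from $\gT_{G'}^L$: removing $v$ eliminates the tree $T_v^L$ from the multiset, and it also modifies the computation tree $T_u^L$ of every node $u$ whose depth-$L$ neighborhood contains $v$ — namely, wherever $v$ appeared as a node in $T_u^L$, that occurrence (together with the subtree hanging below it) disappears in $T_u^{L}$ computed in $G'$. The key combinatorial observation is that the number of occurrences of $v$ at distance $l-1$ from the root across all trees $\{T_u^L\}_{u\in V}$ equals the number of nodes at level $l$ of $T_v^L$, i.e.\ $\textnormal{Width}_l(T_v^L)$ — this is the standard duality between "paths of length $l-1$ from $u$ to $v$" and "nodes at level $l$ in the computation tree of $v$."

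Next I would construct an explicit (suboptimal) transportation plan between $\rho(\gT_G^L, \gT_{G'}^L)$ that upper bounds $\OT_{\TD_w}$. Since the two multisets differ in size by one (graph $G$ has one more node), $\rho$ augments $\gT_{G'}^L$ with one blank tree $T_\mymathbb{0}$. The plan I would use: transport $T_v^L$ (the tree that is deleted) to the added blank tree, contributing $\TD_w(T_v^L, T_\mymathbb{0})$; and for every other node $u$, transport its tree $T_u^L$ (as computed in $G$) to its counterpart $T_u^L$ (as computed in $G'$). The cost of the latter is $\TD_w(T_u^{L,G}, T_u^{L,G'})$, and I would bound this recursively: the roots agree, so only the subtree-OT terms contribute, and the subtrees differ exactly by removing the occurrences of $v$ (and their descendant subtrees). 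Each such removal, at level $l$ of $T_u^L$, corresponds at the recursion depth $L-l+1$ to transporting a subtree rooted at $v$ — of depth $L-l+1$ — to a blank tree, i.e.\ contributing $\TD_w(T_v^{L-l+1}, T_\mymathbb{0})$, scaled by the accumulated weight factors $w(L+1-1)\cdots w(L+1-(l-1))$ from unrolling the recursion down $l-1$ levels, which is exactly $\lambda_l$. Summing the contribution over all occurrences of $v$ at each level $l$, using the width count above, gives $\sum_{l=1}^L \lambda_l \cdot \textnormal{Width}_l(T_v^L)\cdot \TD_w(T_v^{L-l+1}, T_\mymathbb{0})$; the $l=1$ term (with $\lambda_1 = 1$, $\textnormal{Width}_1 = 1$) accounts for the direct deletion of $T_v^L$ itself.

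The main obstacle I anticipate is making the recursive bookkeeping rigorous: when comparing $T_u^{L,G}$ and $T_u^{L,G'}$ via the recursive definition of $\TD_w$, I must show that the optimal (or a good) transport plan at each level can be taken to be "identity on the shared subtrees, and blank-augmentation on the $v$-subtrees," and that the weight multipliers $w(\cdot)$ compose correctly as the recursion descends — in particular that a $v$-occurrence appearing at level $l$ of $T_u^L$ contributes a subtree of the right remaining depth $L-l+1$ and picks up precisely the factor $\lambda_l$. This requires an induction on $L$ (or on tree depth) with the inductive hypothesis phrased at the level of arbitrary pairs of trees that differ by deletion of a fixed vertex's occurrences; some care is needed because after removing $v$ the multiset of children at an internal node shrinks, so $\rho$ adds blank trees on the $G'$ side at that level, and one must verify the triangle-inequality / subadditivity steps line up. A secondary (milder) point is verifying that summing suboptimal per-tree costs indeed upper-bounds the global $\OT_{\TD_w}$ — this is immediate since any product/block-diagonal coupling is a feasible transport plan, so its cost dominates the minimum.
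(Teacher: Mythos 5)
Your proposal is correct and follows essentially the same route as the paper's own argument: count the occurrences of $v$ across all computation trees via $\textnormal{Width}_l(T_v^L)$ (the walk-reversal duality), and exhibit a feasible block-diagonal coupling that sends $T_v^L$ and every deleted $v$-rooted subtree $T_v^{L-l+1}$ to blank trees, each paying its tree norm scaled by the accumulated weights $\lambda_l$. In fact your write-up is more explicit than the paper's illustration-based sketch (including the observation that over-counting nested occurrences is why the result is only an inequality), so no changes are needed.
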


The bound is controlled by two factors: (1) tree size and (2) tree norm (distance from the blank tree) of the computation tree $T_v^L$. A node $v$ with a large computation tree size implies that many nodes can be reached from $v$. Deleting $v$ from the graph then significantly changes the computation trees of those reachable nodes, while the magnitude of the variation is controlled by the tree norm. 

\begin{proposition}[Edge Drop]
\label{prop_edge_drop}
Given a graph $G = (V, E)$, let $G'$ be the graph where edge $(u,v) \in E$ is dropped. The tree mover's distance between $G$ and $G'$ can be bounded by
%\vspace{-1mm}
\begin{align*}
    \TMD_w^L(G, G') \leq \sum_{l=1}^{L-1}\lambda_{l+1} \cdot \left( \textnormal{Width}_l(T_v^L) \cdot \TD_w (T_u^{L-l}, T_\mymathbb{0}) + \textnormal{Width}_l(T_u^L) \cdot \TD_w (T_v^{L-l}, T_\mymathbb{0}) \right).
\end{align*}
%\vspace{-4mm}
\end{proposition}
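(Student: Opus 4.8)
The plan is to mirror the strategy used for the node‑drop bound, tracking how deleting a single edge $(u,v)$ perturbs the multiset of computation trees $\gT_{G}^{L}$ and then invoking the hierarchical structure of $\TD_w$. First I would observe that removing the edge $(u,v)$ from $G$ changes the depth‑$L$ computation tree $T_w^{L}$ of a node $w$ only if some root‑to‑leaf path in $T_w^{L}$ traverses the edge $(u,v)$ (in either direction). For such a tree, the effect is to prune, at each level $l$ where the edge appears, a subtree rooted at an occurrence of $u$ (namely the copy of $T_u^{L-l}$ that hangs below $v$) or symmetrically a copy of $T_v^{L-l}$ hanging below $u$. Concretely, I would fix an optimal transport plan for $\TMD_w^L(G,G')$ obtained by matching each computation tree of $G$ to the ``same'' tree in $G'$ with the edge deleted, and matching the newly exposed blank trees appropriately; this feasible plan gives an upper bound on the true $\OT$ cost, which is exactly what the proposition claims.

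The key computation is then to bound $\TD_w(T_w^L, \widetilde{T}_w^L)$, where $\widetilde{T}_w^L$ is the pruned tree, by unrolling Definition~\ref{def_ot_tree}. Recursing level by level, the discrepancy localizes: at depth $l$ (counted from the root), if a copy of $u$ sits there with $v$ as its parent, then the subtree $T_u^{L-l}$ is replaced by a blank tree, contributing $\TD_w(T_u^{L-l}, T_\mymathbb{0})$ weighted by the accumulated product of weights $w(L+1-1), \dots, w(L+1-l)$ along the recursion path — this accumulated weight is precisely $\lambda_{l+1} = \prod_{j=1}^{l} w(L+1-j)$. Summing over all nodes $w$ whose trees are affected, the number of copies of ``$u$ below $v$'' appearing at level $l$ across all of $\gT_{G}^L$ is controlled by $\textnormal{Width}_l(T_v^L)$ (each node reachable from $v$ within $l-1$ steps contributes one such occurrence in its own computation tree, and the count of those is the level‑$l$ width of $T_v^L$), and symmetrically ``$v$ below $u$'' occurrences are counted by $\textnormal{Width}_l(T_u^L)$. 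Collecting the two symmetric contributions at each level $l = 1, \dots, L-1$ yields exactly
\[
\sum_{l=1}^{L-1}\lambda_{l+1}\Bigl(\textnormal{Width}_l(T_v^L)\cdot\TD_w(T_u^{L-l}, T_\mymathbb{0}) + \textnormal{Width}_l(T_u^L)\cdot\TD_w(T_v^{L-l}, T_\mymathbb{0})\Bigr).
\]
The range of $l$ stops at $L-1$ because an edge appearing only at the leaf level ($l = L$) prunes nothing of positive depth.

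The main obstacle I anticipate is bookkeeping the combinatorics of where the edge $(u,v)$ can appear inside each affected computation tree and arguing that the naive per‑tree matching is globally feasible as a transport plan — in particular, verifying that the blank‑tree augmentation $\rho$ supplies enough blank trees to absorb every pruned subtree without violating the marginal constraints, and that no double counting occurs when the same physical node is reached via several paths. Once the correspondence between ``edge occurrences at level $l$'' and the level‑$l$ widths of $T_v^L$ and $T_u^L$ is pinned down, the weight accumulation and the final summation are routine recursive estimates analogous to those in the proof of Proposition~\ref{prop_node_drop}, so I would present that bookkeeping carefully and then defer the weight algebra to a short induction on $L$.
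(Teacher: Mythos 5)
Your proposal is correct and follows essentially the same route as the paper's own (quite informal, illustration-based) argument: use the identity matching of computation trees as a feasible transport plan, localize the change to pruned copies of $T_u^{L-l}$ below occurrences of $v$ (and symmetrically $T_v^{L-l}$ below $u$), count those occurrences at level $l$ by $\textnormal{Width}_l(T_v^L)$ and $\textnormal{Width}_l(T_u^L)$, and accumulate the depth weights into $\lambda_{l+1}$. In fact your write-up is more explicit than the paper's proof about the feasibility of the plan and the possibility of over-counting, which the paper only acknowledges in passing as the reason the statement is an inequality rather than an equality.
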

The bound takes a similar form as with the node drop, but includes the effects of both node $v$ and $u$. In particular, the tree size of $v$ ($u$) will control how many computation trees of $u$ ($v$) will be dropped. By using Proposition \ref{prop_node_drop} and \ref{prop_edge_drop}, one can derive bounds for dropping multiple nodes and edges, or even \emph{edge rewiring}. Note that adding nodes or edges is equivalent to the analysis above. 
\begin{figure*}[t]
%\begin{figure*}[t]
\begin{center}   
\includegraphics[width=0.99\linewidth]{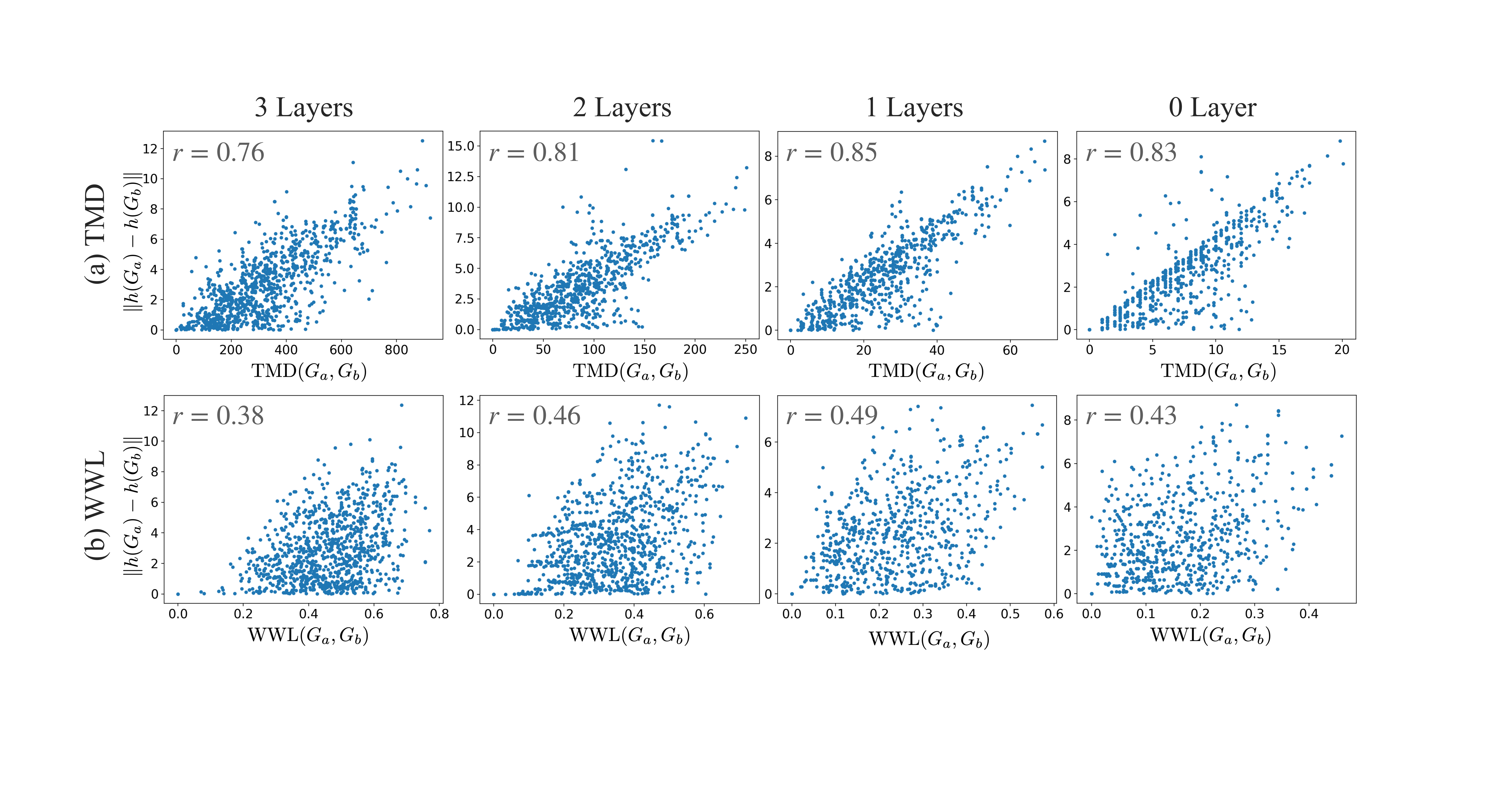}
\end{center}
\vspace{-2mm}
\caption{\textbf{Correlation between GNNs and TMD / WWL.} The Pearson correlation coefficient $r$ between $\| h(G_a) - h(G_b)\|$ and TMD / WWL are showed on the upper left of the figures.  The output variation is highly correlated with TMD, while WWL barely captures the behavior of GNNs with different number of message-passing layers.} 
\vspace{-3mm}
\label{fig_lip}
\end{figure*}

\begin{figure*}[thbp]
%\begin{figure*}[t]
\begin{center}   
\includegraphics[width=0.95\linewidth]{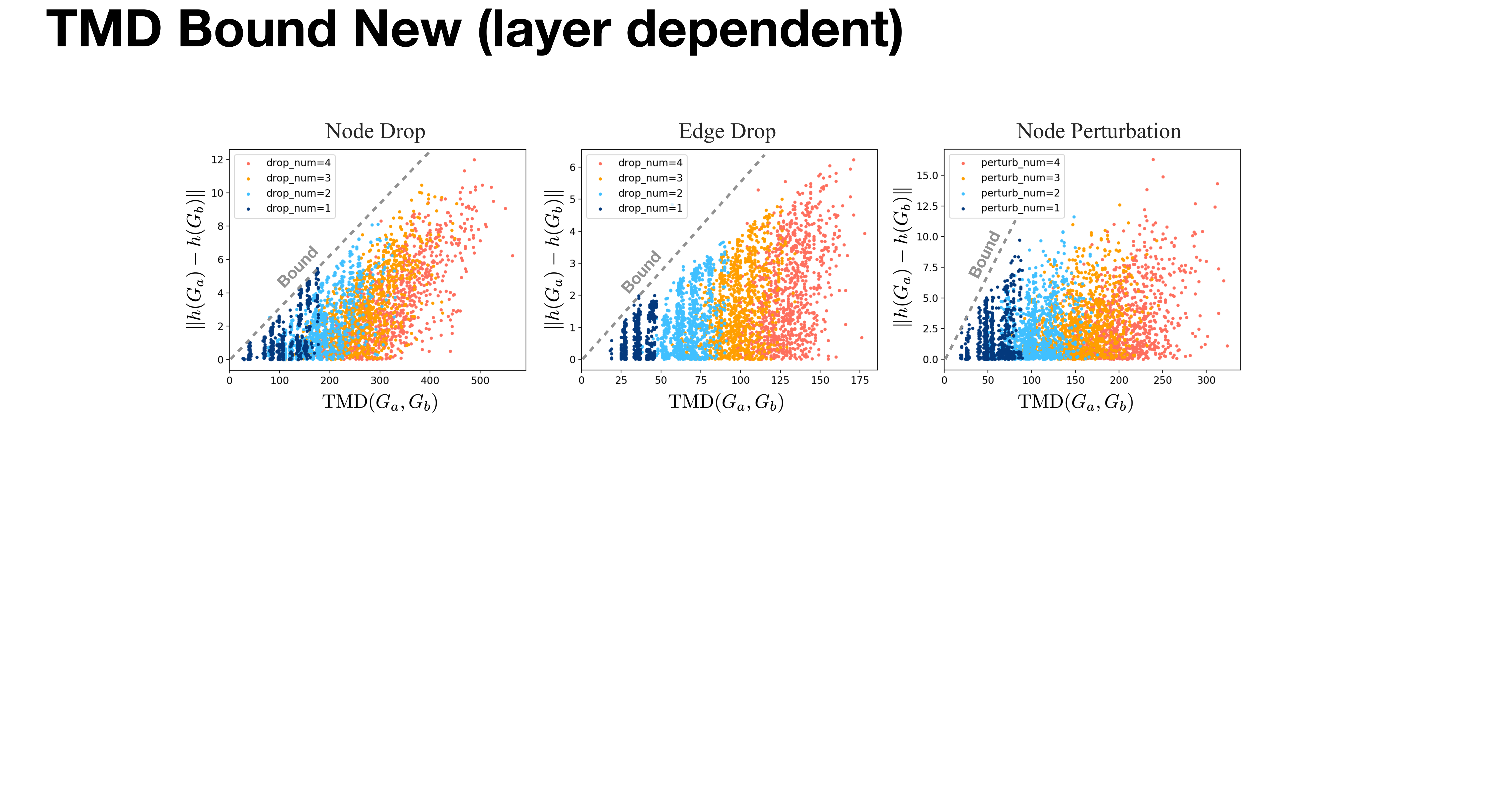}
\vspace{-3mm}
\end{center}
\caption{\textbf{Robustness under Graph Perturbation.} The empirical Lipschitz bounds (dash lines) successfully upper bound the output variation of GNNs under different graph perturbations. 
} 
\vspace{-3mm}
\label{fig_perturb}
\end{figure*}

\begin{proposition}[Node Perturbation]
\label{prop_node_perturb}
Given a graph $G = (V, E)$, let $G'$ be the graph where node feature $x_v$ is perturbed to $x'_v$. The tree mover's distance between $G$ and $G'$ is equal to
%\vspace{-2mm}
\begin{align*}
    \TMD_w^L(G, G') \leq \sum_{l=1}^L \lambda_l \cdot \textnormal{Width}_l(T_v^L) \cdot \left \| x_v - x'_v \right\|.
\end{align*}
%\vspace{-4mm}
\end{proposition}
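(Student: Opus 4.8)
The plan is to exploit that $G$ and $G'$ are topologically identical: they share the same vertex set $V$ and edge set $E$, so every depth-$L$ computation tree $T_u^L$ of $G$ has a shape-identical counterpart $(T_u^L)'$ in $G'$, and the two differ only in the feature vector carried by copies of $v$ (which changes from $x_v$ to $x_v'$). Since $|\gT_G^L| = |V| = |\gT_{G'}^L|$, the augmentation $\rho$ inserts no blank trees in the outer transport, so the identity plan $u \mapsto u$ is feasible; evaluating $\OT_{\TD_w}$ at this plan gives the first bound
\begin{align*}
\TMD_w^L(G,G') \;\le\; \sum_{u \in V} \TD_w\big(T_u^L,(T_u^L)'\big).
\end{align*}

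Next I would unroll the recursion of Definition~\ref{def_ot_tree} for a single pair $T_u^L, (T_u^L)'$. At every internal node the two subtree multisets have equal cardinality (again because $E$ is unchanged), so no blank trees appear at any nested OT and the identity plan is feasible at each level. Accumulating the depth-dependent weights $w(L), w(L-1), \dots$ that multiply the OT terms as the recursion descends, a node occurring at level $l$ of $T_u^L$ contributes its root term $\|x_p - x_p'\|$ with coefficient exactly $\prod_{j=1}^{l-1} w(L+1-j) = \lambda_l$ (the empty product handling the root $l=1$). As $x_p = x_p'$ unless $p$ is a copy of $v$, this yields
\begin{align*}
\TD_w\big(T_u^L,(T_u^L)'\big) \;\le\; \sum_{l=1}^{L} \lambda_l\, c_l(u)\, \|x_v - x_v'\|,
\end{align*}
where $c_l(u)$ is the number of copies of $v$ at level $l$ of $T_u^L$, equivalently the number of walks of length $l-1$ from $u$ to $v$ in $G$.

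Finally I would substitute this into the first display and exchange the order of summation, reducing the statement to the identity $\sum_{u \in V} c_l(u) = \textnormal{Width}_l(T_v^L)$. This holds because $\sum_{u} c_l(u)$ counts all length-$(l-1)$ walks in $G$ ending at $v$; reversing walks (valid since $G$ is undirected) shows this equals the number of length-$(l-1)$ walks starting at $v$, which is exactly the number of level-$l$ nodes of $T_v^L$, i.e.\ $\textnormal{Width}_l(T_v^L)$.

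I expect the main obstacle to be the bookkeeping in the middle step: making precise, by induction on $L$ (or on tree depth), that pushing the identity plan through the hierarchical OT produces precisely the coefficients $\lambda_l$ with no leftover blank-tree contributions, and formalizing ``copy of $v$ at level $l$'' in terms of walk counts so the induction closes. The walk-reversal identity is the other point to state carefully, though it is elementary for undirected graphs. The same template mirrors the arguments for Propositions~\ref{prop_node_drop} and~\ref{prop_edge_drop}, the only change being that a dropped node or edge replaces an entire subtree by a blank tree rather than merely altering a root feature.
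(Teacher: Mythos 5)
Your proposal is correct and takes essentially the same route as the paper: bound $\TMD_w^L(G,G')$ via the identity coupling between the structurally identical computation trees, and count how many copies of $v$ appear at each level across all trees, which the paper expresses through $\textnormal{Width}_l(T_v^L)$ (level-$l$ nodes of $T_v^L$ being exactly the nodes that reach $v$ in $l-1$ steps). The paper's argument is only an informal sketch by illustration (treating this as a simplified node drop), so your explicit identity-plan unrolling and walk-reversal identity merely formalize the same counting.
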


Different from Proposition \ref{prop_node_drop}, the magnitude of the perturbation is controlled by the norm of the perturbation instead of the norm of the computation tree.

%\vspace{-3mm}
\subsection{Experiments}
%\vspace{-2mm}
\paragraph{Correlation between TMD and GNN output perturbation}
We now empirically examine the theoretical analysis with experiments on the MUTAG dataset \citep{debnath1991structure}. Results for other datasets can be found in Appendix \ref{appendix_exp}. In particular, we measure to what extent candidate graph distances capture input perturbations that lead to output perturbations in the GNN.
We train graph isomorphism networks \citep{xu2018powerful} with varying numbers of message passing layers and plot the relation between input variation $\TMD_w^{L+1}(G_a, G_b)$ and output variation $\| h(G_a) - h(G_b) \|$ for randomly sampled pairs $(G_a, G_b)$ in Figure \ref{fig_lip}. For comparison, we also plot the input variations measured by the recently proposed graph metric WWL \citep{togninalli2019wasserstein}. 
We can see that TMD strongly correlates with the output variation with large Pearson correlation coefficient, supporting the approach of defining the Lipschitz constant with respect to TMD, as in  Theorem \ref{thm_wass_lip}. In contrast, WWL barely captures the input graph perturbations that lead to output variation of GNNs.

%\vspace{-3mm}
\paragraph{Stability under small Perturbations}
Next, we plot the output variation of 3-layer GNNs and the TMD under random graph perturbations in Figure \ref{fig_perturb}. For node perturbations, we change the discrete node attribute for randomly sampled nodes. We additionally plot the Lipschitz bound with empirical estimated Lipschitz constant $\max_{G_a, G_b \in S} \|h(G_a, G_b) \| / \TMD_w^{L+1}(G_a, G_b)$, where $S$ is a set of samples. We refer
reader to \citep{naor2017lipschitz, vacher2021dimension} for analyses on approximation error of estimating Lipschitz constants from finite
samples. We can see that the bound is reasonably tight and estimates the effect of perturbations across different degrees.

%\vspace{-2mm}
\section{Generalization of GNNs under Distribution Shifts}
%\vspace{-2mm}

Finally, we relate the Lipschitz condition of GNNs to the generalization error under distribution shifts
by extending the results from \citep{shen2018wasserstein}. 
Consider a binary classification task with input space $\gX$ and output space $\gY$. In domain adaptation \citep{chuang2020estimating}, a pair of source and target distributions $\mu_S, \mu_T$ over $\gX \times \gY$ are given. Let $p_S$ and $p_T$ denote the respective marginals on the input space $\gX$. In unsupervised domain adaptation, the learning algorithm obtains labelled source samples from $\mu_S$ and unlabelled target samples from $p_T$. To estimate the adaptability of a hypothesis $h$, i.e., its generalization to the target distribution, we aim to bound the target risk $R_T(h) = \E_{x,y \sim \mu_T}[\mathbb{1}_{h(x) \neq y}]$ relative to the source risk $R_S(h) = \E_{x,y \sim \mu_S}[\mathbb{1}_{h(x) \neq y}]$ \citep{ben2010theory,chuang2020estimating,zhao2019learning}. For instance, \citet{shen2018wasserstein} bound the target risk via the source risk and the Wasserstein-1 distance $\gW_1$ between source and target distributions.
\begin{theorem}[\citet{shen2018wasserstein}]\label{thm:domain-adapt}
  For all hypotheses $h \in \mathcal H$, the target risk is bounded as
\begin{align*}
R_{T}(h) \leq  R_{S}(h) + 2 K \gW_1(p_{S}, p_{T}) + \lambda_{\mathcal H},
\end{align*}
where $K$ is the Lipschitz constant of $h$ and $\lambda_{\mathcal H}$ is the best joint risk $\lambda_{\mathcal H} \coloneqq \inf_{h' \in \mathcal{H}}[R_S(h') + R_T(h')]$.
\end{theorem}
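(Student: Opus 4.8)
The plan is to follow the now-standard template for Wasserstein-based domain-adaptation bounds, combining two applications of the triangle inequality for the 0-1 risk with the Kantorovich--Rubinstein duality for $\gW_1$. First I would introduce the \emph{ideal joint hypothesis} $h^\ast \in \argmin_{h' \in \mathcal H} R_S(h') + R_T(h')$, so that $\lambda_{\mathcal H} = R_S(h^\ast) + R_T(h^\ast)$ by definition. For hypotheses $g,g'$ and a marginal $p$, write $\epsilon_p(g,g') = \E_{x \sim p}[\mathbb{1}_{g(x) \neq g'(x)}]$ for the disagreement. The 0-1 loss satisfies $\mathbb{1}_{a \neq c} \leq \mathbb{1}_{a \neq b} + \mathbb{1}_{b \neq c}$; inserting $y$ as the middle term and taking expectations over $\mu_T$ gives $R_T(h) \leq R_T(h^\ast) + \epsilon_T(h,h^\ast)$, and inserting $h^\ast$ (resp.\ $y$) similarly gives $\epsilon_S(h,h^\ast) \leq R_S(h) + R_S(h^\ast)$.

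Second — the crux — I would transfer the disagreement $\epsilon_T(h,h^\ast)$ from the target marginal to the source marginal at a cost controlled by $\gW_1(p_S,p_T)$. Writing $\epsilon_T(h,h^\ast) \leq \epsilon_S(h,h^\ast) + \big|\epsilon_T(h,h^\ast) - \epsilon_S(h,h^\ast)\big|$, I bound the last term by observing that the loss-composed disagreement map $x \mapsto |h(x) - h^\ast(x)|$ is Lipschitz with constant at most $2K$ whenever $h$ and $h^\ast$ are each $K$-Lipschitz, so after rescaling by $1/(2K)$ it is an admissible test function for the dual form $\gW_1(p_S,p_T) = \sup_{\|f\|_{\mathrm{Lip}} \leq 1} |\E_{p_S} f - \E_{p_T} f|$, yielding
\begin{align*}
 \left| \E_{x \sim p_T}\big[\,|h(x) - h^\ast(x)|\,\big] - \E_{x \sim p_S}\big[\,|h(x) - h^\ast(x)|\,\big] \right| \;\leq\; 2K \cdot \gW_1(p_S,p_T).
\end{align*}
Chaining the inequalities gives $R_T(h) \leq R_T(h^\ast) + \epsilon_S(h,h^\ast) + 2K\gW_1(p_S,p_T) \leq R_S(h) + R_S(h^\ast) + R_T(h^\ast) + 2K\gW_1(p_S,p_T)$, and recognizing $R_S(h^\ast) + R_T(h^\ast) = \lambda_{\mathcal H}$ completes the proof.

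The main obstacle is the Lipschitz-continuity step: a genuinely $\{0,1\}$-valued classifier is $K$-Lipschitz only if it is constant, so the statement must be read with the hypotheses (and the loss composed with them) Lipschitz in a suitable continuous/soft sense — e.g.\ $\mathcal H$ consisting of $[0,1]$-valued predictors under absolute loss, for which $x \mapsto |h(x) - h^\ast(x)|$ is both $2K$-Lipschitz and bounded, hence a valid $\gW_1$-test function after rescaling. I would make this regularity assumption explicit (it is implicit in \citet{shen2018wasserstein}); everything else — the introduction of $h^\ast$ and the two triangle inequalities — is routine and mirrors the classical $\mathcal H \Delta \mathcal H$-divergence argument of \citet{ben2010theory}, with $\gW_1$ substituted for the divergence term and the Lipschitz constant $K$ converting the metric on distributions into a bound on transferred error.
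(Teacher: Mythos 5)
Your proposal is correct, but note that the paper itself does not prove this statement: it is imported verbatim from \citet{shen2018wasserstein}, and the appendix contains no proof of it. Your argument—introducing the ideal joint hypothesis $h^\ast$, two triangle inequalities for the loss, and transferring the disagreement $\E\big[\,|h-h^\ast|\,\big]$ across marginals via Kantorovich--Rubinstein duality with the $2K$-Lipschitz test function—is essentially the original proof in that reference, and the regularity caveat you raise (real-valued, $K$-Lipschitz predictors rather than hard $\{0,1\}$-classifiers) is exactly the assumption implicit there, so nothing further is needed.
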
 
This bound relies on being able to measure Wasserstein distance between the two data distributions, which demands a ground metric on the data space, and an associated Lipschitz constant of the model $h$. 
The TMD and the resulting Lipschitz constant in Section~\ref{sec:lipschitz} make this bound applicable to message passing GNNs, too. In particular, for GNNs that satisfy the Lipschitz constant in Theorem~\ref{thm_wass_lip}, the domain discrepancy $\gW_1(p_{S}, p_{T})$ is defined as
\begin{align}
    \gW_1(p_{S}, p_{T}) = \inf_{\pi \in \Pi(p_S, p_T)} \int \TMD_w^{L+1}(G_a, G_b) d \pi(G_a, G_b),
    \label{eq_wass_div}
\end{align}
where $L$ is the number of message-passing layers in $h$. Since $\gW_1(p_{S}, p_{T})$ can be estimated without labels, Theorem~\ref{thm:domain-adapt} applies to unsupervised domain adaptation. Assuming that there is a model that performs well in both source and target domain, i.e., $\lambda_{\mathcal{H}}$ is small, we may empirically estimate the discrepancy between source and target risk via the GNN Lipschitz constant and $\gW_1(p_{S}, p_{T})$.

%\vspace{-2mm}
\subsection{Experiments}
%\vspace{-2mm}

\begin{figure}[t]
  \begin{minipage}[c]{0.77\textwidth}
    \includegraphics[width=\linewidth]{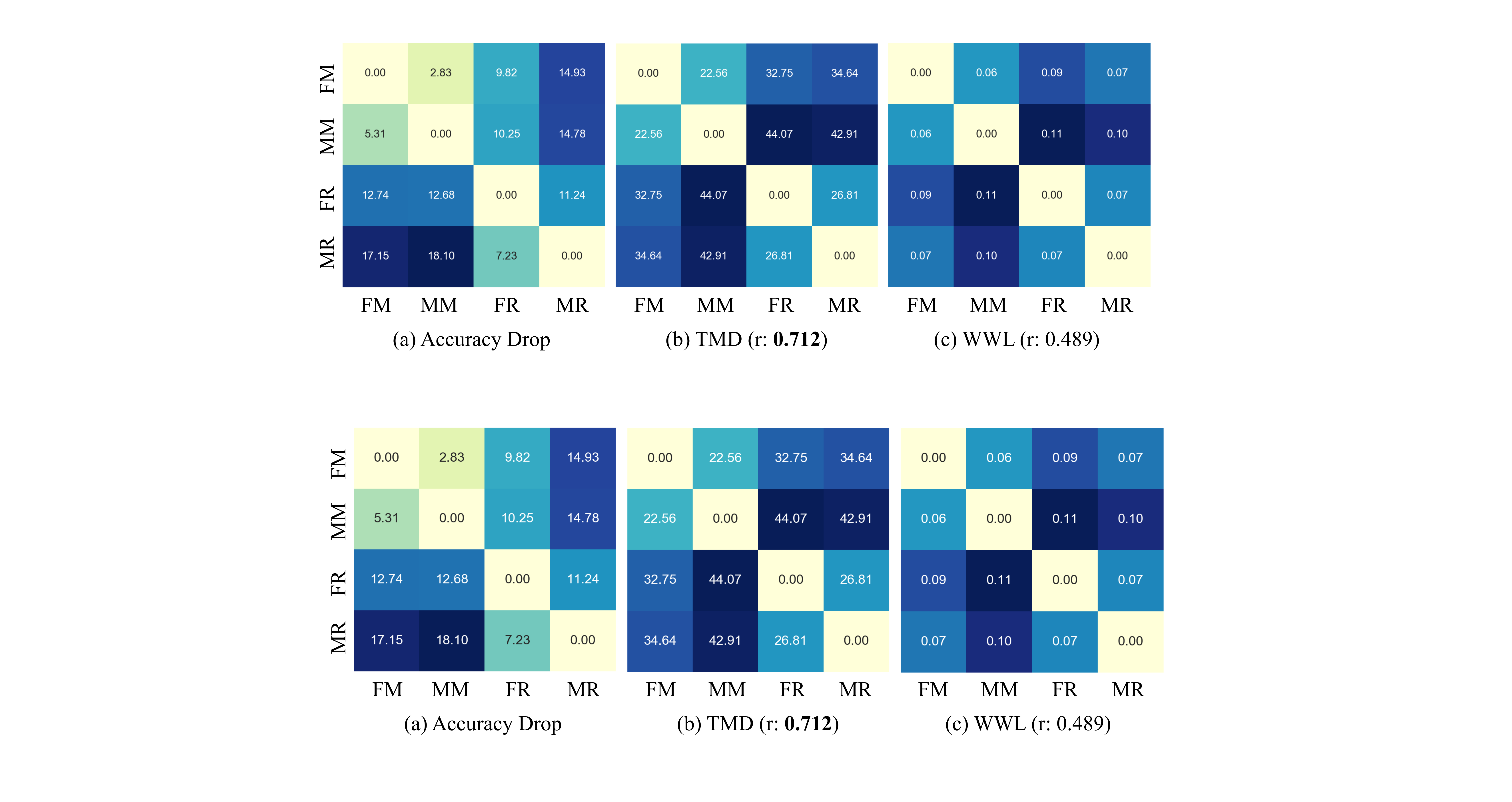}
     \vspace{-6mm}
  \end{minipage}\hfill
  \begin{minipage}[c]{0.22\textwidth}
  \vspace{-6mm}
    \caption{
       \textbf{Accuracy Drop and Distances.} Wasserstein distance based on TMD highly correlates ($r$ = 0.712) with the accuracy drops, while WWL fails to predict the generalization.
    } \label{fig_domain}
    \vspace{-6mm}
  \end{minipage}
  \vspace{-2mm}
\end{figure}

\paragraph{Domain Shifts}
We first verify our analysis on the PTC dataset \citep{helma2001predictive}, which contains carcinogenicity labels of chemical structures for four groups of rodents: male mice (MM), male
rats (MR), female mice (FM) and female rats (FR). We train 3-layer GINs \citep{xu2018powerful} on one group and examine the empirical performance drop on the remaining groups. Figure \ref{fig_domain} shows the Wasserstein distance between groups and the corresponding performance drops. As a baseline, we compute the Wasserstein distance with WWL transportation cost \citep{togninalli2019wasserstein}. The $\gW_1$ distance based on TMD highly correlates with the accuracy drop (Pearson correlation $r=$0.712), while WWL-$\gW_1$ only achieves $r=$0.489.

\paragraph{Size Generalization}
A known challenge for GNNs is generalizing to graphs of different size \citep{yehudai2021local,xu2020neural}. To evaluate TMD on this problem, we sort the PROTEINS dataset \citep{borgwardt2005protein} based on the number of nodes and bin it into 8 subsets, each containing 125 graphs. We again train 3-layer GINs on the smallest and the largest bins and examine the accuracy drops on the remaining ones. Figure \ref{fig_size} plots the accuracy drops with respect to different subsets and the corresponding Wasserstein distance based on TMD and WWL. TMD correlates with the accuracy drop surprisingly well when the models are trained on large graphs and tested on smaller graphs (\textbf{$r=$ 0.97}). The correlation is slightly weaker when the models are trained on small graphs and tested on large graphs ($r=$ 0.83). Yet, TMD shows a much more gradual change in distance, in agreement with the gradual change in drop, than WWL, and hence reflects the overall behavior better.

\begin{figure}[h]
  \begin{minipage}[c]{0.65\textwidth}
    \includegraphics[width=\linewidth]{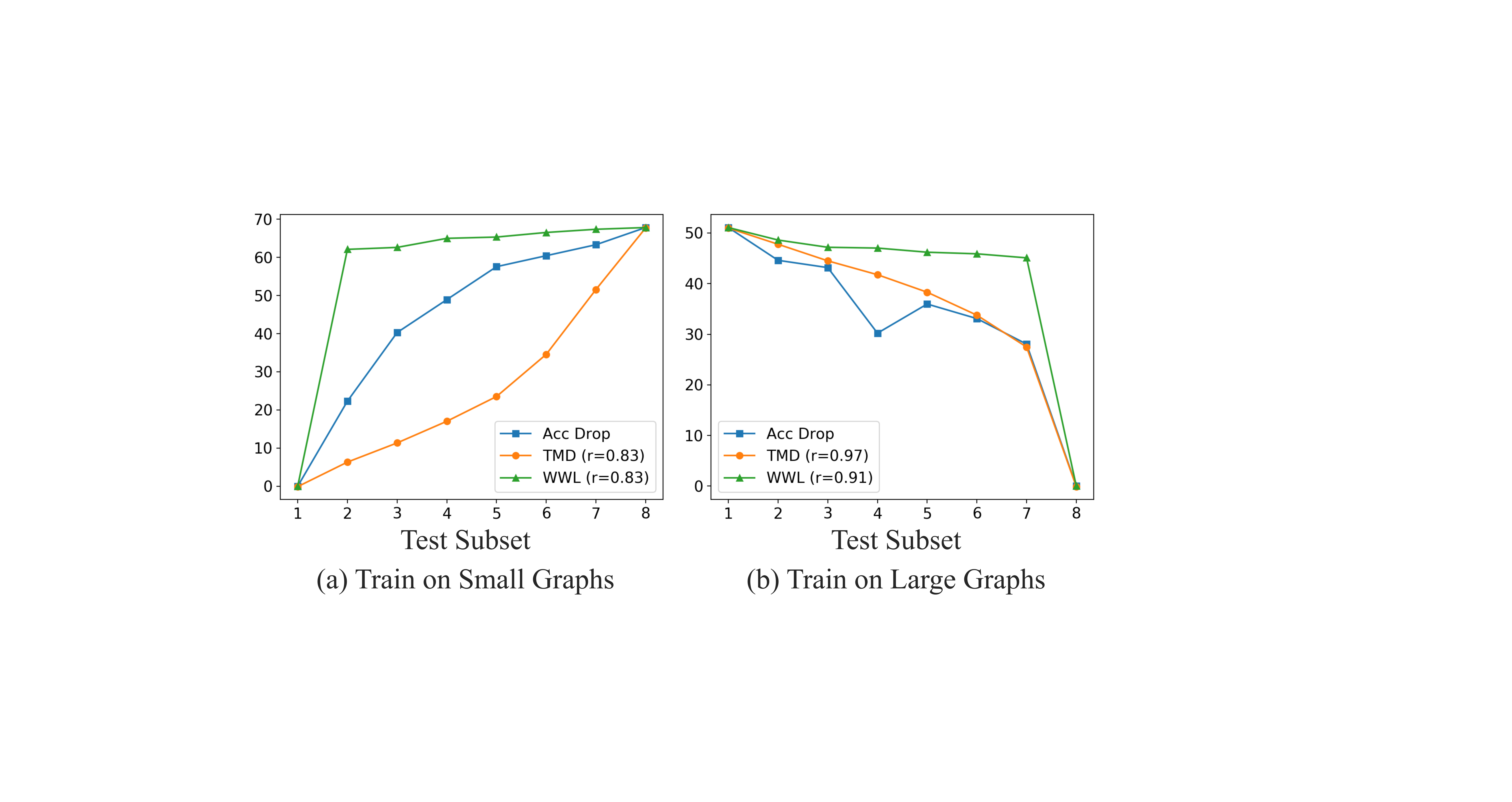}
     \vspace{-6mm}
  \end{minipage}\hfill
  \begin{minipage}[c]{0.33\textwidth}
  \vspace{-2mm}
    \caption{
       \textbf{Size Generalization.} Index 1 denotes the bin of smallest graphs and index 8 the largest graphs. For better visualization, the Wasserstein distances are normalized to make the maximal distance equal to the greatest performance drops, as the absolute scales of distances are less important. The models in (a) / (b) are trained on subset 1 / 8, respectively.
    } \label{fig_size}
    \vspace{-6mm}
  \end{minipage}
  \vspace{-2mm}
\end{figure}

%\vspace{-2mm}
\section{Conclusion}
%\vspace{-2mm}
In this work, we introduce Tree Mover's Distance (TMD), a new graph distance based on optimal transport between computation trees. First, TMD captures the structural and attribute properties that are important for many graph classification tasks. Second, it reflects the patterns that determine the generalization behavior of message passing graph neural networks, and, hence, offers a suitable tool to predict the perturbation stability and out-of-domain generalization capability of such GNNs. Hence, it bears promise in applications, both for graph learning tasks and predicting reliability of graph learning models, and, in theory, as a tool for new tighter analyses of generalization in GNNs. 

\paragraph{Acknowledgements} This work was in part supported by NSF BIGDATA IIS-1741341, NSF AI Institute TILOS, NSF CAREER 1553284. CC is supported
by a IBM PhD Fellowship.

\bibliographystyle{plainnat}
\bibliography{bibfile}

\begin{thebibliography}{60}
\providecommand{\natexlab}[1]{#1}
\providecommand{\url}[1]{\texttt{#1}}
\expandafter\ifx\csname urlstyle\endcsname\relax
  \providecommand{\doi}[1]{doi: #1}\else
  \providecommand{\doi}{doi: \begingroup \urlstyle{rm}\Url}\fi

\bibitem[Altschuler et~al.(2017)Altschuler, Niles-Weed, and
  Rigollet]{altschuler2017near}
Jason Altschuler, Jonathan Niles-Weed, and Philippe Rigollet.
\newblock Near-linear time approximation algorithms for optimal transport via
  sinkhorn iteration.
\newblock \emph{Advances in neural information processing systems}, 30, 2017.

\bibitem[Arvind et~al.(2020)Arvind, Fuhlbr{\"u}ck, K{\"o}bler, and
  Verbitsky]{arvind2020weisfeiler}
Vikraman Arvind, Frank Fuhlbr{\"u}ck, Johannes K{\"o}bler, and Oleg Verbitsky.
\newblock On weisfeiler-leman invariance: Subgraph counts and related graph
  properties.
\newblock In \emph{Journal of Computer and System Sciences}, volume 113, pages
  42--59. Elsevier, 2020.

\bibitem[Baldan et~al.(2017)Baldan, Bonchi, Kerstan, and
  K{\"o}nig]{baldan2017coalgebraic}
Paolo Baldan, Filippo Bonchi, Henning Kerstan, and Barbara K{\"o}nig.
\newblock Coalgebraic behavioral metrics.
\newblock \emph{arXiv preprint arXiv:1712.07511}, 2017.

\bibitem[Barocas et~al.(2017)Barocas, Hardt, and
  Narayanan]{barocas2017fairness}
Solon Barocas, Moritz Hardt, and Arvind Narayanan.
\newblock Fairness in machine learning.
\newblock \emph{Nips tutorial}, 1:\penalty0 2, 2017.

\bibitem[Ben-David et~al.(2010)Ben-David, Blitzer, Crammer, Kulesza, Pereira,
  and Vaughan]{ben2010theory}
Shai Ben-David, John Blitzer, Koby Crammer, Alex Kulesza, Fernando Pereira, and
  Jennifer~Wortman Vaughan.
\newblock A theory of learning from different domains.
\newblock \emph{Machine learning}, 79\penalty0 (1):\penalty0 151--175, 2010.

\bibitem[Bianchi et~al.(2020)Bianchi, Grattarola, and
  Alippi]{bianchi2020spectral}
Filippo~Maria Bianchi, Daniele Grattarola, and Cesare Alippi.
\newblock Spectral clustering with graph neural networks for graph pooling.
\newblock In \emph{International Conference on Machine Learning}, pages
  874--883. PMLR, 2020.

\bibitem[Borgwardt et~al.(2020)Borgwardt, Ghisu, Llinares-L\'{o}pez, O’Bray,
  and Rieck]{borgwardt_book}
Karsten Borgwardt, Elisabetta Ghisu, Felipe Llinares-L\'{o}pez, Leslie
  O’Bray, and Bastian Rieck.
\newblock \emph{Graph Kernels: State-of-the-Art and Future Challenges}.
\newblock Now Foundations and Trends, 2020.

\bibitem[Borgwardt et~al.(2005)Borgwardt, Ong, Sch{\"o}nauer, Vishwanathan,
  Smola, and Kriegel]{borgwardt2005protein}
Karsten~M Borgwardt, Cheng~Soon Ong, Stefan Sch{\"o}nauer, SVN Vishwanathan,
  Alex~J Smola, and Hans-Peter Kriegel.
\newblock Protein function prediction via graph kernels.
\newblock \emph{Bioinformatics}, 21\penalty0 (suppl\_1):\penalty0 i47--i56,
  2005.

\bibitem[Bunke and Shearer(1998)]{bunke1998graph}
Horst Bunke and Kim Shearer.
\newblock A graph distance metric based on the maximal common subgraph.
\newblock \emph{Pattern recognition letters}, 19\penalty0 (3-4):\penalty0
  255--259, 1998.

\bibitem[Chapel et~al.(2020)Chapel, Alaya, and Gasso]{chapel2020partial}
Laetitia Chapel, Mokhtar~Z Alaya, and Gilles Gasso.
\newblock Partial optimal tranport with applications on positive-unlabeled
  learning.
\newblock \emph{Advances in Neural Information Processing Systems},
  33:\penalty0 2903--2913, 2020.

\bibitem[Chizat et~al.(2015)Chizat, Peyr{\'e}, Schmitzer, and
  Vialard]{chizat2015unbalanced}
Lenaic Chizat, Gabriel Peyr{\'e}, Bernhard Schmitzer, and Fran{\c{c}}ois-Xavier
  Vialard.
\newblock Unbalanced optimal transport: geometry and kantorovich formulation.
\newblock 2015.

\bibitem[Chuang and Mroueh(2020)]{chuang2020fair}
Ching-Yao Chuang and Youssef Mroueh.
\newblock Fair mixup: Fairness via interpolation.
\newblock In \emph{International Conference on Learning Representations}, 2020.

\bibitem[Chuang et~al.(2020)Chuang, Torralba, and
  Jegelka]{chuang2020estimating}
Ching-Yao Chuang, Antonio Torralba, and Stefanie Jegelka.
\newblock Estimating generalization under distribution shifts via
  domain-invariant representations.
\newblock \emph{International Conference on Machine Learning}, 2020.

\bibitem[Debnath et~al.(1991)Debnath, Lopez~de Compadre, Debnath, Shusterman,
  and Hansch]{debnath1991structure}
Asim~Kumar Debnath, Rosa~L Lopez~de Compadre, Gargi Debnath, Alan~J Shusterman,
  and Corwin Hansch.
\newblock Structure-activity relationship of mutagenic aromatic and
  heteroaromatic nitro compounds. correlation with molecular orbital energies
  and hydrophobicity.
\newblock \emph{Journal of medicinal chemistry}, 34\penalty0 (2):\penalty0
  786--797, 1991.

\bibitem[Du et~al.(2019)Du, Hou, Salakhutdinov, Poczos, Wang, and
  Xu]{du2019graph}
Simon~S Du, Kangcheng Hou, Russ~R Salakhutdinov, Barnabas Poczos, Ruosong Wang,
  and Keyulu Xu.
\newblock Graph neural tangent kernel: Fusing graph neural networks with graph
  kernels.
\newblock \emph{Advances in neural information processing systems}, 32, 2019.

\bibitem[Dwivedi et~al.(2021)Dwivedi, Luu, Laurent, Bengio, and
  Bresson]{dwivedi2021graph}
Vijay~Prakash Dwivedi, Anh~Tuan Luu, Thomas Laurent, Yoshua Bengio, and Xavier
  Bresson.
\newblock Graph neural networks with learnable structural and positional
  representations.
\newblock \emph{arXiv preprint arXiv:2110.07875}, 2021.

\bibitem[Flamary et~al.(2021)Flamary, Courty, Gramfort, Alaya, Boisbunon,
  Chambon, Chapel, Corenflos, Fatras, Fournier, et~al.]{flamary2021pot}
R{\'e}mi Flamary, Nicolas Courty, Alexandre Gramfort, Mokhtar~Z Alaya,
  Aur{\'e}lie Boisbunon, Stanislas Chambon, Laetitia Chapel, Adrien Corenflos,
  Kilian Fatras, Nemo Fournier, et~al.
\newblock Pot: Python optimal transport.
\newblock \emph{Journal of Machine Learning Research}, 22\penalty0
  (78):\penalty0 1--8, 2021.

\bibitem[Fortin(1996)]{fortin1996graph}
Scott Fortin.
\newblock The graph isomorphism problem.
\newblock 1996.

\bibitem[Gama et~al.(2020)Gama, Bruna, and Ribeiro]{gama2020stability}
Fernando Gama, Joan Bruna, and Alejandro Ribeiro.
\newblock Stability properties of graph neural networks.
\newblock \emph{IEEE Transactions on Signal Processing}, 68:\penalty0
  5680--5695, 2020.

\bibitem[Garg et~al.(2020)Garg, Jegelka, and Jaakkola]{garg2020generalization}
Vikas Garg, Stefanie Jegelka, and Tommi Jaakkola.
\newblock Generalization and representational limits of graph neural networks.
\newblock In \emph{International Conference on Machine Learning}, pages
  3419--3430. PMLR, 2020.

\bibitem[Haussler(1999)]{haussler1999convolution}
David Haussler.
\newblock Convolution kernels on discrete structures.
\newblock Technical report, Technical report, Department of Computer Science,
  University of California~…, 1999.

\bibitem[Helma et~al.(2001)Helma, King, Kramer, and
  Srinivasan]{helma2001predictive}
Christoph Helma, Ross~D. King, Stefan Kramer, and Ashwin Srinivasan.
\newblock The predictive toxicology challenge 2000--2001.
\newblock \emph{Bioinformatics}, 17\penalty0 (1):\penalty0 107--108, 2001.

\bibitem[Jegelka(2022)]{jegelka2022theory}
Stefanie Jegelka.
\newblock Theory of graph neural networks: Representation and learning.
\newblock \emph{arXiv preprint arXiv:2204.07697}, 2022.

\bibitem[Kaufman and Rousseeuw(1990)]{kaufman1990partitioning}
Leonard Kaufman and Peter~J Rousseeuw.
\newblock Partitioning around medoids (program pam).
\newblock \emph{Finding groups in data: an introduction to cluster analysis},
  344:\penalty0 68--125, 1990.

\bibitem[Kenlay et~al.(2020)Kenlay, Thanou, and Dong]{kenlay2020stability}
Henry Kenlay, Dorina Thanou, and Xiaowen Dong.
\newblock On the stability of polynomial spectral graph filters.
\newblock In \emph{ICASSP 2020-2020 IEEE International Conference on Acoustics,
  Speech and Signal Processing (ICASSP)}, pages 5350--5354. IEEE, 2020.

\bibitem[Kenlay et~al.(2021)Kenlay, Thano, and Dong]{kenlay2021stability}
Henry Kenlay, Dorina Thano, and Xiaowen Dong.
\newblock On the stability of graph convolutional neural networks under edge
  rewiring.
\newblock In \emph{ICASSP 2021-2021 IEEE International Conference on Acoustics,
  Speech and Signal Processing (ICASSP)}, pages 8513--8517. IEEE, 2021.

\bibitem[Kipf and Welling(2017)]{kipf2016semi}
Thomas~N Kipf and Max Welling.
\newblock Semi-supervised classification with graph convolutional networks.
\newblock In \emph{International Conference on Learning Representations}, 2017.

\bibitem[Liao et~al.(2020)Liao, Urtasun, and Zemel]{liao2020pac}
Renjie Liao, Raquel Urtasun, and Richard Zemel.
\newblock A pac-bayesian approach to generalization bounds for graph neural
  networks.
\newblock In \emph{International Conference on Learning Representations}, 2020.

\bibitem[Lim et~al.(2022)Lim, Robinson, Zhao, Smidt, Sra, Maron, and
  Jegelka]{lim2022sign}
Derek Lim, Joshua Robinson, Lingxiao Zhao, Tess Smidt, Suvrit Sra, Haggai
  Maron, and Stefanie Jegelka.
\newblock Sign and basis invariant networks for spectral graph representation
  learning.
\newblock \emph{arXiv preprint arXiv:2202.13013}, 2022.

\bibitem[Lloyd(1982)]{lloyd1982least}
Stuart Lloyd.
\newblock Least squares quantization in pcm.
\newblock \emph{IEEE transactions on information theory}, 28\penalty0
  (2):\penalty0 129--137, 1982.

\bibitem[Luss and d'Aspremont(2007)]{luss2007support}
Ronny Luss and Alexandre d'Aspremont.
\newblock Support vector machine classification with indefinite kernels.
\newblock \emph{Advances in neural information processing systems}, 20, 2007.

\bibitem[M{\'e}moli(2011)]{memoli2011gromov}
Facundo M{\'e}moli.
\newblock Gromov--wasserstein distances and the metric approach to object
  matching.
\newblock \emph{Foundations of computational mathematics}, 11\penalty0
  (4):\penalty0 417--487, 2011.

\bibitem[Miyato et~al.(2018)Miyato, Kataoka, Koyama, and
  Yoshida]{miyato2018spectral}
Takeru Miyato, Toshiki Kataoka, Masanori Koyama, and Yuichi Yoshida.
\newblock Spectral normalization for generative adversarial networks.
\newblock In \emph{International Conference on Learning Representations}, 2018.

\bibitem[Morris et~al.(2019)Morris, Ritzert, Fey, Hamilton, Lenssen, Rattan,
  and Grohe]{morris19}
C.~Morris, M.~Ritzert, M.~Fey, W.~L. Hamilton, J.~E. Lenssen, G.~Rattan, and
  M.~Grohe.
\newblock Weisfeiler and leman go neural: Higher-order graph neural networks.
\newblock 2019.

\bibitem[Morris et~al.(2020)Morris, Kriege, Bause, Kersting, Mutzel, and
  Neumann]{morris2020tudataset}
Christopher Morris, Nils~M Kriege, Franka Bause, Kristian Kersting, Petra
  Mutzel, and Marion Neumann.
\newblock Tudataset: A collection of benchmark datasets for learning with
  graphs.
\newblock \emph{arXiv preprint arXiv:2007.08663}, 2020.

\bibitem[Naor and Rabani(2017)]{naor2017lipschitz}
Assaf Naor and Yuval Rabani.
\newblock On lipschitz extension from finite subsets.
\newblock \emph{Israel Journal of Mathematics}, 219\penalty0 (1):\penalty0
  115--161, 2017.

\bibitem[Oberman and Calder(2018)]{oberman2018lipschitz}
Adam~M Oberman and Jeff Calder.
\newblock Lipschitz regularized deep neural networks generalize.
\newblock 2018.

\bibitem[Pearson(1905)]{pearson1905problem}
Karl Pearson.
\newblock The problem of the random walk.
\newblock \emph{Nature}, 72\penalty0 (1865):\penalty0 294--294, 1905.

\bibitem[Ramon and G{\"a}rtner(2003)]{ramon2003expressivity}
Jan Ramon and Thomas G{\"a}rtner.
\newblock Expressivity versus efficiency of graph kernels.
\newblock In \emph{Proceedings of the first international workshop on mining
  graphs, trees and sequences}, pages 65--74, 2003.

\bibitem[Sanfeliu and Fu(1983)]{sanfeliu1983distance}
Alberto Sanfeliu and King-Sun Fu.
\newblock A distance measure between attributed relational graphs for pattern
  recognition.
\newblock \emph{IEEE transactions on systems, man, and cybernetics}, \penalty0
  (3):\penalty0 353--362, 1983.

\bibitem[Sato et~al.(2020)Sato, Yamada, and Kashima]{sato2020fast}
Ryoma Sato, Makoto Yamada, and Hisashi Kashima.
\newblock Fast unbalanced optimal transport on a tree.
\newblock \emph{Advances in neural information processing systems},
  33:\penalty0 19039--19051, 2020.

\bibitem[Scarselli et~al.(2018)Scarselli, Tsoi, and
  Hagenbuchner]{scarselli2018vapnik}
Franco Scarselli, Ah~Chung Tsoi, and Markus Hagenbuchner.
\newblock The vapnik--chervonenkis dimension of graph and recursive neural
  networks.
\newblock \emph{Neural Networks}, 108:\penalty0 248--259, 2018.

\bibitem[Shen et~al.(2018)Shen, Qu, Zhang, and Yu]{shen2018wasserstein}
Jian Shen, Yanru Qu, Weinan Zhang, and Yong Yu.
\newblock Wasserstein distance guided representation learning for domain
  adaptation.
\newblock In \emph{Thirty-second AAAI conference on artificial intelligence},
  2018.

\bibitem[Shervashidze et~al.(2011)Shervashidze, Schweitzer, Van~Leeuwen,
  Mehlhorn, and Borgwardt]{shervashidze2011weisfeiler}
Nino Shervashidze, Pascal Schweitzer, Erik~Jan Van~Leeuwen, Kurt Mehlhorn, and
  Karsten~M Borgwardt.
\newblock Weisfeiler-lehman graph kernels.
\newblock \emph{Journal of Machine Learning Research}, 12\penalty0 (9), 2011.

\bibitem[Singh and P{\'o}czos(2018)]{singh2018minimax}
Shashank Singh and Barnab{\'a}s P{\'o}czos.
\newblock Minimax distribution estimation in wasserstein distance.
\newblock \emph{arXiv preprint arXiv:1802.08855}, 2018.

\bibitem[Togninalli et~al.(2019)Togninalli, Ghisu, Llinares-L{\'o}pez, Rieck,
  and Borgwardt]{togninalli2019wasserstein}
Matteo Togninalli, Elisabetta Ghisu, Felipe Llinares-L{\'o}pez, Bastian Rieck,
  and Karsten Borgwardt.
\newblock Wasserstein weisfeiler-lehman graph kernels.
\newblock \emph{Advances in Neural Information Processing Systems}, 32, 2019.

\bibitem[Vacher et~al.(2021)Vacher, Muzellec, Rudi, Bach, and
  Vialard]{vacher2021dimension}
Adrien Vacher, Boris Muzellec, Alessandro Rudi, Francis Bach, and
  Francois-Xavier Vialard.
\newblock A dimension-free computational upper-bound for smooth optimal
  transport estimation.
\newblock In \emph{Conference on Learning Theory}, pages 4143--4173. PMLR,
  2021.

\bibitem[Van~der Maaten and Hinton(2008)]{van2008visualizing}
Laurens Van~der Maaten and Geoffrey Hinton.
\newblock Visualizing data using t-sne.
\newblock \emph{Journal of machine learning research}, 9\penalty0 (11), 2008.

\bibitem[Vayer et~al.(2019)Vayer, Chapel, Flamary, Tavenard, and
  Courty]{vayer2019optimal}
Titouan Vayer, Laetitia Chapel, R{\'e}mi Flamary, Romain Tavenard, and Nicolas
  Courty.
\newblock Optimal transport for structured data with application on graphs.
\newblock In \emph{ICML 2019-36th International Conference on Machine
  Learning}, pages 1--16, 2019.

\bibitem[Villani(2009)]{villani2009optimal}
C{\'e}dric Villani.
\newblock \emph{Optimal transport: old and new}, volume 338.
\newblock Springer, 2009.

\bibitem[Virmaux and Scaman(2018)]{virmaux2018lipschitz}
Aladin Virmaux and Kevin Scaman.
\newblock Lipschitz regularity of deep neural networks: analysis and efficient
  estimation.
\newblock \emph{Advances in Neural Information Processing Systems}, 31, 2018.

\bibitem[Vishwanathan et~al.(2010)Vishwanathan, Schraudolph, Kondor, and
  Borgwardt]{vishwanathan2010graph}
S~Vichy~N Vishwanathan, Nicol~N Schraudolph, Risi Kondor, and Karsten~M
  Borgwardt.
\newblock Graph kernels.
\newblock \emph{Journal of Machine Learning Research}, 11:\penalty0 1201--1242,
  2010.

\bibitem[Weisfeiler and Leman(1968)]{weisfeiler1968reduction}
Boris Weisfeiler and Andrei Leman.
\newblock The reduction of a graph to canonical form and the algebra which
  appears therein.
\newblock \emph{NTI, Series}, 2\penalty0 (9):\penalty0 12--16, 1968.

\bibitem[Xu et~al.(2021)Xu, Zhang, Li, Du, Kawarabayashi, and
  Jegelka]{xu21extra}
K.~Xu, M.~Zhang, J.~Li, S.~Du, K.~Kawarabayashi, and S.~Jegelka.
\newblock How neural networks extrapolate: From feedforward to graph neural
  networks.
\newblock 2021.

\bibitem[Xu et~al.(2018)Xu, Hu, Leskovec, and Jegelka]{xu2018powerful}
Keyulu Xu, Weihua Hu, Jure Leskovec, and Stefanie Jegelka.
\newblock How powerful are graph neural networks?
\newblock In \emph{International Conference on Learning Representations}, 2018.

\bibitem[Xu et~al.(2020)Xu, Zhang, Li, Du, Kawarabayashi, and
  Jegelka]{xu2020neural}
Keyulu Xu, Mozhi Zhang, Jingling Li, Simon~Shaolei Du, Ken-Ichi Kawarabayashi,
  and Stefanie Jegelka.
\newblock How neural networks extrapolate: From feedforward to graph neural
  networks.
\newblock In \emph{International Conference on Learning Representations}, 2020.

\bibitem[Yehudai et~al.(2020)Yehudai, Fetaya, Meirom, Chechik, and
  Maron]{yehudai2020size}
Gilad Yehudai, Ethan Fetaya, Eli Meirom, Gal Chechik, and Haggai Maron.
\newblock On size generalization in graph neural networks.
\newblock 2020.

\bibitem[Yehudai et~al.(2021)Yehudai, Fetaya, Meirom, Chechik, and
  Maron]{yehudai2021local}
Gilad Yehudai, Ethan Fetaya, Eli Meirom, Gal Chechik, and Haggai Maron.
\newblock From local structures to size generalization in graph neural
  networks.
\newblock In \emph{International Conference on Machine Learning}, pages
  11975--11986. PMLR, 2021.

\bibitem[Zemel et~al.(2013)Zemel, Wu, Swersky, Pitassi, and
  Dwork]{zemel2013learning}
Rich Zemel, Yu~Wu, Kevin Swersky, Toni Pitassi, and Cynthia Dwork.
\newblock Learning fair representations.
\newblock In \emph{International conference on machine learning}, pages
  325--333. PMLR, 2013.

\bibitem[Zhao et~al.(2019)Zhao, Des~Combes, Zhang, and
  Gordon]{zhao2019learning}
Han Zhao, Remi~Tachet Des~Combes, Kun Zhang, and Geoffrey Gordon.
\newblock On learning invariant representations for domain adaptation.
\newblock In \emph{International Conference on Machine Learning}, pages
  7523--7532. PMLR, 2019.

\end{thebibliography}

\newpage
%%%%%%%%%%%%%%%%%%%%%%%%%%%%%%%%%%%%%%%%%%%%%%%%%%%%%%%%%%%%
\section*{Checklist}

%%% BEGIN INSTRUCTIONS %%%
The checklist follows the references.  Please
read the checklist guidelines carefully for information on how to answer these
questions.  For each question, change the default \answerTODO{} to \answerYes{},
\answerNo{}, or \answerNA{}.  You are strongly encouraged to include a {\bf
justification to your answer}, either by referencing the appropriate section of
your paper or providing a brief inline description.  For example:
\begin{itemize}
  \item Did you include the license to the code and datasets? \answerYes{See Section~\ref{gen_inst}.}
  \item Did you include the license to the code and datasets? \answerNo{The code and the data are proprietary.}
  \item Did you include the license to the code and datasets? \answerNA{}
\end{itemize}
Please do not modify the questions and only use the provided macros for your
answers.  Note that the Checklist section does not count towards the page
limit.  In your paper, please delete this instructions block and only keep the
Checklist section heading above along with the questions/answers below.
%%% END INSTRUCTIONS %%%

\begin{enumerate}

\item For all authors...
\begin{enumerate}
  \item Do the main claims made in the abstract and introduction accurately reflect the paper's contributions and scope?
    \answerYes{}
  \item Did you describe the limitations of your work?
    \answerYes{}
  \item Did you discuss any potential negative societal impacts of your work?
    \answerNA{}
  \item Have you read the ethics review guidelines and ensured that your paper conforms to them?
    \answerYes{}
\end{enumerate}

\item If you are including theoretical results...
\begin{enumerate}
  \item Did you state the full set of assumptions of all theoretical results?
    \answerYes{}
        \item Did you include complete proofs of all theoretical results?
    \answerYes{See the Appendix}
\end{enumerate}

\item If you ran experiments...
\begin{enumerate}
  \item Did you include the code, data, and instructions needed to reproduce the main experimental results (either in the supplemental material or as a URL)?
    \answerYes{In the zip file}
  \item Did you specify all the training details (e.g., data splits, hyperparameters, how they were chosen)?
    \answerYes{See the Appendix}
        \item Did you report error bars (e.g., with respect to the random seed after running experiments multiple times)?
    \answerYes{}
        \item Did you include the total amount of compute and the type of resources used (e.g., type of GPUs, internal cluster, or cloud provider)?
    \answerYes{}
\end{enumerate}

\item If you are using existing assets (e.g., code, data, models) or curating/releasing new assets...
\begin{enumerate}
  \item If your work uses existing assets, did you cite the creators?
    \answerYes{}
  \item Did you mention the license of the assets?
    \answerNA{}
  \item Did you include any new assets either in the supplemental material or as a URL?
    \answerNA{}
  \item Did you discuss whether and how consent was obtained from people whose data you're using/curating?
    \answerNA{}
  \item Did you discuss whether the data you are using/curating contains personally identifiable information or offensive content?
    \answerNA{}
\end{enumerate}

\item If you used crowdsourcing or conducted research with human subjects...
\begin{enumerate}
  \item Did you include the full text of instructions given to participants and screenshots, if applicable?
    \answerNA{}
  \item Did you describe any potential participant risks, with links to Institutional Review Board (IRB) approvals, if applicable?
    \answerNA{}
  \item Did you include the estimated hourly wage paid to participants and the total amount spent on participant compensation?
    \answerNA{}
\end{enumerate}

\end{enumerate}

%%%%%%%%%%%%%%%%%%%%%%%%%%%%%%%%%%%%%%%%%%%%%%%%%%%%%%%%%%%%

\newpage
\appendix

\section*{Broader Impact}

Graph Neural Networks are used in many applications with potential societal implications: predictions on social networks, drug design, computational chemistry and materials science, traffic predictions, etc. In many of these applications, the model may be faced with distribution shifts that may lead to a decline in performance. This work is a step towards understanding and estimating such behavior, by understanding and formalizing what kinds of distribution shifts may impact the model.

Indeed, robustness is closely related to fairness, when the distribution shifts are associated with different demographic groups \citep{chuang2020fair, barocas2017fairness, zemel2013learning}. Among the above applications, this may be particularly an issue with social network analysis, and possibly drug design and traffic prediction. In that case, the results in this paper may provide a basis for diagnostic tools be enabling to quantify the amount of potentially risky distribution shifts.

\section{Proof}

\subsection{Preliminaries}
\paragraph{Equivalence between OT and Wasserstein Distance}
Throughout the proof, we will frequently use the equivalence between optimal transport and Wasserstein distance to simplify the notation. Following the setting in section \ref{sec_ot_intro}, let $X = \{x_i\}_{i=1}^m$ and $Y = \{y_i\}_{j=1}^m$ be two multisets, both containing $m$ elements, and let $\gU(X)$ denote the uniform distribution over a multiset $X$. We first note the following equivalence:
\begin{align*}
    \OT_d(X, Y) := \min_{\gamma \in \Gamma(X, Y)} \langle C, \gamma \rangle = m \cdot  \OT_d^\ast(X, Y) = m \cdot \gW_d(\gU(X), \gU(Y))),
\end{align*}
where $\gW_d(P, Q)$ is the Wasserstein distance between $P, Q$ with cost function $d$ defined as follows: 
\begin{align*}
    \gW_d(P, Q) = \inf_{\pi \in \Pi(P, Q)} \int d(x, y) d \pi(x, y).
\end{align*}
The $\Pi$ denotes the set of measure couplings whose marginals are $P$ and
$Q$, respectively. 

\paragraph{Invariance to Additional Augmentation}
Here we show that unnormalized OT is invariant to ``blank'' augmentation, which is an important property for proving the main theorem.
\begin{lemma}
\label{lemma_uot_eq}
Assume we are given two multisets $X = \{x_i\}_{i=1}^m, Y = \{y_j\}_{j=1}^m$ with the same cardinality $m$, where $x_i, y_j \in \gX$ for all $i, j$. Let $d$ be a metric on $\gX$ and $\mymathbb{0} \in \gX$. Then we have
\begin{align*}
    \OT_{d}(X \cup (\mymathbb{0})^n, Y \cup (\mymathbb{0})^n) = \OT_{d}(X, Y).
\end{align*}
\end{lemma}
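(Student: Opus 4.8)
The plan is to show that an optimal transport plan between the augmented multisets can always be chosen so that the extra blank points are matched to each other. First I would fix an optimal plan $\gamma^\ast$ for $\OT_d(X \cup (\mymathbb{0})^n, Y \cup (\mymathbb{0})^n)$. Since $d$ is a metric, $d(\mymathbb{0},\mymathbb{0}) = 0$ and, more importantly, the triangle inequality gives $d(x_i, \mymathbb{0}) + d(\mymathbb{0}, y_j) \geq d(x_i, y_j)$ for all $i,j$. The idea is to argue that if in $\gamma^\ast$ some unit of mass flows from an $x_i$ to a blank on the $Y$-side, and (by the flow/marginal constraints) a corresponding unit of mass must flow from a blank on the $X$-side to some $y_j$, then we can reroute: send the $x_i$ mass directly to $y_j$, and match the two blanks to each other. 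By the triangle inequality this does not increase the cost, and it strictly decreases the amount of mass sitting on ``mixed'' blank-to-nonblank edges.

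The cleanest way to carry this out: consider the transport plan restricted to the blank rows and blank columns. The total mass leaving the $n$ blank points on the $X$-side equals $n$ (by the uniform marginal constraint, each blank has unit mass), and likewise the total mass arriving at the $n$ blanks on the $Y$-side equals $n$. Whatever mass among these $n$ blank-row units goes to genuine $Y$-points is exactly balanced, in aggregate, by mass from genuine $X$-points arriving at the $n$ blank columns. Using a standard cycle-cancellation / augmenting-path argument on the bipartite structure — pair up each blank-row-to-$y_j$ flow with a matching $x_i$-to-blank-column flow through these balancing blanks — I can replace each such length-two detour $x_i \to \mymathbb{0}_{\text{col}}$, $\mymathbb{0}_{\text{row}} \to y_j$ by the pair $x_i \to y_j$, $\mymathbb{0}_{\text{row}} \to \mymathbb{0}_{\text{col}}$, preserving all marginals and not increasing total cost (again by $d(x_i,\mymathbb{0}) + d(\mymathbb{0},y_j) \ge d(x_i,y_j)$ and $d(\mymathbb{0},\mymathbb{0})=0$). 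Iterating (or doing it all at once) yields an optimal plan in which all blank mass is matched blank-to-blank, whose restriction to $X \times Y$ is a feasible plan for $\OT_d(X,Y)$ with the same cost; hence $\OT_d(X \cup (\mymathbb{0})^n, Y \cup (\mymathbb{0})^n) \geq \OT_d(X,Y)$. The reverse inequality is immediate: given any optimal plan for $\OT_d(X,Y)$, extend it by matching the $n$ new blanks on each side to each other at zero cost, which is feasible for the augmented problem.

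The main obstacle is the bookkeeping in the rerouting step: transport plans here are doubly stochastic (up to the scaling by $m$), not permutations, so the ``detours'' can involve fractional mass spread across many blank points, and one has to be careful that the pairing of blank-row outflow with blank-column inflow can be done consistently without violating marginals. I expect to handle this by either (i) invoking Birkhoff–von Neumann to reduce to permutation plans, where the rerouting is a clean cycle swap, or (ii) phrasing it as: the bipartite graph of blank-involved flows decomposes into paths/cycles, and on each the swap is cost-nonincreasing. Everything else — the triangle inequality estimate and the trivial $\leq$ direction — is routine. I would also remark that the lemma extends immediately to the case where $X$ and $Y$ already have different sizes and $\rho$ is applied, since that is just the $n = \bigl| |X| - |Y| \bigr|$ instance followed by this invariance.
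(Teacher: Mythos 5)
Your proposal is correct and is essentially the paper's argument: the cheap direction comes from padding an optimal plan for $\OT_d(X,Y)$ with an identity matching of the blanks, and the reverse direction from the triangle inequality $d(x,\mymathbb{0})+d(\mymathbb{0},y)\ge d(x,y)$ together with $d(\mymathbb{0},\mymathbb{0})=0$, which lets you reroute every blank-mediated detour at no extra cost. Your execution of that rerouting step is in fact tighter than the paper's (which informally decomposes ``any change to the coupling'' into permute and decouple moves): reducing to permutation plans via Birkhoff--von Neumann, pairing the genuine-to-blank matches with the equally many blank-to-genuine matches, and swapping each pair is exactly the clean way to make the argument rigorous.
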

\begin{proof}
First, by construction, we already have
\begin{align*}
    \OT_{d}(X \cup (\mymathbb{0})^n, Y \cup (\mymathbb{0})^n) \leq \OT_{d}(X, Y),
\end{align*}
as we can always keep the original coupling padded with an identity matrix and get the same cost. Now suppose we adopt a new coupling for $\OT_{d}(X \cup (\mymathbb{0})^n, Y \cup (\mymathbb{0})^n)$ which might lead to smaller cost. Note that any change to the coupling can be decomposed into two steps: (1) permute, change the permutation in the original coupling and (2) decouple, change the coupling $(x - y)$ to $(x - \mymathbb{0}), (y - \mymathbb{0})$ for some pairing $(x, y)$. In the permutation step, the cost is always increasing as we are destroy the original optimal coupling. In the second step, the cost will also increase due to the triangle inequality of cost function $d$:
\begin{align*}
    d(x, y) + d(\mymathbb{0}, \mymathbb{0}) = d(x, y)  \leq d(x, \mymathbb{0}) + d(y, \mymathbb{0}).
\end{align*}
Therefore, we prove that the equality case must hold by contradiction. Note that the proof holds for any augmentation, not only zero vectors.
\end{proof}

\paragraph{Unnormalized OT is a Metric for Multisets}
Finally, we show that if the transportation cost $d$ is a pseudometric, then the unnormalized OT with blank augmentation $\rho$ is also a pseudometric for multisets. 

\begin{lemma}
\label{lemma_ot_metric}
If $d$ is a metric, then $\OT_d(\rho(\cdot, \cdot))$ is a metric over multisets that do not contain $\mymathbb{0}$. If $\mymathbb{0}$ is included, $\OT_d(\cdot, \cdot)$ is a pseudometric.
\end{lemma}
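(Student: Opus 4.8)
The plan is to verify the metric axioms for the map $(X,Y)\mapsto\OT_d(\rho(X,Y))$, and the first move is to reduce to the equal-cardinality case. By definition of $\rho$, the quantity $\OT_d(\rho(X,Y))$ is an unnormalized OT between two multisets of the common size $\max(|X|,|Y|)$. Given $X,Y,Z$, put $m=\max(|X|,|Y|,|Z|)$ and let $\tilde X,\tilde Y,\tilde Z$ be the augmentations of these multisets by copies of $\mymathbb{0}$ up to size $m$. Since $d(\mymathbb{0},\mymathbb{0})=0$, Lemma~\ref{lemma_uot_eq} gives $\OT_d(\rho(X,Y))=\OT_d(\tilde X,\tilde Y)$, and likewise for the pairs $(X,Z)$ and $(Y,Z)$; hence it suffices to establish the axioms for $\OT_d$ on multisets sharing a single cardinality $m$, where the feasible plans $\gamma\in\Gamma$ are exactly the $m\times m$ doubly stochastic matrices.

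Three of the axioms are then routine. Nonnegativity holds because $d\ge 0$. Symmetry follows from $\gamma\mapsto\gamma^\top$ being a cost-preserving bijection of $\Gamma(\tilde X,\tilde Y)$ onto $\Gamma(\tilde Y,\tilde X)$, using symmetry of $d$. The identity plan shows $\OT_d(\rho(X,X))\le\sum_i d(x_i,x_i)=0$. For the triangle inequality I would take optimal plans $\gamma_{XY}$ for $(\tilde X,\tilde Y)$ and $\gamma_{YZ}$ for $(\tilde Y,\tilde Z)$ and glue them along the (uniform) marginal on $\tilde Y$, i.e.\ form the doubly stochastic matrix $\gamma_{XZ}:=\gamma_{XY}\,\gamma_{YZ}\in\Gamma(\tilde X,\tilde Z)$; inserting $d(x_i,z_k)\le d(x_i,y_j)+d(y_j,z_k)$ into $\langle C_{XZ},\gamma_{XZ}\rangle=\sum_{i,j,k}d(x_i,z_k)\,\gamma_{XY}(i,j)\,\gamma_{YZ}(j,k)$ and collapsing the two marginal sums yields $\OT_d(\rho(X,Z))\le\OT_d(\rho(X,Y))+\OT_d(\rho(Y,Z))$.

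The delicate point — and the step I expect to be the main obstacle — is the identity of indiscernibles, where the hypothesis that the multisets omit $\mymathbb{0}$ enters. Suppose $\OT_d(\rho(X,Y))=\OT_d(\tilde X,\tilde Y)=0$. By the Birkhoff--von Neumann theorem the optimal plan is $\gamma=\sum_t\alpha_tP_t$ with $\alpha_t>0$ and $P_t$ permutation matrices; since $C\ge 0$, $\langle C,\gamma\rangle=0$ forces $\langle C,P_t\rangle=0$ for every $t$, so there is a permutation $\sigma$ with $d(\tilde x_i,\tilde y_{\sigma(i)})=0$, hence $\tilde x_i=\tilde y_{\sigma(i)}$ for all $i$ because $d$ is a genuine metric; thus $\tilde X=\tilde Y$ as multisets. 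If neither $X$ nor $Y$ contains $\mymathbb{0}$, then the multiplicity of $\mymathbb{0}$ in $\tilde X$ is exactly $m-|X|$ and in $\tilde Y$ is $m-|Y|$, so $\tilde X=\tilde Y$ forces $|X|=|Y|$ and then $X=Y$ after deleting the blanks. When $\mymathbb{0}$ may occur, this counting collapses (e.g.\ $\{a\}$ and $\{a,\mymathbb{0}\}$ are at distance $0$), so only the pseudometric conclusion remains, and all the other axioms above were proved without the no-$\mymathbb{0}$ assumption. In short, there is no hard inequality here; the real work is the bookkeeping that keeps ``structural'' zeros of $d$ separate from the padding blanks, supplied by Lemma~\ref{lemma_uot_eq} for the triangle inequality and by Birkhoff plus a multiplicity count for the degenerate case $\langle C,\gamma\rangle=0$.
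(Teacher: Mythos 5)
Your proof is correct, and its skeleton matches the paper's: both use Lemma~\ref{lemma_uot_eq} to pad all multisets with blanks up to a common cardinality (so that $\OT_d(\rho(\cdot,\cdot))$ becomes an unnormalized OT over doubly stochastic plans of a single size) and then verify the metric axioms there. Where you diverge is in how the two harder axioms are discharged. For the triangle inequality the paper passes to the normalized Wasserstein distance and cites the known fact that $\gW_d$ is a pseudometric whenever the ground cost is, whereas you prove it from scratch via the discrete gluing construction $\gamma_{XZ}=\gamma_{XY}\gamma_{YZ}$, which is indeed doubly stochastic and, after inserting $d(x_i,z_k)\le d(x_i,y_j)+d(y_j,z_k)$ and collapsing marginals, gives the bound directly; this buys a self-contained, elementary argument at the cost of redoing a standard lemma. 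For the identity of indiscernibles the paper simply asserts it "immediately holds" (noting only the counterexample $\OT_d(\rho(X,X\cup\mymathbb{0}))=0$ that motivates the no-$\mymathbb{0}$ hypothesis), while you supply the actual argument: Birkhoff--von Neumann to extract a zero-cost permutation, hence $\tilde X=\tilde Y$ as multisets, and then the multiplicity count on $\mymathbb{0}$ to recover $|X|=|Y|$ and $X=Y$. That step is genuine added content and is exactly where the no-blank hypothesis is used, so your write-up is, if anything, more complete than the paper's on this point.
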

\begin{proof}

The first two axioms, $\OT_d(\rho(X, X)) = 0$ if and only if $X = X$ and $\OT_d(\rho(X, Y)) = \OT_d(\rho(Y, X))$ immediately hold via the property of optimal transport and Wasserstein distance if $\mymathbb{0} \notin X, Y$. The constraint on $\mymathbb{0}$ is due to the fact $\OT_d(\rho(X, X \cup \mymathbb{0})) = 0$, which violates the first axiom. But practically the augmentations are specified to be different from the elements in the sets. 

Next, we prove the Triangle inequality of unnormalized OT. In particular, we will leverage the equivalence of unnormalized OT and Wasserstein distance as follows:
\begin{align*}
    &\OT_d(\rho(X, Y))
    \\
    &= \max(|X|, |Y|) \cdot \gW_d\left( \gU\left( X \bigcup \left(\mymathbb{0}\right)^{\max(|Y| - |X|, 0)} \right), \gU\left( Y \bigcup \left(\mymathbb{0}\right)^{\max(|X| - |Y|, 0)} \right) \right)
    \\
    &= \max(|X|, |Y|, |Z|) \cdot  \gW_{d}\left( \gU\left( X \bigcup \left(\mymathbb{0}\right)^{\max(\max(|Y|, |Z|) - |X|, 0)} \right), \gU\left( Y \bigcup \left(\mymathbb{0}\right)^{\max(\max(|X|, |Z|) - |Y|, 0)} \right) \right) \tag{Lemma \ref{lemma_uot_eq}}
    \\
    &\leq \max(|X|, |Y|, |Z|) \cdot \Big( \gW_{d}\left( \gU\left( X \bigcup \left(\mymathbb{0}\right)^{\max(\max(|Y|, |Z|) - |X|, 0)} \right), \gU\left( Z \bigcup \left(\mymathbb{0}\right)^{\max(\max(|X|, |Y|) - |Z|, 0)} \right) \right)
    \\
    &\quad + \gW_{d}\left( \gU\left( Z \bigcup \left(\mymathbb{0}\right)^{\max(\max(|X|, |Y|) - |Z|, 0)} \right), \gU\left( Y \bigcup \left(\mymathbb{0}\right)^{\max(\max(|X|, |Z|) - |Y|, 0)} \right) \right) \Big) \tag{Triangle Ineq of $\gW$}
    \\
    &= \max(|X|, |Z|) \cdot \gW_{d}\left( \gU\left( X \bigcup \left(\mymathbb{0}\right)^{\max(|Z| - |X|, 0)} \right), \gU\left( Z \bigcup \left(\mymathbb{0}\right)^{\max(|X| - |Z|, 0)} \right) \right)
    \\
    &\quad + \max(|Z|, |Y|) \cdot \gW_{d}\left( \gU\left( Z \bigcup \left(\mymathbb{0}\right)^{\max(|Y| - |Z|, 0)} \right), \gU\left( Y \bigcup \left(\mymathbb{0}\right)^{\max(|Z| - |Y|, 0)} \right) \right) \tag{Lemma \ref{lemma_uot_eq}}
    \\
    &= \OT_d(\rho(X, Z)) + \OT_d(\rho(Z, Y))
\end{align*}
The inequality holds as Wasserstein distance is a pseudometric if the transportation cost is a pseudometric \citep{singh2018minimax, baldan2017coalgebraic}.
\end{proof}

\subsection{Tree Mover's Distance is a Pseudometric}

Via Lemma \ref{lemma_ot_metric}, to show TMD is a pseudometric, we will focus on proving that the transportation cost $\TD$ is a pseudometric between two trees.

\begin{lemma}
\label{lemma_tree_metric}
The distance $\TD_w(\rho(\cdot, \cdot))$ is a metric for two rooted trees if $w(\cdot) > 0$ and node features do not contain zero vectors, where $\rho$ is the blank tree augmentation. Otherwise, it is a pseudometric.
\end{lemma}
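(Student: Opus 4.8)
The plan is to prove Lemma~\ref{lemma_tree_metric} by induction on the depth $L$, pushing the (pseudo)metric property through each layer of the hierarchical recursion with the help of Lemma~\ref{lemma_ot_metric}. First I would record that the recursion in Definition~\ref{def_ot_tree} is well-founded: at every recursive call the quantity $L = \max(\Dep(T_a),\Dep(T_b))$ strictly decreases, since the subtrees in $\gT_{r_a},\gT_{r_b}$ have depth at most $L-1$ and the blank trees inserted by $\rho$ have depth $1 \le L-1$ whenever $L>1$. Hence $\TD_w$ is well-defined and induction on $L$ is legitimate. Symmetry and non-negativity of $\TD_w$ are immediate --- from symmetry of $\|\cdot\|$, of $\max$, of $\rho$, and of $\OT_d$ in its two arguments --- so the substance is the identity of indiscernibles and the triangle inequality.

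For the base case $L=1$, both trees are single nodes and $\TD_w(T_a,T_b)=\|x_{r_a}-x_{r_b}\|$, the norm-induced metric on $\sR^p$; it remains a metric after adjoining the blank tree $T_\mymathbb{0}$ (feature $\mymathbb{0}_p$) precisely when node features avoid $\mymathbb{0}_p$, and only a pseudometric otherwise. For the inductive step $L>1$, I would use that a depth-$L$ feature-labelled rooted tree $T$ is exactly encoded by the pair $(x_{r_T},\gT_{r_T})$ of its root feature and the multiset of its children's subtrees, and that two such trees are equal iff both coordinates agree. By the inductive hypothesis $\TD_w$ is a (pseudo)metric on trees of depth at most $L-1$, so Lemma~\ref{lemma_ot_metric} makes $\OT_{\TD_w}(\rho(\cdot,\cdot))$ a (pseudo)metric on multisets of such trees --- a genuine metric on multisets avoiding $T_\mymathbb{0}$, which holds here whenever node features avoid the zero vector (a real subtree then has a nonzero root feature and so differs from the blank tree). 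Since $\TD_w(T_a,T_b) = \|x_{r_a}-x_{r_b}\| + w(L)\cdot \OT_{\TD_w}(\rho(\gT_{r_a},\gT_{r_b}))$ is a nonnegative-weighted sum of two pseudometrics on trees --- one depending only on the root feature, the other pulled back from the multiset (pseudo)metric along $T\mapsto\gT_{r_T}$ --- it is a pseudometric; and it is a metric exactly when $w(L)>0$ forces $\OT_{\TD_w}(\rho(\gT_{r_a},\gT_{r_b}))=0$, hence $\gT_{r_a}=\gT_{r_b}$, and the no-zero-feature assumption then lets one conclude $T_a=T_b$. This matches the statement, and $\TMD_w^L$ then inherits the pseudometric property by one further application of Lemma~\ref{lemma_ot_metric} with ground metric $\TD_w$.

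The step I expect to require the most care is the interaction of the depth-dependent weight $w(L)$ with the blank-tree augmentation when the two trees have unequal depth: then $L=\max(\Dep(T_a),\Dep(T_b))$ and the recursion compares depth-$\le L-1$ subtrees against depth-$1$ blank trees, so before invoking the triangle inequality at a given level one must check that the \emph{same} weight $w(\cdot)$ is charged on all three sides. This works out because $\max(1,k)=k$, so a blank tree never lowers the charged level, and because for computation trees the subtree depths are stratified by level; isolating this point cleanly, together with carefully tracking that $\OT_{\TD_w}(\rho(\cdot,\cdot))=0$ implies multiset equality through $\rho$ --- exactly where Lemma~\ref{lemma_ot_metric} needs the absence of $T_\mymathbb{0}$ --- is the crux. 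Everything else is routine propagation of the metric axioms through sums and optimal transport.
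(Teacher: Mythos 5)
Your proposal follows essentially the same route as the paper's proof: induction on tree depth, invoking Lemma~\ref{lemma_ot_metric} to lift the (pseudo)metric property of $\TD_w$ on shallower trees to the blank-augmented multiset level, and then combining the root-feature distance with the $w$-weighted OT term (the paper simply runs two separate inductions, one for identity of indiscernibles and one for the triangle inequality, instead of your single unified one). The unequal-depth/weight-consistency subtlety you flag is genuine, and the paper sidesteps it exactly as you suggest, by effectively comparing trees of equal depth at every level (the stratification that holds for computation trees and for blank-tree padding).
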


\begin{proof}
We first prove that $\TD_w(T_a, T_b) = 0$ if and only if $T_a = T_b$ by induction. In particular, we will focus on the case Depth$(T_a) = $ Depth$(T_b)$, otherwise the statement trivially holds. When depth $=1$, the $\TD_w$ reduces to Euclidean distance, which is a metric. Suppose the statement holds for depth-$k$ trees. Given two tree $T_a, T_b$ with depth equals to $k+1$, their distance is
\begin{align*}
    \TD_w(T_a,T_b)  = \|r_{T_a} - r_{T_b} \| + w(\Dep(T_a)) \cdot \OT_{\TD_w}\left(\rho(\gT_{r_{T_a}}, \gT_{r_{T_b}}) \right).
\end{align*}
Since $\TD_w$ is a metric for depth-k tree by assumption, and $\OT_{\TD_w}(\rho(\cdot, \cdot))$ is a metric if $\TD_w$ is a metric via Lemma \ref{lemma_ot_metric}, $\TD_w(T_a,T_b) = 0$ if and only if $r_{T_a} = r_{T_b}$ and $\gT_{r_{T_a}} = \gT_{r_{T_b}}$, which completes the proof of induction. Nevertheless, similar to Lemma \ref{lemma_ot_metric}, this does not hold if the nodes contains $\mymathbb{0}$, which should not be the case practically. Otherwise, one can simply add a small values to node features to distinguish them from the zero vector.

For the triangle inequality, we give a proof by induction on depth. For the base case depth $= 1$, the tree mover's distance is simply the Wasserstein-1 distance between the distribution of augmented node feature vectors and the transportation cost $\TD_w(\cdot, \cdot)$ reduces to the Euclidean distance, which is a metric and satisfies the Triangle inequality. Next, we show that if $\TD_w(\cdot, \cdot)$ satisfies the Triangle ineuqality for depth-k trees, it is also a metric for depth-(k+1) trees. For trees $T_a$ and $T_b$ with depth $k+1$, introduced the third tree $T_c$ with depth $k+1$, we have
\begin{align*}
    \TD_w(T_a,T_b)  =& \|r_{T_a} - r_{T_b} \| + w(k+1) \OT_{\TD_w}\left(\rho(\gT_{r_{T_a}}, \gT_{r_{T_b}}) \right)
    \\
    \leq& \|r_{T_a} - r_{T_c} \| + \|r_{T_c} - r_{T_b} \| + w(k+1) \OT_{\TD_w} \left(\rho(\gT_{r_{T_a}}, \gT_{r_{T_b}}) \right)
    \\
    \leq&  \|r_{T_a} - r_{T_c} \| + \|r_{T_c} - r_{T_b} \| + w(k+1) \OT_{\TD_w}\left(\rho(\gT_{r_{T_a}}, \gT_{r_{T_c}}) \right) 
    \\
    &\quad\quad\quad + w(k+1) \OT_{\TD_w}\left(\rho(\gT_{r_{T_c}}, \gT_{r_{T_b}}) \right) \tag{Induction hypothesis}
    \\
    =& \TD_w(T_a,T_c) + \TD_w(T_c,T_b).
\end{align*}
The second inequality holds since OT is a pseudometric for multisets over depth-$k$ trees as $\TD_w(\cdot, \cdot)$ satisfies the Triangle ineuqality for depth-k trees via induction hypothesis. Since $\TD_w$ satisfies the triangle inequality for depth-$k+1$ trees, via Lemma \ref{lemma_ot_metric}, TMD also satisfies the triangle inequality, which completes the proof by mathematical induction.
\end{proof}

\subsection{WL Proof}
\begin{theorem} [Discriminative Power of TMD]
\label{thm_tmd_power}
If two graphs $G_a = (V_a, E_a)$, $G_b = (V_b, E_b)$ are determined to be non-isomorphic in WL iteration $L$ and $w(l) > 0 $ for all $0<l \leq L+1$ and $\mymathbb{0} \notin V_a, V_b$, then $\TMD_{w}^{L+1}(G_a,G_b) > 0$.
\end{theorem}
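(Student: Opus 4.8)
The plan is to reduce the statement to the classical description of color refinement via computation trees, and then read off positivity of $\TMD_w^{L+1}$ from the metric properties already established in Lemmas~\ref{lemma_tree_metric} and~\ref{lemma_ot_metric}. Concretely, it suffices to show that if $G_a$ and $G_b$ are separated by $L$ rounds of the WL test, then the multisets of depth-$(L+1)$ computation trees $\gT_{G_a}^{L+1}$ and $\gT_{G_b}^{L+1}$ are \emph{unequal} as multisets of feature-labelled rooted trees up to isomorphism; given this, the conclusion follows because $\TMD_w^{L+1}(G_a,G_b)=\OT_{\TD_w}(\rho(\gT_{G_a}^{L+1},\gT_{G_b}^{L+1}))$ is, under the present hypotheses, a genuine metric evaluated at two distinct points.

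The heart of the argument is an induction on $l$ identifying the WL color $c^l(v)$ after $l$ refinement rounds with the isomorphism type of $T_v^{l+1}$; that is, $c^l(u)=c^l(v)$ iff $T_u^{l+1}\cong T_v^{l+1}$. The base case $l=0$ is immediate from $c^0(v)=x_v$ and $T_v^1=v$. For the step, recall that $c^{l+1}(v)=\mathrm{hash}\bigl(c^l(v),\{\!\{c^l(u):u\in\gN(v)\}\!\}\bigr)$ while $T_v^{l+2}$ is obtained by attaching to the root $v$ the multiset of subtrees $\{\!\{T_u^{l+1}:u\in\gN(v)\}\!\}$; assuming (as one may) that the hash is injective on its domain, and using the induction hypothesis to translate equalities of colors and of color-multisets into isomorphisms and equalities of tree-multisets, one obtains $c^{l+1}(u)=c^{l+1}(v)$ iff $T_u^{l+2}\cong T_v^{l+2}$ (for the backward direction one also uses that an isomorphism of depth-$(l+2)$ trees restricts to one of depth-$(l+1)$ trees). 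Forming the induced multisets over $V_a$ and $V_b$, ``$G_a,G_b$ distinguished in WL iteration $L$'' means precisely $\{\!\{[T_v^{L+1}]:v\in V_a\}\!\}\neq\{\!\{[T_v^{L+1}]:v\in V_b\}\!\}$, i.e.\ $\gT_{G_a}^{L+1}\neq\gT_{G_b}^{L+1}$.

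It remains to convert this inequality of multisets into a strictly positive distance. Since $w(l)>0$ for all $0<l\le L+1$ and no node feature equals $\mymathbb{0}$, Lemma~\ref{lemma_tree_metric} says $\TD_w$ is a metric on the rooted trees under consideration, so $\TD_w(T,T')=0$ exactly when $T\cong T'$; moreover the multisets $\gT_{G_a}^{L+1},\gT_{G_b}^{L+1}$ contain no blank tree, since every computation-tree root carries a nonzero feature, so Lemma~\ref{lemma_ot_metric} applies and $\OT_{\TD_w}(\rho(\cdot,\cdot))$ is a metric on such multisets. In particular $\OT_{\TD_w}(\rho(\gT_{G_a}^{L+1},\gT_{G_b}^{L+1}))=0$ would force $\gT_{G_a}^{L+1}=\gT_{G_b}^{L+1}$, contradicting the previous paragraph; hence $\TMD_w^{L+1}(G_a,G_b)>0$. (The case $|V_a|\neq|V_b|$ is subsumed here, since then the two multisets trivially differ in size.)

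I expect the only real obstacle to be making the second paragraph airtight: one must fix a concrete injective hashing so that WL colors encode exactly the tree isomorphism type and nothing more, and one must keep the ``depth'' bookkeeping straight so that $L$ WL iterations correspond to depth-$(L+1)$ trees in the sense of the paper's definition of computation trees. Everything else — the passage from ``unequal multisets of trees'' to ``positive TMD'' — is a direct application of the metric lemmas, and the role of the hypotheses $w>0$ and $\mymathbb{0}\notin V_a,V_b$ is precisely to guarantee those lemmas apply in their metric (rather than merely pseudometric) form.
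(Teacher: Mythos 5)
Your proposal is correct and follows essentially the same route as the paper's own proof: an induction identifying WL colours after $l$ rounds with depth-$(l{+}1)$ computation-tree isomorphism types, followed by positivity of the hierarchical OT via Lemmas~\ref{lemma_tree_metric} and~\ref{lemma_ot_metric} under the hypotheses $w>0$ and $\mymathbb{0}\notin V_a,V_b$. If anything, yours is slightly more careful than the paper's sketch, since you make explicit the passage from ``colour multisets differ'' to ``multisets of trees differ'' (only the direction ``isomorphic trees imply equal colours'' is actually needed, so the injective-hash assumption is harmless) and you note that unequal cardinalities are subsumed by the blank-tree augmentation.
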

\begin{proof}
We will show that if two nodes have the same subtree, then their WL labels will be the same by induction. The statement holds when depth is 1, as all the WL labels are the same and all the subtrees have the same single node. Suppose the statement holds for depth-$k$ tree. If WL identifies two graphs are non-isomorphic at iteration $k+1$, this means there are at least a pair of nodes whose neighbors have different WL label at $k$-th iteration. This implies that the neightbors have different subtree via induction hypothesis. Therefore, the depth-$k+1$ subtree of the node, which is constructed by appending the subtrees of neightbors to a new node, will also be different. Therefore, the TMD between the subtrees will be greater than zero via Lemma \ref{lemma_tree_metric}, implying that $\TMD_\lambda^{k+1}$ also determine two graphs are non-isomorphic for all $\lambda > 1$. Again, we add the zero vector constraint for the same reason as Lemma \ref{lemma_ot_metric} and \ref{lemma_tree_metric}.
\end{proof}

\subsection{Lipschitz Bound of GNN proof}
\begin{proof}
For simplicity, we set $\epsilon = 1$ through out the proof. We first bound the difference in prediction based on the embedding in the last layer:
\begin{align*}
    \left \| h(G_a) - h(G_b) \right \| &=\left\| \phi^{(L+1)}\left( \sum_{i \in V_a} z_i^{(L)} \right) - \phi^{(L+1)} \left( \sum_{j \in V_b} z_j^{(L)} \right)  \right \|
    \\
    &\leq K_\phi^{(L+1)} \cdot \left\| \sum_{i \in V_a} z_i^{(L)} - \sum_{j \in V_b} z_j^{(L)} \right \|. \tag{Definition of Lipschitz Condition}
\end{align*}

Let $V_a^\rho$ and $V_b^\rho$ be two multisets of nodes after blank tree augmentation: $(V_a^\rho, V_b^\rho) = \rho(V_a, V_b)$, we have
\begin{align}
    \left\| \sum_{i \in V_a} z_i^{(L)} - \sum_{j \in V_b} z_j^{(L)}  \right \| &= \left\| \sum_{i \in V_a^\rho, j \in V_b^\rho} T_{i,j}^{(L, V_a V_b)} \left(z_i^{(L)} -  z_j^{(L)} \right) \right \|
    \\
    &\leq \sum_{i \in V_a^\rho, j \in V_b^\rho} T_{i,j}^{(L, V_a V_b)} \left\|   z_i^{(L)} -  z_j^{(L)}  \right \|
\end{align}
where $T^{(L, V_a^\rho V_b^\rho)}$ is a transportation plan between $V_a^\rho$ and $V_b^\rho$. Note that the inequality holds for any valid transportation plan. Let $\Delta_{T^{(L, V_a^\rho V_b^\rho)}}^{(L)} = \sum_{i \in V_a^\rho, j \in V_b^\rho} T_{i,j}^{(L, V_a^\rho V_b^\rho)} \left\|   z_i^{(L)} -  z_j^{(L)}  \right \|$, we have
\begin{align*}
    &\Delta_{T^{(L, V_a^\rho V_b^\rho)}}^{(L)}
    \\
    \leq& \sum_{i \in V_a^\rho, j \in V_b^\rho} T_{i,j}^{(L, V_a^\rho V_b^\rho)} \left[ \left\| \phi^{(L)} \left( z_i^{(L-1)} +  \sum_{i' \in \gN(i)}[z_i^{(L-1)}] \right) - \phi^{(L)} \left(z_j^{(L-1)} + \sum_{j' \in \gN(i)}[z_j^{(L-1)} ] \right)  \right\| \right]
    \\
    \leq& K_\phi^{(l)} \left( \sum_{i \in V_a^\rho, j \in V_b^\rho} T_{i,j}^{(L, V_a^\rho V_b^\rho)} \left[ \left\|  z_i^{(L-1)} -  z_j^{(L-1)} \right\| + \left\|  \sum_{i' \in \gN(i)}z_i^{(L-1)}   - \sum_{j' \in \gN(i)}z_j^{(L-1)}  \right\|   \right] \right)
    \\
    \leq& K_\phi^{(l)} \Big( \sum_{i \in V_a^\rho, j \in V_b^\rho} T_{i,j}^{(L, V_a^\rho V_b^\rho)} \Bigg[ \left\|  z_i^{(L-1)} -  z_j^{(L-1)} \right\| +   \sum_{i' \in {\gN(i)}^\rho, j' \in {\gN(j)}^\rho} T_{i',j'}^{(L-1, {\gN(i)}^\rho {\gN(j)}^\rho)} \left[ \left\| z_{i'}^{(L-1)} - z_{j'}^{(L-1)} \right\| \right] \Big)
    \\
    =&  K_\phi^{(l)} \Big( \sum_{i \in V_a^\rho, j \in V_b^\rho} T_{i,j}^{(L, V_a^\rho V_b^\rho)} \Bigg[ \left\|  z_i^{(L-1)} -  z_j^{(L-1)} \right\| +  \Delta_{T^{(L-1, {\gN(i)}^\rho {\gN(j)}^\rho)}}^{(L-1)} \Big)
\end{align*}
Here we introduce another transportation plan $T_{i',j'}^{(L-1, {\gN(i)}^\rho {\gN(j)}^\rho)}$ to pair augmented ${\gN(i)}^\rho$ and ${\gN(j)}^\rho$ for all $i \in V_a^\rho, j \in V_b^\rho$ at layer $L-1$. We then further bound first term using similar strategy:
\begin{align*}
    & \sum_{i \in V_a^\rho, j \in V_b^\rho} T_{i,j}^{(L, V_a V_b)} \left[ \left\|  z_i^{(L-1)} -  z_j^{(L-1)} \right\| \right] 
    \\
    =& \sum_{i \in V_a^\rho, j \in V_b^\rho} T_{i,j}^{(L, V_a V_b)} \left[ \left\| \phi^{(L-1)} \left( z_i^{(L-2)} +  \sum_{i' \in \gN(i)}z_{i'}^{(L-2)} \right) - \phi^{(L-1)} \left(z_j^{(L-2)} + \sum_{j' \in \gN(j)}z_{j'}^{(L-2)}  \right)  \right\| \right]
    \\
    \leq&  K_\phi^{(L-1)}  \Bigg( \sum_{i \in V_a^\rho, j \in V_b^\rho} T_{i,j}^{(L, V_a V_b)} \Bigg( \left\|  z_i^{(L-2)} -  z_j^{(L-2)} \right\| + \sum_{i' \in {\gN(i)}^\rho, j' \in {\gN(j)}^\rho} T_{i',j'}^{(L-1, {\gN(i)}^\rho {\gN(j)}^\rho)}  \left\| z_{i'}^{(L-2)} - z_{j'}^{(L-2)} \right\| \Bigg)  \Bigg).
    \\
    =&  K_\phi^{(L-1)}  \Bigg( \sum_{i \in V_a^\rho, j \in V_b^\rho} T_{i,j}^{(L, V_a V_b)} \Bigg( \left\|  z_i^{(L-2)} -  z_j^{(L-2)} \right\| + \Delta_{T^{(L-1, {\gN(i)}^\rho {\gN(j)}^\rho)}}^{(L-2)} \Bigg)  \Bigg).
\end{align*}
Note that we still use the same transportation plan $T_{i',j'}^{(L-1, {\gN(i)}^\rho {\gN(j)}^\rho)} $ to bound the sum difference. Apply this step recursively, the first term will eventually become $\|x_i - x_j \|$ as
\begin{align*}
K_\phi^{(L)} \left\|  z_i^{(L-1)} -  z_j^{(L-1)} \right\| \rightarrow K_\phi^{(L)} K_\phi^{(L-1)} \left\|  z_i^{(L-2)} -  z_j^{(L-2)} \right\| \rightarrow \prod_{m=1}^L K_\phi^{(m)} \left\|  z_i^{(0)} -  z_j^{(0)} \right\| = \prod_{m=1}^L K_\phi^{(m)} \|x_i - x_j \|,
\end{align*}
and the bound will become
\begin{align*}
    \Delta_{\pi_{V_a, V_b}^l}^{(l)} 
    \leq \sum_{i \in V_a^\rho, j \in V_b^\rho} T_{i,j}^{(L, V_a V_b)} \Bigg[ &\left( \prod_{m=1}^L K_\phi^{(m)} \right)  \|x_i - x_j \| +  \sum_{k=1}^L \left(\prod_{m=1}^k K_\phi^{(L+1-m)} \right)  \cdot\Delta_{T^{(L-1, {\gN(i)}^\rho {\gN(j)}^\rho)}}^{(L-k)} \Bigg] 
    \\
    =\E_{(i,j) \sim \pi_{V_a, V_b}^l} \Bigg[ &\left( \prod_{m=1}^L K_\phi^{(m)} \right)  \|x^i - x^j \| 
    \\
    + & \underbrace{ K_\phi^{(L)} \sum_{i' \in {\gN(i)}^\rho, j' \in {\gN(j)}^\rho} T_{i',j'}^{(L-1, {\gN(i)}^\rho {\gN(j)}^\rho)}  \left\| z_{i'}^{(L-1)} - z_{j'}^{(L-1)} \right\|}_{\circled{1}}
    \\
    + &\underbrace{ K_\phi^{(L)} K_\phi^{(L-1)}  \sum_{i' \in {\gN(i)}^\rho, j' \in {\gN(j)}^\rho} T_{i',j'}^{(L-1, {\gN(i)}^\rho {\gN(j)}^\rho)} \|z_{i'}^{(L-2)} - z_{j'}^{(L-2)} \|  }_{\circled{2}}
    \\
    &\cdots 
    \\
    + &\underbrace{ K_\phi^{(L)} K_\phi^{(L-1)} \cdots K_\phi^{(1)}  \sum_{i' \in {\gN(i)}^\rho, j' \in {\gN(j)}^\rho} T_{i',j'}^{(L-1, {\gN(i)}^\rho {\gN(j)}^\rho)} \|x_{i'} - x_{j'} \|  }_{\circled{$L$}} \Bigg].
\end{align*}
Next, we will bound \circled{1}, \circled{2}, $\cdots$ to \circled{$l$} using a similar way as described above. To start, we introduce the second coupling $\pi_{\gN(i'), \gN(j')}^{l-2}$ for all $i', j' \sim \pi_{\gN(i), \gN(j)}^{l-1}$:
\begin{align*}
    \circled{1} 
    \leq & K_\phi^{(L)} \sum_{i' \in {\gN(i)}^\rho, j' \in {\gN(j)}^\rho} T_{i',j'}^{(L-1, {\gN(i)}^\rho {\gN(j)}^\rho)} \Bigg[ K_\phi^{(L-1)} K_\phi^{(L-2)} \cdots K_\phi^{(1)} \|x_{i'} - x_{j'} \|  
    \\
    &+  K_\phi^{(L-1)} \sum_{i'' \in {\gN(i')}^\rho, j'' \in {\gN(j')}^\rho} T_{i'',j''}^{(L-2, {\gN(i')}^\rho {\gN(j')}^\rho)} \left[ \|z_{i''}^{L-2} - z_{j''}^{L-2} \|  \right]
    \\
    &+  K_\phi^{(L-1)} K_\phi^{(L-2)} \sum_{i'' \in {\gN(i')}^\rho, j'' \in {\gN(j')}^\rho} T_{i'',j''}^{(L-2, {\gN(i')}^\rho {\gN(j')}^\rho)} \left[ \|z_{i''}^{L-3} - z_{j''}^{L-3} \| \right]
    \\
    &\cdots
    \\
    &+  K_\phi^{(L-1)} K_\phi^{(L-2)} \cdots K_\phi^{(1)} \sum_{i'' \in {\gN(i')}^\rho, j'' \in {\gN(j')}^\rho} T_{i'',j''}^{(L-2, {\gN(i')}^\rho {\gN(j')}^\rho)} \left[ \|x_{i''} - x_{j''} \| \right] \Bigg]
    \\
    &\hspace{-30pt}:= bound(\circled{1})
\end{align*}

\begin{align*}
    \circled{2} 
    \leq & K_\phi^{(L)} K_\phi^{(L-1)} \sum_{i' \in {\gN(i)}^\rho, j' \in {\gN(j)}^\rho} T_{i',j'}^{(L-1, {\gN(i)}^\rho {\gN(j)}^\rho)} \Bigg[ K_\phi^{(L-2)} K_\phi^{(L-1)} \cdots K_\phi^{(1)} \|x_{i'} - x_{j'} \|  
    \\
    &+  K_\phi^{(L-2)} \sum_{i'' \in {\gN(i')}^\rho, j'' \in {\gN(j')}^\rho} T_{i'',j''}^{(L-2, {\gN(i')}^\rho {\gN(j')}^\rho)} \left[ \|z_{i''}^{L-3} - z_{j''}^{L-3} \| \right]
    \\
    & + K_\phi^{(L-2)} K_\phi^{(L-3)} \sum_{i'' \in {\gN(i')}^\rho, j'' \in {\gN(j')}^\rho} T_{i'',j''}^{(L-2, {\gN(i')}^\rho {\gN(j')}^\rho)} \left[ \|z_{i''}^{L-4} - z_{j''}^{L-4} \| \right]
    \\
    & \cdots
    \\
    &+  K_\phi^{(L-2)} K_\phi^{(L-3)} \cdots K_\phi^{(1)} \sum_{i'' \in {\gN(i')}^\rho, j'' \in {\gN(j')}^\rho} T_{i'',j''}^{(L-2, {\gN(i')}^\rho {\gN(j')}^\rho)} \left[ \|x_{i''} - x_{j''} \| \right] \Bigg]
    \\
    =  & K_\phi^{(L)} \sum_{i'' \in {\gN(i')}^\rho, j'' \in {\gN(j')}^\rho} T_{i'',j''}^{(L-2, {\gN(i')}^\rho {\gN(j')}^\rho)} \Bigg[ K_\phi^{(L-1)} K_\phi^{(L-2)}  \cdots K_\phi^{(1)} \|x_{i'} - x_{j'} \|  
    \\
    &+  K_\phi^{(L-1)} K_\phi^{(L-2)}\sum_{i'' \in {\gN(i')}^\rho, j'' \in {\gN(j')}^\rho} T_{i'',j''}^{(L-2, {\gN(i')}^\rho {\gN(j')}^\rho)} \left[ \|z_{i''}^{L-3} - z_{j''}^{L-3} \| \right]
    \\
    & + K_\phi^{(L-1)}  K_\phi^{(L-2)} K_\phi^{(L-3)} \sum_{i'' \in {\gN(i')}^\rho, j'' \in {\gN(j')}^\rho} T_{i'',j''}^{(L-2, {\gN(i')}^\rho {\gN(j')}^\rho)} \left[ \|z_{i''}^{L-4} - z_{j''}^{L-4} \| \right]
    \\
    & \cdots
    \\
    & + K_\phi^{(L-1)}  K_\phi^{(L-2)} \cdots K_\phi^{(1)} \sum_{i'' \in {\gN(i')}^\rho, j'' \in {\gN(j')}^\rho} T_{i'',j''}^{(L-2, {\gN(i')}^\rho {\gN(j')}^\rho)} \left[ \|x_{i''} - x_{j''} \| \right] \Bigg]
    \\
    &\hspace{-30pt}:= bound(\circled{2})
\end{align*}

We can see that the bounds for \circled{1} and  \circled{2} only differ in the term $ K_\phi^{(l-1)} \sum_{i'' \in {\gN(i')}^\rho, j'' \in {\gN(j')}^\rho} T_{i'',j''}^{(L-2, {\gN(i')}^\rho {\gN(j')}^\rho)} \left[ \|z_{i''}^{L-2} - z_{j''}^{L-2} \right]$. Moreover, one can show that $bound(\circled{k})$ and bound(\circled{k+1}) only differ in $ K_\phi^{(l-1)} \cdots K_\phi^{(l-k)} \sum_{i'' \in {\gN(i')}^\rho, j'' \in {\gN(j')}^\rho} T_{i'',j''}^{(L-2, {\gN(i')}^\rho {\gN(j')}^\rho)} \left[ \|z_{i''}^{L-k-1} - z_{j''}^{L-k-1} \right]$. Therefore, we can merge bound \circled{1} to bound \circled{$l$} and get
\begin{align*}
    \sum_{i=1}^l bound \circled{i} = & K_\phi^{(L)} \sum_{i' \in {\gN(i)}^\rho, j' \in {\gN(j)}^\rho} T_{i',j'}^{(L-1, {\gN(i)}^\rho {\gN(j)}^\rho)} \Bigg[ L \cdot K_\phi^{(L-1)} K_\phi^{(L-2)} \cdots K_\phi^{(1)} \|x_{i'} - x_{j'} \|  
    \\
    &+  K_\phi^{(L-1)} \sum_{i'' \in {\gN(i')}^\rho, j'' \in {\gN(j')}^\rho} T_{i'',j''}^{(L-2, {\gN(i')}^\rho {\gN(j')}^\rho)} \left[ \|z_{i''}^{L-2} - z_{j''}^{L-2} \|  \right]
    \\
    &+2  K_\phi^{(L-1)} K_\phi^{(L-2)} \sum_{i'' \in {\gN(i')}^\rho, j'' \in {\gN(j')}^\rho} T_{i'',j''}^{(L-2, {\gN(i')}^\rho {\gN(j')}^\rho)} \left[ \|z_{i''}^{L-3} - z_{j''}^{L-3} \| \right]
    \\
    &\cdots
    \\
    &+(L-1) K_\phi^{(L-1)} K_\phi^{(L-2)} \cdots K_\phi^{(1)} \sum_{i'' \in {\gN(i')}^\rho, j'' \in {\gN(j')}^\rho} T_{i'',j''}^{(L-2, {\gN(i')}^\rho {\gN(j')}^\rho)} \left[ \|x_{i''} - x_{j''} \| \right] \Bigg]
\end{align*}

Plugging the bound back yields
\begin{align*}
    &= \sum_{i \in V_a^\rho, j \in V_b^\rho} T_{i,j}^{(L, V_a V_b)} \Bigg[ 
    \left( \prod_{m=1}^L K_\phi^{(m)} \right)  \|x^i - x^j \|  
    \\
    &\hspace{100pt}+  K_\phi^{(L)} \sum_{i' \in {\gN(i)}^\rho, j' \in {\gN(j)}^\rho} T_{i',j'}^{(L-1, {\gN(i)}^\rho {\gN(j)}^\rho)} \bigg[L \cdot K_\phi^{(L-1)} K_\phi^{(L-2)} \cdots K_\phi^{(1)} \|x_{i'} - x_{j'} \|  
    \\
    &\hspace{100pt} +\underbrace{K_\phi^{(L-1)} \sum_{i'' \in {\gN(i')}^\rho, j'' \in {\gN(j')}^\rho} T_{i'',j''}^{(L-2, {\gN(i')}^\rho {\gN(j')}^\rho)} \left[ \|z_{i''}^{L-2} - z_{j''}^{L-2} \| \right]}_{\circled{1'}}
    \\
    &\hspace{100pt} + 2 \underbrace{K_\phi^{(L-1)} K_\phi^{(L-2)} \sum_{i'' \in {\gN(i')}^\rho, j'' \in {\gN(j')}^\rho} T_{i'',j''}^{(L-2, {\gN(i')}^\rho {\gN(j')}^\rho)} \left[ \|z_{i''}^{L-3} - z_{j''}^{L-3} \| \right]}_{\circled{2'}}
    \\
    &\hspace{100pt} \cdots
    \\
    &\hspace{100pt} + (L-1) \underbrace{K_\phi^{(L-1)} K_\phi^{(L-2)} \cdots K_\phi^{(1)} \sum_{i'' \in {\gN(i')}^\rho, j'' \in {\gN(j')}^\rho} T_{i'',j''}^{(L-2, {\gN(i')}^\rho {\gN(j')}^\rho)} \left[ \|x_{i''} - x_{j''} \| \right] }_{\small \circled{$(L-1)'$}} \bigg] \Bigg]
\end{align*}
Now we see something familiar: \circled{1'} to \circled{$(L-1)'$} can all be bounded in the same way, e.g., 
\begin{align*}
    \circled{1'} 
    \leq & K_\phi^{(L-1)} \sum_{i'' \in {\gN(i')}^\rho, j'' \in {\gN(j')}^\rho} T_{i'',j''}^{(L-2, {\gN(i')}^\rho {\gN(j')}^\rho)} \Bigg[ K_\phi^{(L-2)} K_\phi^{(L-3)} \cdots K_\phi^{(1)} \|x_{i'} - x_{j'} \|  + 
    \\
    & K_\phi^{(L-2)}\sum_{i''' \in {\gN(i'')}^\rho, j''' \in {\gN(j'')}^\rho} T_{i''',j'''}^{(L-3, {\gN(i'')}^\rho {\gN(j'')}^\rho)} \left[ \|z_{i''}^{L-3} - z_{j''}^{L-3} \|  \right]
    \\
    & K_\phi^{(L-2)} K_\phi^{(L-3)} \sum_{i''' \in {\gN(i'')}^\rho, j''' \in {\gN(j'')}^\rho} T_{i''',j'''}^{(L-3, {\gN(i'')}^\rho {\gN(j'')}^\rho)} \left[ \|z_{i''}^{L-4} - z_{j''}^{L-4} \| \right]
    \\
    &\cdots
    \\
    & K_\phi^{(L-2)} K_\phi^{(L-3)} \cdots K_\phi^{(1)} \sum_{i''' \in {\gN(i'')}^\rho, j''' \in {\gN(j'')}^\rho} T_{i''',j'''}^{(L-3, {\gN(i'')}^\rho {\gN(j'')}^\rho)} \left[ \|x_{i'''} - x_{j'''} \| \right] \Bigg]
    := bound(\circled{1'})
\end{align*}
Similarly, plugging the bound gives
\begin{align*}
    &= \sum_{i \in V_a^\rho, j \in V_b^\rho} T_{i,j}^{(L, V_a V_b)}  \Bigg[ 
    \left( \prod_{m=1}^L K_\phi^{(m)} \right)  \|x^i - x^j \|  
    \\
    &\hspace{20pt}+  K_\phi^{(l)} \sum_{i' \in {\gN(i)}^\rho, j' \in {\gN(j)}^\rho} T_{i',j'}^{(L-1, {\gN(i)}^\rho {\gN(j)}^\rho)} \bigg[ L \cdot K_\phi^{(L-1)} K_\phi^{(L-2)} \cdots K_\phi^{(1)} \|x_{i'} - x_{j'} \|  
    \\
    &\hspace{20pt}+  K_\phi^{(l-2)} \sum_{i'' \in {\gN(i')}^\rho, j'' \in {\gN(j')}^\rho} T_{i'',j''}^{(L-2, {\gN(i')}^\rho {\gN(j')}^\rho)} \bigg[ (1 + 2 + \cdots +(L-1)) \cdot K_\phi^{(L-2)} K_\phi^{(L-3)} \cdots K_\phi^{(1)} \|x_{i'} - x_{j'} \|  
    \\
    &\hspace{40pt}+1 \cdot K_\phi^{(l-2)} \sum_{i''' \in {\gN(i'')}^\rho, j''' \in {\gN(j'')}^\rho} T_{i''',j'''}^{(L-3, {\gN(i'')}^\rho {\gN(j'')}^\rho)} \left[ \|z_{i''}^{L-3} - z_{j''}^{L-3} \|  \right]
    \\
    & \hspace{40pt}+(1 + 2) \cdot  K_\phi^{(l-2)} K_\phi^{(l-3)} \sum_{i''' \in {\gN(i'')}^\rho, j''' \in {\gN(j'')}^\rho} T_{i''',j'''}^{(L-3, {\gN(i'')}^\rho {\gN(j'')}^\rho)} \left[ \|z_{i''}^{L-4} - z_{j''}^{L-4} \| \right]
    \\
    &\hspace{40pt}\cdots
    \\
    & \hspace{40pt}+(1 + 2 + \cdots (l-2)) \cdot K_\phi^{(l-2)} K_\phi^{(l-3)} \cdots K_\phi^{(1)} \sum_{i''' \in {\gN(i'')}^\rho, j''' \in {\gN(j'')}^\rho} T_{i''',j'''}^{(L-3, {\gN(i'')}^\rho {\gN(j'')}^\rho)} \left[ \|x_{i'''} - x_{j'''} \| \right] \Bigg] \Bigg]
\end{align*}

We can see that the weight of the center nodes gradually changes as: $L \rightarrow 1+2+\cdots (L-1) \rightarrow 1 + (1+2) + \cdot (1+2+\cdots+(L-2)) \rightarrow \cdots$. These are exactly the elements at level $L+1$ of the Pascal's triangle. See Figure \ref{fig_pascal} for an illustration. For instance, if $L=4$, the $L+1$ level of Pascal's triangle is $(1, 4, 6, 4, 1) = (1, 1 + (1+2), (1 + 2 + 3), 4, 1)$, which matches the weight of center nodes in each level.

Let $P_{L}^l$ is the $l$-th number at level $L$ of Pascal's triangle. Applying the bound recursively and extracting $\prod_{m=1}^L K_\phi^{(m)}$ gives the following bound:
\begin{align*}
    &\Delta_{T^{(L, V_a^\rho V_b^\rho)}}^{(L)}
    \\
    \leq& \left( \prod_{m=1}^L K_\phi^{(m)} \right) \sum_{i \in V_a^\rho, j \in V_b^\rho} T_{i,j}^{(L, V_a V_b)} \Bigg[ \|x^i - x^j \| + \frac{L}{1} \cdot \sum_{i' \in {\gN(i)}^\rho, j' \in {\gN(j)}^\rho} T_{i',j'}^{(L-1, {\gN(i)}^\rho {\gN(j)}^\rho)} \bigg[ \|x_{i'} - x_{j'} \| 
    \\
    &+ \frac{1 + 2 + \cdots + (L-1)}{L} \cdot \sum_{i'' \in {\gN(i')}^\rho, j'' \in {\gN(j')}^\rho} T_{i'',j''}^{(L-2, {\gN(i')}^\rho {\gN(j')}^\rho)}  \|x_{i'} - x_{j'} \| + \cdots \bigg] \Bigg]
    \\
    &= \left( \prod_{m=1}^L K_\phi^{(m)} \right) \sum_{i \in V_a^\rho, j \in V_b^\rho} T_{i,j}^{(L, V_a V_b)} \Bigg[ \|x^i - x^j \| + \frac{P_{L+1}^L}{P_{L+1}^{L+1}} \cdot \sum_{i' \in {\gN(i)}^\rho, j' \in {\gN(j)}^\rho} T_{i',j'}^{(L-1, {\gN(i)}^\rho {\gN(j)}^\rho)} \bigg[ \|x_{i'} - x_{j'} \| 
    \\
    &+ \frac{P_{L+1}^{L-1}}{P_{L+1}^{L}} \cdot \sum_{i'' \in {\gN(i')}^\rho, j'' \in {\gN(j')}^\rho} T_{i'',j''}^{(L-2, {\gN(i')}^\rho {\gN(j')}^\rho)}  \|x_{i'} - x_{j'} \| + \cdots \bigg] \Bigg]
\end{align*}
By setting each transportation plan to the optimal plan acquired from the TMD OT problem, 
the bound is equivalent the tree mover distance with depth-$L+1$ computation tree, and $w(l) = P_{L+1}^{l-1} / P_{L+1}^{l}$. The one with $\epsilon \neq 1$ can be trivially extended from the current proof.
\end{proof}

\subsection{Tree Mover's Distance Stability Proof}

\subsubsection{Node Drop}
\begin{proof}
We use a illustration to provide the conceptual idea of the proof. 

\begin{figure*}[h]
%\begin{figure*}[t]
\begin{center}   
\includegraphics[width=0.99\linewidth]{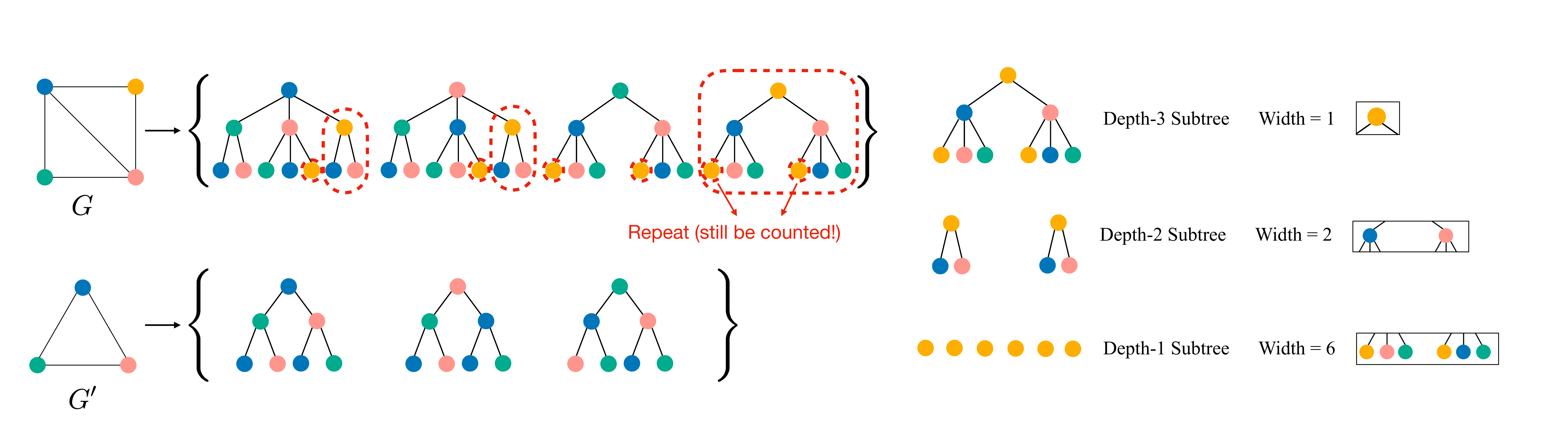}
\end{center}
\vspace{-2mm}
\caption{\textbf{Illustration of Node Drop.}} 
\vspace{-3mm}
\label{fig_nd}
\end{figure*}

Given a graph $G = (V, E)$, let $G'$ be the graph where node $v \in V$ is dropped. The computation tree of node $v$ will play an important role here. Firstly, the nodes in level $l$ of the computation tree are the nodes that can reach $v$ within $l-1$ steps. In particular, the number of nodes in level $l$ determines how many depth-$(L-l+1)$ computation tree of $v$ exists in $\gT_G$. Therefore, deleting $v$ from the graph will also delete all the subtrees in $\gT_G$ that are rooted at $v$, and the number of deleted trees are determined by the width of $T_v$. See Figure \ref{fig_nd} for a illustration. Deleting a subtree $T$ will introduce an additional transportation cost $\TD_w(T, T_\mymathbb{0})$, which is the tree norm of subtree. By aggregating all the tree norms and consider the effect of weights $w$, we arrive at
\begin{align*}
    \TMD_w^L(G, G')\; \leq\; \sum_{l=1}^L \lambda_l \cdot \underbrace{\textnormal{Width}_l(T_v^L)}_{\textnormal{Tree Size}} \cdot \underbrace{\TD_w (T_v^{L-l+1}, T_\mymathbb{0})}_{\textnormal{Tree Norm 
    }} ,
\end{align*}
where $\textnormal{Width}_l(T)$ is the width of $l$-th level of tree $T$ and $\lambda_1 = 1$, $\lambda_l = \prod_{j=1}^{l-1} w(L+1-j)$. This is an upper bound instead of equality as some deleted subtrees are repeated counted in the bound.
\end{proof}

\subsubsection{Edge Drop}
\begin{proof}
We again use a illustration to provide the conceptual idea of the proof. 

\begin{figure*}[h]
%\begin{figure*}[t]
\begin{center}   
\includegraphics[width=0.99\linewidth]{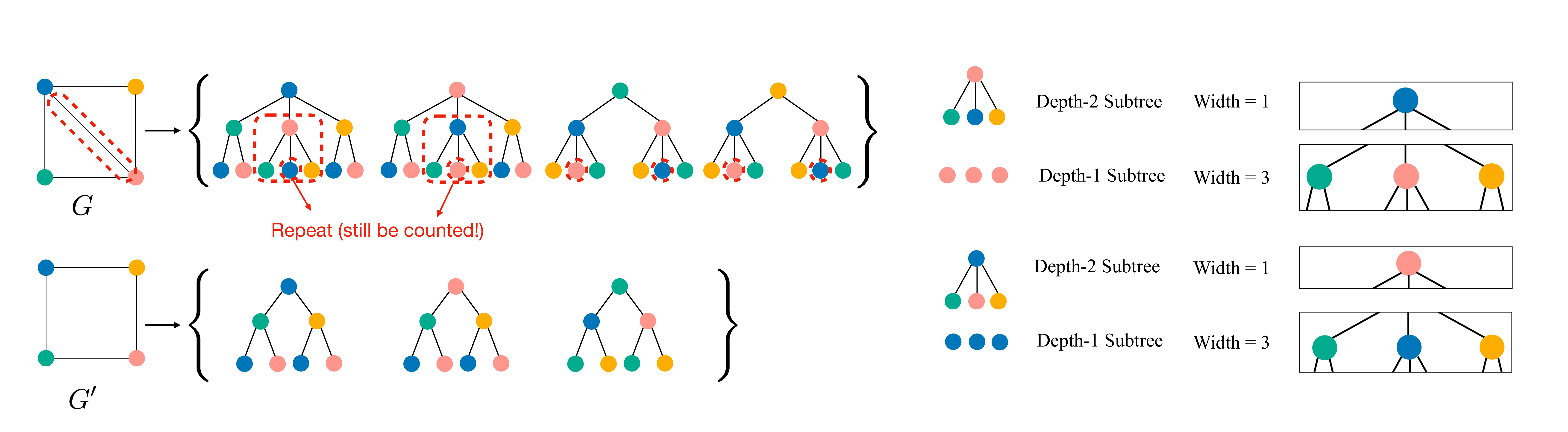}
\end{center}
\vspace{-2mm}
\caption{\textbf{Illustration of Edge Drop.}} 
\vspace{-3mm}
\label{fig_ed}
\end{figure*}

Given a graph $G = (V, E)$, let $G'$ be the graph where edge $(u,v) \in E$ is dropped. Different from node drop, deleting an edge will affect both nodes. In particular, the number of nodes in level $l$ of computation tree $T_v$ determines how many depth-$L-l$ computation tree of $u$ exists in $\gT_G$. Therefore, deleting edge $u-v$ from the graph will also delete all the subtrees in $\gT_G$ that are rooted at $v$ ($u$) where the roots have ancestor $u$ ($v$). See Figure \ref{fig_ed} for a illustration. Similarly, by aggregating all the tree norms and consider the effect of weights $w$, we arrive at
\begin{align*}
    \TMD_w^L(G, G') \leq \sum_{l=1}^{L-1}\lambda_{l+1} \cdot \left( \textnormal{Width}_l(T_v^L) \cdot \TD_w (T_u^{L-l}, T_\mymathbb{0}) + \textnormal{Width}_l(T_u^L) \cdot \TD_w (T_v^{L-l}, T_\mymathbb{0}) \right).
\end{align*}
where $\textnormal{Width}_l(T)$ is the width of $l$-th level of tree $T$ and $\lambda_1 = 1$, $\lambda_l = \prod_{j=1}^{l-1} w(L+1-j)$. This is an upper bound instead of equality as some deleted subtrees are repeated counted in the bound.
\end{proof}

\subsubsection{Node Perturbation}
\begin{proof}
The node perturbation is the simplified case of node drop, where only the node features are perturbed.

\begin{figure*}[h]
%\begin{figure*}[t]
\begin{center}   
\includegraphics[width=0.9\linewidth]{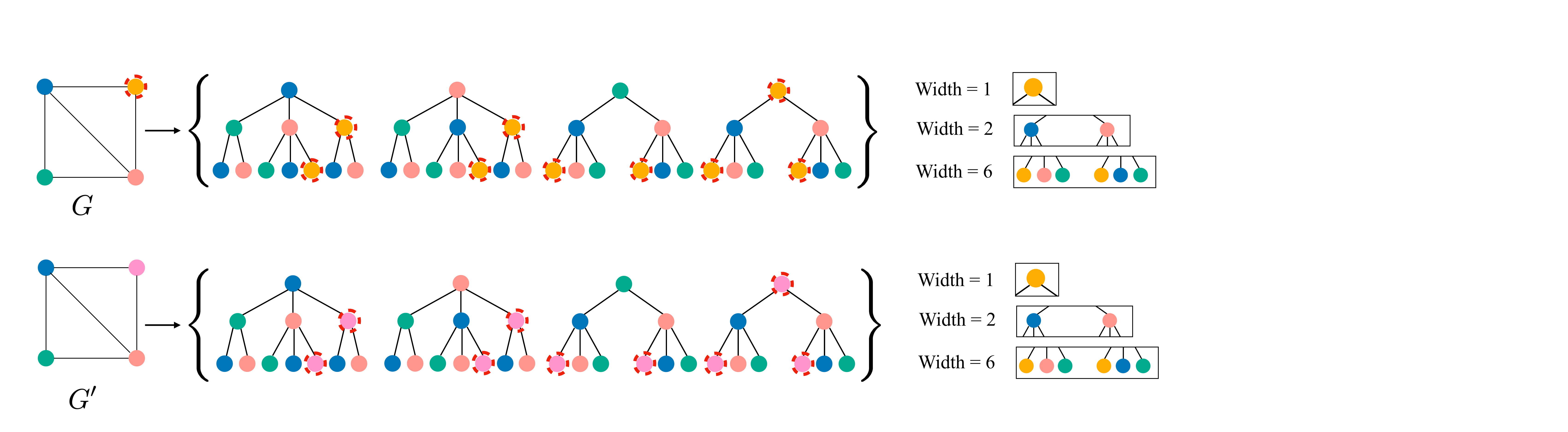}
\end{center}
\vspace{-2mm}
\caption{\textbf{Illustration of Node Perturbation.}} 
\vspace{-3mm}
\label{fig_ed}
\end{figure*}

Given a graph $G = (V, E)$, let $G'$ be the graph where node feature $x_v$ is perturbed to $x'_v$. The tree mover's distance between $G$ and $G'$ is equal to
\vspace{-2mm}
\begin{align*}
    \TMD_w^L(G, G') \leq \sum_{l=1}^L \lambda_l \cdot \textnormal{Width}_l(T_v^L) \cdot \left \| x_v - x'_v \right\|.
\end{align*}
where $\textnormal{Width}_l(T)$ is the width of $l$-th level of tree $T$ and $\lambda_1 = 1$, $\lambda_l = \prod_{j=1}^{l-1} w(L+1-j)$. This is an upper bound instead of equality as changing node features would affect the optimal coupling of OT. We might not have the optimal transportation plan after node perturbation.
\end{proof}

\section{Lipschitz Condition for Other GNNs}
\label{appendix_lip}

\subsection{Graph Convolutional Network}

Here, we consider Graph Convolutional Network (GCN) \citep{kipf2016semi} with the following message passing rules: 
\begin{align*}
    \begin{Large}  \substack{\textnormal{Message} \\ \textnormal{Passing} } \end{Large} \;\; z_v^{(l)} = \phi^{(l)} \left(z_v^{(l-1)} + \epsilon \E_{u \in \gN(v)} z_u^{(l-1)} \right), \;\;\; \begin{Large}  \substack{\textnormal{Graph}\\ \textnormal{Readout}} \end{Large} \;\; h(G) = \phi^{(L+1)} \left( \E_{u \in V} z_u^{(L)} \right).
\end{align*}
In particular, the SUM is replaced with MEAN in the message passing and graph readout. It is easy to show that the same bound holds by replacing all the unnormalized OT with normalized $\OT^\ast$ in tree distance and tree mover's distance, i.e.,
\begin{align*}
 &\TD_{w}^\ast(T_a,T_b) = \begin{cases}
       \|x_{r_{a}} - x_{r_{b}} \| + w(L) \cdot \OT_{\TD_{w}}^\ast(\rho(\gT_{r_a}, \gT_{r_b})) & \text{if $L > 1$}\\
       \|x_{r_{a}} - x_{r_{b}} \| & \text{otherwise},
    \end{cases}
    \\
&\TMD_{w}^{\ast L}(G_a, G_b) = \OT_{\TD_w}^\ast(\rho(\gT_{G_a}^L, \gT_{G_b}^L)).
\end{align*}
This gives the bound:
\begin{align*}
    &\left \| \textnormal{GCN}(G_a) - \textnormal{GCN}(G_b) \right \|  \leq  \prod_{l=1}^{L+1} K_\phi^{(l)}  \cdot  \TMD_{w}^{\ast L+1}(G_a, G_b),
\end{align*}

\subsection{Other Message Passing GNNs}
Sometimes, the center nodes are treated differently compared to the neighbors, e.g., 
\begin{align*}
    \begin{Large}  \substack{\textnormal{Message} \\ \textnormal{Passing} } \end{Large} \;\; z_v^{(l)} = \phi^{(l)} \left(z_v^{(l-1)} + \varphi^{(l)} \left(\sum_{u \in \gN(v)} z_u^{(l-1)} \right) \right), \;\;\; \begin{Large}  \substack{\textnormal{Graph}\\ \textnormal{Readout}} \end{Large} \;\; h(G) = \phi^{(L+1)} \left( \sum_{u \in V} z_u^{(L)} \right)
\end{align*}
The proof can be easily modified as follows:
\begin{align*}
    &\left\| \phi^{(l)} \left( z_i^{(l-1)} +  \varphi^{(l)}\left(\sum_{i' \in \gN(i)}z_i^{(l-1)} \right) \right) - \phi^{(l)} \left(z_j^{(l-1)} + \varphi^{(l)} \left(\sum_{j' \in \gN(i)}z_j^{l-1)} \right) \right)  \right\|
    \\
    \leq& K_\phi^{(l) }\left\|  \left( z_i^{(l-1)} +  \varphi^{(l)}\left(\sum_{i' \in \gN(i)}z_i^{(l-1)} \right) \right) - \left(z_j^{(l-1)} + \varphi^{(l)} \left(\sum_{j' \in \gN(i)}z_j^{(l-1)} \right) \right)  \right\|
    \\
    \leq& K_\phi^{(l) } \left( \left\|   z_i^{(l-1)} - z_j^{(l-1)}  \right \| + \left \| \varphi^{(l)}\left(\sum_{i' \in \gN(i)}z_i^{(l-1)} \right)  -  \varphi^{(l)} \left(\sum_{j' \in \gN(i)}z_j^{(l-1)} \right)  \right\| \right)
    \\
    \leq& K_\phi^{(l) } \left( \left\|   z_i^{(l-1)} - z_j^{(l-1)}  \right \| + K_\varphi^{(l) } \left \| \sum_{i' \in \gN(i)}z_i^{(l-1)}  -   \sum_{j' \in \gN(i)}z_j^{(l-1)}  \right\| \right)
\end{align*}
Therefore, simply replacing the $\epsilon$ with $K_\varphi^{(l)}$ leads to a Lipschitz bound for this variant: 
\begin{align*}
    &\left \| h(G_a) - h(G_b) \right \|  \leq  \prod_{l=1}^{L+1} K_\phi^{(l)}  \cdot  \TMD_{w}^{L+1}(G_a, G_b),
\end{align*}
where $w(l) = K_\varphi^{(l-1)} \cdot P_{L+1}^{l-1} / P_{L+1}^{l}$ for all $l \leq L$ and $P_{L}^l$ is the $l$-th number at level $L$ of Pascal's triangle.

\section{Additional Experiments}
\label{appendix_exp}

\subsection{Graph Clustering}

One advantage of graph metric over GNNs is that we can perform geometric analysis of graph datasets such as graph clustering. We adopt k-medoids \citep{kaufman1990partitioning}, a variant of k-means \citep{lloyd1982least}, to perform unsupervised clustering with TMD. Figure \ref{fig_clustering} provides an qualitative example of clustering, and Table \ref{table_clustering} measures the quallity of clusters with Normalized Mutual Information (NMI) and Completeness Score (CS) \citep{bianchi2020spectral}. We can see that k-medoids with tree mover's distance generates meaningful clusters that aligns with labels.

\begin{figure*}[ht]
\begin{center}   
\includegraphics[width=\linewidth]{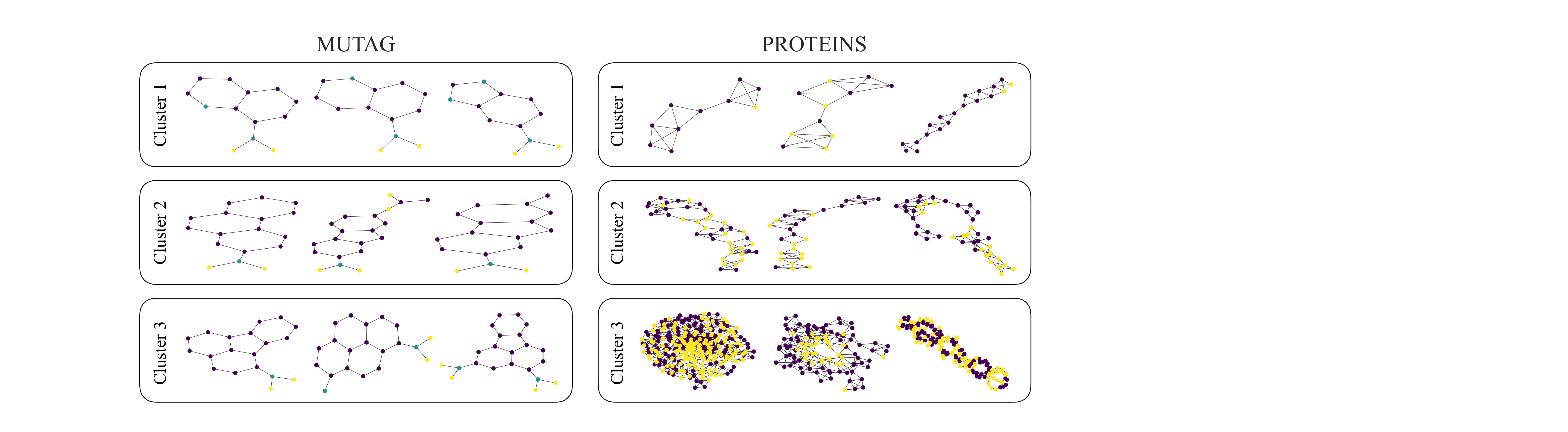}
\end{center}
\vspace{-2mm}
\caption{\textbf{Unsupervised Clustering with TMD.} Node attributes are indicated by colors.} 
\label{fig_clustering}
\end{figure*}

\begin{table*}[!t]
\small
\begin{center}{%
\begin{tabularx}{0.9\textwidth}{l| *{6}{c} }
\toprule
 & \multicolumn{2}{c}{MUTAG (K=2)} & \multicolumn{2}{c}{PROTEINS (K=2)} & \multicolumn{2}{c}{ENZYMES (K=6)}  \\
 & NMI & CS & NMI & CS & NMI & CS 
 \\
\midrule
TMD L=1 & 25.6$\pm$8.1 & 25.0$\pm$7.9 & 6.58$\pm$0.56 & 7.26$\pm$0.61 &  6.55$\pm$0.81 & 6.73$\pm$0.74
\\
TMD L=2 & \textbf{30.4$\pm$8.9} & \textbf{29.7$\pm$8.9} & 7.70$\pm$0.90 & 8.24$\pm$0.69 & \textbf{6.70$\pm$1.11} & \textbf{6.90$\pm$1.01}
\\
TMD L=3 & 28.9$\pm$7.3 & 28.0$\pm$7.2 & 8.28$\pm$0.67 & 8.76$\pm$0.87 & 6.53$\pm$0.65 & 6.69$\pm$0.64
\\
TMD L=4 & 26.6$\pm$5.4 & 25.8$\pm$5.4 & \textbf{9.22$\pm$0.01} & \textbf{9.91$\pm$0.78} & 6.34$\pm$0.60 & 6.57$\pm$0.55
\\
\bottomrule
\end{tabularx}}
\end{center}
\caption{\textbf{Unsupervised Clustering on TU Dataset.} The number of clusters K is equal to
the number of graph classes. The performance is measured by Normalized Mutual
Information (NMI) and Completeness Score (CS).
}
\label{table_clustering}
%\vspace{-3mm}
\end{table*}

\subsection{t-SNE Visualization of Graphs}
Equipped with TMD, we can extend t-SNE \citep{van2008visualizing} from Euclidean space to graphs. Specifically, t-SNE constructs a probability distribution based on \emph{pairwise distance} and minimizes the divergence between the distribution of low dimensional and original data. We simply replace the Euclidean distance in conventional t-SNE with tree mover's distance and show the t-SNE visualization of various graph datasets in Figure \ref{fig_tsne} . Although not perfectly, we can observe the separation between points with different labels.

\begin{figure*}[h]
\begin{center}   
\includegraphics[width=\linewidth]{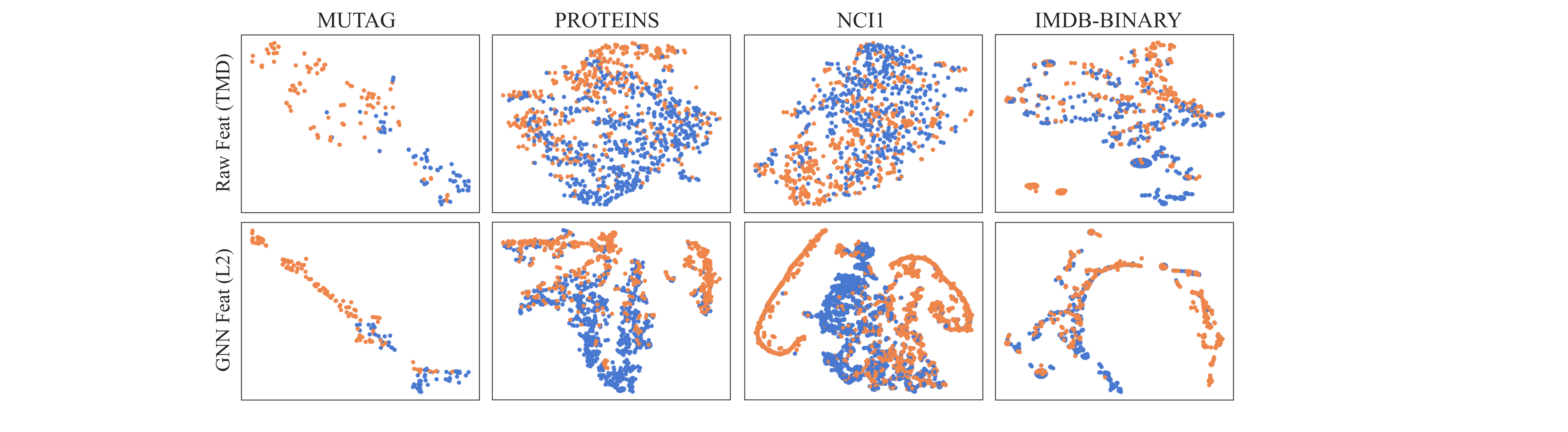}
\end{center}
\caption{\textbf{t-SNE Visualization of Graph Datasets.} The (binary) classes are indicated by colors. The upper row shows the t-SNE visualization of input space with TMD and the bottome rows shows the representation space of trained GNNs with Euclidean distance.} 
\label{fig_tsne}
\end{figure*}

\subsection{Additional Results for Section \ref{sec_stability}}

\begin{figure*}[h]
\begin{center}   
\includegraphics[width=\linewidth]{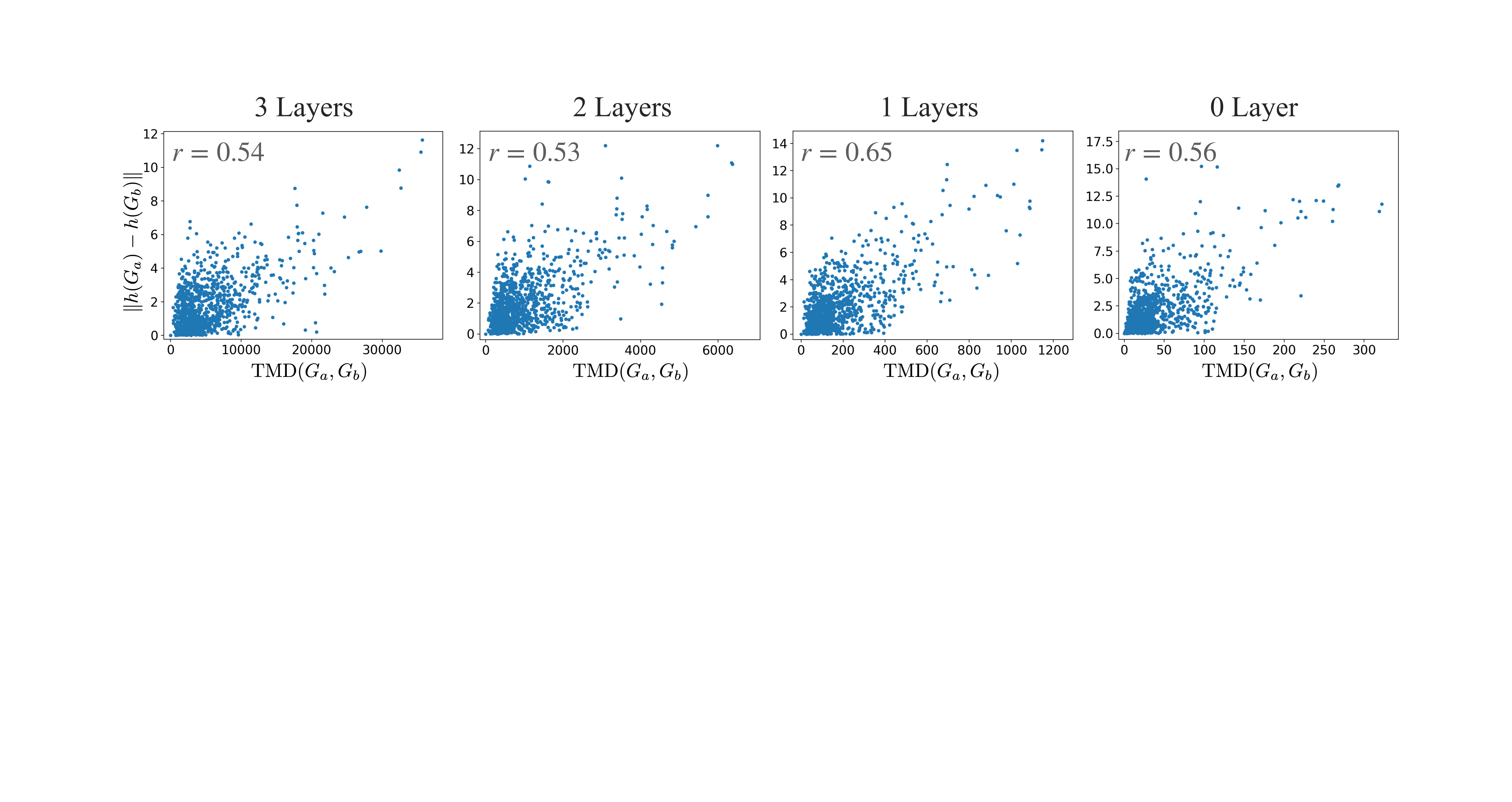}
\end{center}
\caption{\textbf{Correlation between GNNs and TMD on PROTEINS.} The Pearson correlation coefficient $r$ between $\| h(G_a) - h(G_b)\|$ and TMD / WWL are showed on the upper left of the figures.} 
\label{fig_lip_2}
\end{figure*}

We repeat the experiments in section \ref{sec_stability} with PROTEINS dataset and show the results in Figure \ref{fig_lip_2} and \ref{fig_perturb_2}. We can see that the GNNs are less sensitive to small graph perturbation as Figure \ref{fig_perturb_2} shows, as the number of nodes and edges is much larger than the one in MUTAG.

\begin{figure*}[th]
\begin{center}   
\includegraphics[width=\linewidth]{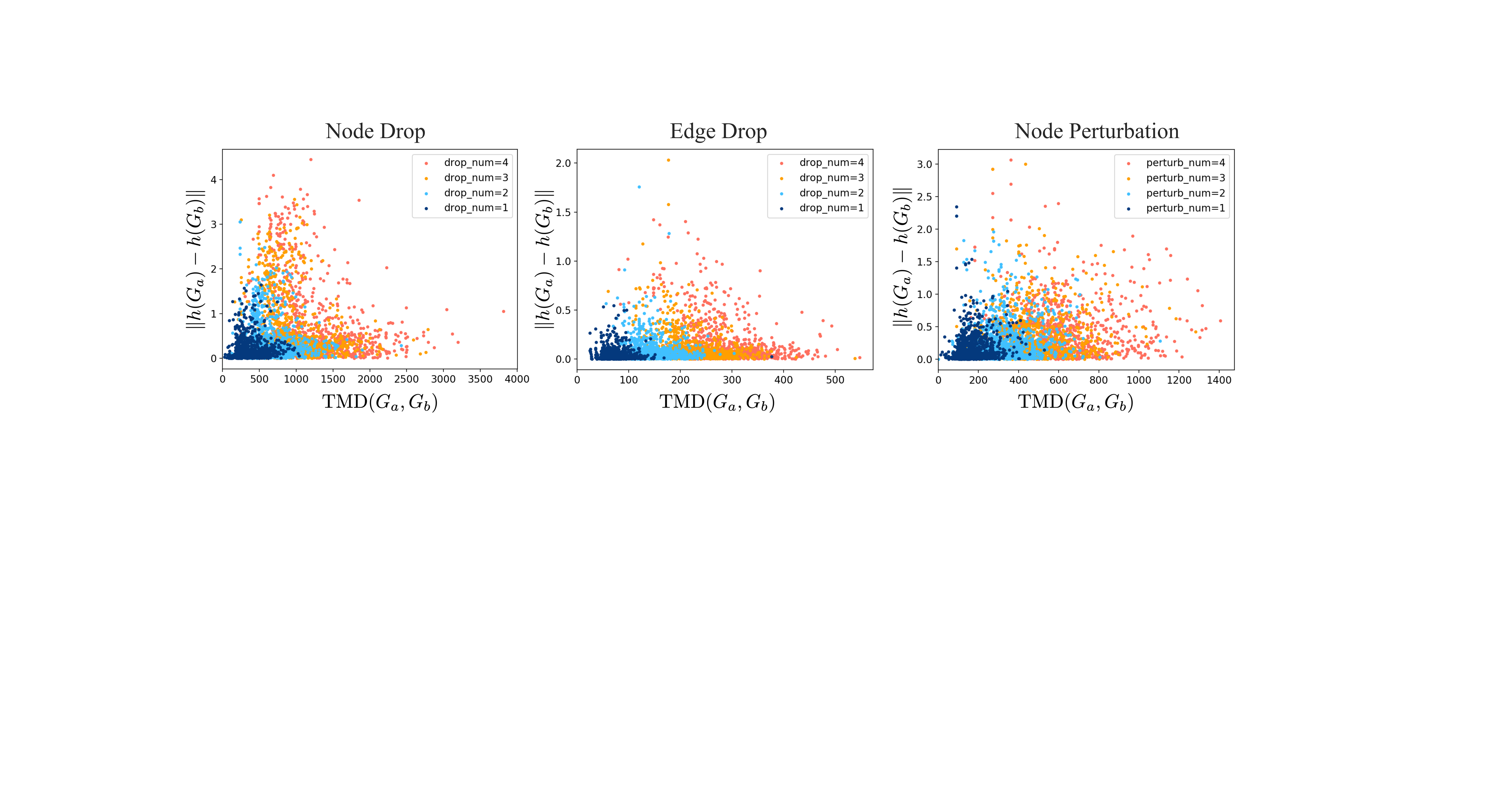}
\end{center}
\caption{\textbf{Robustness under Graph Perturbation on PROTEINS.} } 
\label{fig_perturb_2}
\end{figure*}

\end{document}